\documentclass{article}
\PassOptionsToPackage{numbers}{natbib}

\usepackage[preprint]{neurips_2026}

\usepackage[utf8]{inputenc} 
\usepackage[T1]{fontenc}    
\usepackage{hyperref}       
\usepackage{url}            
\usepackage{booktabs}       
\usepackage{amsfonts}       
\usepackage{nicefrac}       
\usepackage{microtype}      
\usepackage{xcolor}         
\usepackage{amsmath,amssymb,amsthm}
\usepackage{graphicx}
\usepackage[capitalise,nameinlink]{cleveref}
\usepackage{listings} 
\usepackage{algorithm}
\usepackage{placeins}
\usepackage[noend]{algpseudocode}
\definecolor{codegreen}{rgb}{0,0.6,0}
\definecolor{codegray}{rgb}{0.5,0.5,0.5}
\definecolor{codepurple}{rgb}{0.58,0,0.82}
\definecolor{backcolour}{rgb}{0.95,0.95,0.92}
\lstdefinestyle{mystyle}{
    language=Python,
    backgroundcolor=\color{backcolour},   
    commentstyle=\color{codegreen},
    keywordstyle=\color{magenta},
    numberstyle=\tiny\color{codegray},
    stringstyle=\color{codepurple},
    basicstyle=\ttfamily\footnotesize,
    breakatwhitespace=false,         
    breaklines=true,                 
    captionpos=b,                    
    keepspaces=true,                 
    numbers=left,                    
    numbersep=5pt,                  
    showspaces=false,                
    showstringspaces=false,
    showtabs=false,                  
    tabsize=2
}
\newtheorem{theorem}{Theorem}[section]

\newtheorem{proposition}[theorem]{Proposition}
\newtheorem{conjecture}[theorem]{Conjecture}
\newtheorem{corollary}[theorem]{Corollary}
\theoremstyle{definition}

\newtheorem{example}[theorem]{Example}
\theoremstyle{remark}
\newtheorem{remark}[theorem]{Remark}
\title{Double-Scoring: Reliable Extraction of Strong Lottery Tickets}

\author{%
  Bryce A. Christopherson \\
  Department of Mathematics \& Statistics\\
  University of North Dakota\\
  Grand Forks, ND 58202-8376 \\
  \texttt{bryce.christopherson@UND.edu} \\
  \And
  Jack Baretz \\
  Department of Mathematics \& Statistics\\
  University of North Dakota\\
  Grand Forks, ND 58202-8376 \\
  \texttt{jack.baretz@UND.edu} \\
  \AND
  Darian Colgrove \\
  Department of Mathematics \& Statistics\\
  University of North Dakota\\
  Grand Forks, ND 58202-8376 \\
  \texttt{darian.colgrove@UND.edu} \\
  \And
  Salah Dandan \\
  Department of Mathematics \& Statistics\\
  University of North Dakota\\
  Grand Forks, ND 58202-8376 \\
  \texttt{salah.dandan@UND.edu} \\
}

\begin{document}

\maketitle

\begin{abstract}
  The lottery ticket hypothesis proposes that large random neural networks contain sparse subnetworks that can match the performance of dense models after comparable training. A stronger version asserts that sufficiently overparameterized random networks contain subnetworks that are already accurate before any weight training. Existing theory establishes that such strong lottery tickets exist, but reliable extraction remains difficult. We revisit \texttt{edge-popup}, a frozen-weight score-training method for extracting strong tickets, and identify layerwise sparsity selection as a central bottleneck. We introduce \texttt{double-scoring}, an augmented score-space parameterization that replaces a layerwise sparsity search with optimization over enlarged score tensors. We prove that fixed-density masking in an augmented score space preserves access to all original-coordinate masks, and we show that the resulting method can be interpreted as \texttt{edge-popup} on a zero-augmented network. In controlled experiments, \texttt{double-scoring} substantially improves strong-ticket extraction over fixed-density \texttt{edge-popup} and pruning-at-initialization baselines, improves on the performance of rewound sparse-training topologies, and exhibits markedly lower sensitivity to sparsity hyperparameters. Ablations show that the gain is not merely due to additional trainable score parameters, but is tied to the augmented score-space competition that induces the effective original sparsity.
\end{abstract}

\section{Introduction}

    In 2019, Frankle and Carbin introduced the lottery ticket hypothesis \cite{lotterytickethypothesis}, which posits that dense, randomly initialized neural networks contain sparse subnetworks (``winning tickets'') that, when trained in isolation, can match the performance of the original network. Shortly thereafter, Ramanujan et al.~\cite{ramanujan} proposed a stronger variant: sufficiently overparameterized random networks should contain subnetworks that already approximate a target network \emph{without any training}. These subnetworks are referred to as \emph{strong lottery tickets}, whereas the subnetworks proposed in \cite{lotterytickethypothesis} are referred to as \textit{weak lottery tickets}.
    
    This strong form was subsequently rigorously established in several works, precisely quantifying the necessary overparameterization. Malach et al.~\cite{pruningisallyouneed} proved existence under polynomial overparameterization, and later results \cite{orseau2020logarithmic,pensia2020optimal} significantly improved these bounds, showing that under appropriate weight distributions, only logarithmic growth in width is required. These results collectively demonstrate that the existence of strong lottery tickets is no longer in doubt.  Thus, the central difficulty is not existence, but \emph{extraction}. That is, given a randomly initialized network known to contain a strong lottery ticket, how can one efficiently identify it?
    
    A variety of pruning and masking strategies have been proposed for this purpose.  Some methods, including SNIP~\cite{lee2018snip}, GraSP~\cite{wang2020picking}, and SynFlow~\cite{tanaka2020pruning}, identify sparse subnetworks at initialization using saliency criteria.  These methods are computationally attractive because they produce masks before full training. Other sparse training methods such as Iterative Magnitude Pruning (IMP) use repeated train-prune-rewind cycles to identify sparse subnetworks that could train effectively from an early or original initialization \cite{pmlr-v119-frankle20a}.  Likewise, SET~\cite{mocanu2018scalable}, RigL~\cite{evci2020rigging}, and movement pruning~\cite{sanh2020movement} learn sparse topologies while training weights.  These methods are not native strong-ticket extraction algorithms, since their masks are discovered during weight optimization.  Nevertheless, they provide useful comparison points: after learning a topology, one can rewind the weights to the original initialization and test whether the topology itself defines a strong ticket. 

    In this work, we revisit one of the earliest methods for strong ticket extraction: the \texttt{edge-popup} algorithm of Ramanujan et al.~\cite{ramanujan}, which freezes random weights and trains auxiliary scores whose top-ranked entries determine the active mask.  We argue that the primary obstacle in \texttt{edge-popup} is not its optimization dynamics, but rather its \emph{parameterization of sparsity}. The algorithm requires choosing layerwise sparsity levels in advance, yet existing theory provides no principled way to choose them.

    We show that this difficulty can be removed entirely. First, we prove that for sufficiently sparse layers, the full class of masked subnetworks can be represented using a fixed mask density of $\frac{1}{2}$. We then extend this idea to dense networks by doubling the number of trainable score parameters---that is, by enlarging the \emph{score space} rather than the weight space. This leads to a simple modification of \texttt{edge-popup}, which we call \texttt{double-scoring}, that eliminates the need to tune layerwise sparsity parameters while preserving expressive power. Empirically, this modification allows us to reliably extract strong lottery tickets using a procedure with essentially the same computational cost as standard training. Moreover, the resulting subnetworks appear to be competitive with other weak lottery ticket extraction methods after training as well.

\paragraph{Contributions.}
        \begin{enumerate}
            \item We identify layerwise sparsity selection as a practical bottleneck in \texttt{edge-popup}-style strong ticket extraction.
            \item We introduce \texttt{double-scoring}, an augmented score-space parameterization that removes the need to tune a separate sparsity level for each layer.
            \item We prove that augmented score-space masking preserves representational access to all original-coordinate masks and is equivalent to \texttt{edge-popup} on a zero-augmented network under straight-through gradients.
            \item In controlled FashionMNIST MLP experiments, \texttt{double-scoring} substantially improves strong-ticket extraction relative to fixed-density \texttt{edge-popup}, random masks, SNIP, and GraSP, and remains competitive with rewound sparse-training topologies.
            \item We provide stability and ablation studies showing that the gain is tied to augmented score-space competition rather than simply to adding trainable score parameters.
        \end{enumerate}

\section{Background and problem setup}
\label{background and problem setup section}

    We consider feed-forward neural networks with $\ell$ layers, widths $(n_0,\dots,n_\ell)$, and activation functions $(\sigma_1,\dots,\sigma_{\ell-1})$. Such a network is a function $f_W:\mathbb{R}^{n_0}\to\mathbb{R}^{n_\ell}$ of the form
    $$
    f_W(x)=W_\ell \sigma_{\ell-1}\big(\cdots \sigma_1(W_1 x + b_1)\cdots\big) + b_\ell,
    $$
    where $W_i \in \mathbb{R}^{n_i \times n_{i-1}}$ and $b_i \in \mathbb{R}^{n_i}$ for each $i$.
    
    The lottery ticket hypothesis \cite{lotterytickethypothesis} asserts that, for any $\delta \in (0,1)$, there exists $N$ such that, with probability at least $\delta$, a randomly initialized feed-forward network with $\min \{n_i : 1 \leq i \leq \ell-1\} \geq N$ contains a mask $H$ such that the subnetwork $f_{W \odot H}$, where $H$ remains fixed, can achieve accuracy comparable to that of the original network after each undergoes comparable weight training.  The strong lottery ticket hypothesis \cite{ramanujan} is effectively the same, but dispenses with the subsequent training requirement and instead asserts the mask $H$ satisfies $\|f_{W \odot H} - g\|_{K,\infty} < \epsilon$ for any chosen continuous function $g$ on a compact subset $K$ of the domain.  Although a strong ticket is already performant at initialization, whether such a subnetwork also remains trainable as a weak ticket is an empirical question we examine in \Cref{sparse training and weak tickets subsection} and again in \Cref{extra capacity section}.

    In the same paper in which Ramanujan et al.~initially posed the strong lottery ticket hypothesis, they also introduced \texttt{edge-popup}, a frozen-weight score-training method.  For a layer with fixed weights \(W_t\), score tensor \(S_t\), and density \(k_t\), define \(H_t=\operatorname{TopKMask}(S_t;k_t)\) to retain the \(\lfloor k_t d_t\rfloor\) entries of largest \(|S_t|\), where \(d_t\) is the number of weights.  The forward pass uses \(W_t\odot H_t\), while the backward pass uses a straight-through estimator for the hard mask; gradients through the magnitude parameterization use the derivative of \(|S_t|\) away from zero.  Thus, \texttt{edge-popup} optimizes masks while keeping weights fixed.  
    
    The difficulty in using \texttt{edge-popup} to extract strong lottery tickets from a network is that the algorithm assumes the correct choice of layerwise densities $k_1,\ldots,k_\ell$ are already known or somehow otherwise obtainable. This is the bottleneck that prevents \texttt{edge-popup} from extracting strong lottery tickets.
    
\section{The sparsity selection bottleneck}
\label{the sparsity selection bottleneck section}
    
    Existence theorems for strong lottery tickets imply only that some good choice of layerwise densities $k_1,\ldots,k_\ell$ exists (i.e. the densities of the masks for a strong ticket itself); they do not identify that choice. In practice, different layerwise density patterns can produce dramatically different performance, and there is no robust rule that predicts the correct tuple in advance.  Worse still, the correct choices provided by existence theorems are only viable for score initializations sufficiently near the desired target.  In many cases, it appears unlikely for \textit{any} choice of densities to produce a strong lottery ticket with \texttt{edge-popup} from an unfortunate score initialization, since density selection and score initialization interact nontrivially, creating sensitivities not only to global sparsity but to the full layerwise density vector and the score initialization.  Since the correct vector is unknown, extracting a true strong ticket generally requires an expensive grid search over $k$-vectors and often some degree of luck.  This sensitivity is illustrated in a toy sine-regression experiment in Appendix~\ref{proofs appendix}, where exhaustive sweeps over layerwise densities produce highly nonuniform loss landscapes.

    One can make this bottleneck explicit even at the level of counting. The $i$th weight matrix has $n_{i-1}n_i$ entries, and hence there are only $n_{i-1}n_i+1$ possible density values of the form \(0,\frac{1}{n_{i-1}n_i},\frac{2}{n_{i-1}n_i},\dots,\frac{n_{i-1}n_i-1}{n_{i-1}n_i},1\).  Thus, if one wished to determine with certainty which layerwise density tuple is optimal, then, in principle, one would have to consider all combinations of these values across layers.  Without bias terms, the total number of possible layerwise density tuples is $\prod_{i=1}^{\ell}(n_{i-1}n_i+1)$.  Some of these tuples are obviously redundant—for instance, if $k_i=0$, then the later values $k_{i+1},\dots,k_\ell$ become irrelevant—but the search space still grows extremely rapidly.  For example, even a three-layer network with weight matrices of shapes $10\times 20$, $20\times 30$, and $30\times 5$ (only \(950\) weights in total) already has over \(18\) million possible layerwise density tuples.

\section{Double-scoring method}
\label{double-scoring method section}

    The \texttt{double-scoring} method (\Cref{double-scoring algo}) is a modification to \texttt{edge-popup} that functions by enlarging the \emph{score space} of the model, allowing a dense layer to be treated as though it were embedded inside a half-sparse augmented layer, without ever modifying the underlying weights (the motivation for doing this is described along with the summary of our theoretical results in \Cref{theory summary section}). The idea is to introduce two auxiliary score tensors $S_t$, $T_t$ of the same shape as each weight tensor $W_t$, and to apply the masking operation to the concatenation of the two score tensors $\widehat{S}_t=(S_t,T_t)$ at fixed density $\frac{1}{2}$ to produce a double-width mask $\widehat{H}_t=\textrm{TopKMask}(\hat{S}_t; 0.5)$, then yield the final mask by restricting the result back to the original coordinates; i.e. $H_t=\widehat{H}_t|_{\textrm{orig}}$.  Thus, one obtains the expressive flexibility of a half-sparse augmented system without introducing any additional weight parameters.

    \begin{algorithm}[ht]
        \caption{\texttt{double-scoring}}
        \label{double-scoring algo}
        \begin{algorithmic}[1]\small
            \Require Frozen randomly initialized weights $\{W_t,b_t\}_{t=1}^{L}$, training data, loss function $\mathcal{L}$, learning rate $\eta$
            \For{$t=1,\dots,L$}
                \State Initialize two score tensors $S_t,T_t$ and bias score tensors $f_t,g_t$ with the same shapes as $W_t$ and $b_t$
            \EndFor
            \For{each training iteration}
                \For{$t=1,\dots,L$}
                    \State $\widehat{S}_t \gets (S_t,T_t),\enskip \widehat{f}_t \gets (f_t,g_t)$
                    \State $\widehat{H}_t \gets \operatorname{TopKMask}(\widehat{S}_t;1/2), \enskip \widehat{h}_t \gets \operatorname{TopKMask}(\widehat{f}_t;1/2)$ 
                    \State $H_t \gets \widehat{H}_t|_{\mathrm{orig}}, \enskip h_t \gets \widehat{h}_t|_{\mathrm{orig}}$
                    \State $\widetilde{W}_t \gets W_t \odot H_t, \enskip \widetilde{b}_t \gets b_t \odot h_t$
                \EndFor
                \State Compute the network output using $\{(\widetilde{W}_t,\widetilde{b}_t)\}_{t=1}^{L}$,  Compute loss $\mathcal{L}$
                \For{$t=1,\dots,L$}
                    \State Update $S_t,T_t,f_t,g_t$ by gradient descent with straight-through gradients
                \EndFor
            \EndFor
            \State \Return Final masks $\{H_t,h_t\}_{t=1}^{L}$
        \end{algorithmic}
    \end{algorithm}
    \begin{remark}
        Although the auxiliary scores $T_t$ and $g_t$ participate in the top-\(k\) competition with $S_t$, their gradients vanish because the auxiliary coordinates are multiplied by zero weights when interpreted as being embedded in a large sparse model. Thus, the auxiliary scores act as a competitive reservoir; in Section~\ref{ablations subsection}, freezing them yields nearly identical performance.  Likewise, the initialization distribution for the score tensors in \Cref{double-scoring algo} can potentially be varied (further explored with preliminary experiments in \Cref{initialization learning}), though we use a standard normal distribution in experiments.  The weight and bias tensors are initialized from a signed Kaiming distribution, in line with the original \texttt{edge-popup} algorithm.
    \end{remark}
    This construction fixes the density globally at $\frac{1}{2}$ while still allowing the effective density on the original weights to be learned implicitly. As we will show in \Cref{theory summary section}, the resulting search space is expressive enough to represent all subnetworks obtainable via explicit layerwise density selection. In this sense, the combinatorial search over density parameters is replaced by a continuous optimization over an enlarged score space, thereby eliminating the need to choose the brittle layerwise density hyperparameter during optimization without any detriment to the masked model's expressive power. We emphasize:  these results are representational; they remove the expressive obstruction created by fixing density, but they do not by themselves guarantee that straight-through score optimization will find the desired mask.
    \subsection{Computational complexity}
    
        The proposed \texttt{double-scoring} algorithm increases the number of score parameters by a constant factor (specifically, a factor of two per layer), but does not alter the size of the weight tensors or the structure of the forward and backward passes through the network. In particular, the dominant computational cost remains the evaluation of the network and its gradients with respect to the masked weights.  The additional overhead arises from computing $\operatorname{TopKMask}$ on the augmented score tensors and updating the auxiliary scores. Both operations scale linearly (or log-linearly, depending on the implementation of the top-$k$ operation) in the number of parameters, and therefore introduce only a constant-factor increase in runtime.
        
        Consequently, the overall asymptotic time complexity of the \texttt{double-scoring} procedure matches that of classical \texttt{edge-popup} and standard network training while eliminating the need for repeated runs over different sparsity configurations. In practice, this leads to substantial computational savings relative to approaches that rely on exhaustive or heuristic search over layerwise density parameters, or methods that involve partial training and rewinding.

\section{Theory summary}
\label{theory summary section}

    The mechanism of \texttt{double-scoring} relies on an observation that is elementary but useful. If a weight is already equal to zero, then changing the corresponding mask entry has no effect on the represented subnetwork. Consequently, on a sufficiently sparse layer, one may alter mask entries on zero-weight locations without changing the resulting masked tensor. On a sufficiently sparse tensor, the apparent freedom in the density parameter in \texttt{edge-popup} is largely illusory: one may trade a search over densities for a suitable choice of scored coordinates.

    \begin{proposition}\label{prop:half-density}
        Let $W \in \mathbb{R}^{m\times n}$ and suppose at least half of the entries of $W$ are zero. Let $d=mn$ and let $r=\lfloor \frac{d}{2} \rfloor$. Then, for every binary mask $M \in \{0,1\}^{m\times n}$, there exists a binary mask $M^\ast \in \{0,1\}^{m\times n}$ with exactly $r$ ones such that
        \[
        W \odot M = W \odot M^\ast.
        \]
    \end{proposition}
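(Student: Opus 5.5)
The plan is to adjust $M$ only on zero-weight coordinates, where changes are invisible in $W\odot M$. Partition the index set $[m]\times[n]$ into $Z=\{(i,j): W_{ij}=0\}$ and its complement $N=\{(i,j): W_{ij}\neq 0\}$. By hypothesis $|Z|\ge d/2\ge r$, hence $|N|=d-|Z|\le d/2$, and since $|N|$ is an integer, $|N|\le \lfloor d/2\rfloor = r$. The masked tensor $W\odot M$ is determined entirely by the restriction $M|_N$, since on $Z$ every product vanishes regardless of the mask value. So it suffices to construct $M^\ast$ with $M^\ast|_N=M|_N$ and exactly $r$ ones in total.

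Let $a=\#\{(i,j)\in N: M_{ij}=1\}$, the number of ones $M$ places on nonzero weights. Then $0\le a\le |N|\le r$. Define $M^\ast$ to agree with $M$ on $N$, and on $Z$ put ones on exactly $r-a$ coordinates of $Z$ (chosen arbitrarily) and zeros elsewhere. This choice is possible because $0\le r-a\le r\le |Z|$. By construction $M^\ast$ has $a+(r-a)=r$ ones. Entrywise, on $N$ we have $W_{ij}M^\ast_{ij}=W_{ij}M_{ij}$, and on $Z$ both products are $0$. Hence $W\odot M=W\odot M^\ast$.

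The only step needing care is the integer bookkeeping that guarantees both inequalities $a\le r$ and $r-a\le |Z|$. Both follow from $|Z|\ge d/2$: the floor handles odd $d$ because $|N|\le d/2$ and integrality give $|N|\le r$, while $|Z|\ge d/2\ge r$ is immediate. No other obstacle is expected.
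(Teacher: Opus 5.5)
Your proposal is correct and follows essentially the same route as the paper: split the indices into the zero and nonzero sets, use integrality to get $|N|\le \lfloor d/2\rfloor = r$, copy $M$ on the nonzero support, and pad with $r-a$ ones on zero-weight coordinates, which leaves $W\odot M$ unchanged. Your bookkeeping of the inequalities $0\le r-a\le r\le |Z|$ is, if anything, slightly more explicit than the paper's.
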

    \begin{corollary}\label{cor:network-half-density}
        Let $N$ be a feed-forward network, and suppose each layer of $N$ contains at least half zero weights. Then every masked subnetwork of $N$ can be represented by choosing, in each layer $t$, a mask with exactly $\lfloor \frac{d_t}{2} \rfloor$ ones, where $d_t$ is the number of weights in layer $t$.
    \end{corollary}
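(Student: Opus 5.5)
The statement immediately above is \Cref{cor:network-half-density}, and the plan is to derive it from \Cref{prop:half-density} applied layer by layer. The argument rests on one observation: a masked subnetwork of $N$ is determined entirely by the tuple of masked weight tensors $(W_t \odot M_t)_{t}$, together with the masked biases if those are included. So it suffices to replace each layer's mask by one of the prescribed cardinality without changing any masked tensor.

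First we fix an arbitrary masked subnetwork of $N$, given by masks $M_1,\dots,M_\ell$ with $M_t \in \{0,1\}^{n_t \times n_{t-1}}$. By hypothesis, each $W_t$ has at least half of its $d_t = n_t n_{t-1}$ entries equal to zero. \Cref{prop:half-density} therefore applies to each $W_t$ with $r_t = \lfloor d_t/2 \rfloor$. It produces a mask $M_t^\ast$ with exactly $r_t$ ones satisfying $W_t \odot M_t = W_t \odot M_t^\ast$.

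Second, because the layers are masked independently, the choices of $M_t^\ast$ do not interact across layers. The forward map is
\[
f(x) = \widetilde W_\ell \,\sigma_{\ell-1}\bigl(\cdots \sigma_1(\widetilde W_1 x + b_1)\cdots\bigr) + b_\ell ,
\]
and it depends on the masks only through $\widetilde W_t = W_t \odot M_t$. Substituting $M_t^\ast$ therefore leaves every $\widetilde W_t$, and hence the network function, unchanged. This is a trivial induction on depth, or simply equality of all parameters.

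The only point needing care is what "masked subnetwork" covers when biases are also masked. If the corollary is read with bias masks as well, the same argument applies provided the bias vectors also satisfy the half-zero hypothesis. The cleanest course is to state that "weights in layer $t$" refers to the weight tensor and that biases are either unmasked or handled identically. I expect no real obstacle beyond fixing this convention: all of the combinatorial content lives in \Cref{prop:half-density}, whose own proof starts from the support of $W \odot M$, which has at most the number of nonzeros of $W$ and hence at most $r$ entries, and pads it with zero-weight positions up to exactly $r$ ones.
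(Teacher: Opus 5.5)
Your proposal is correct and follows the paper's proof, which simply applies \Cref{prop:half-density} independently to each layer. Your additional remarks are sound elaborations of that one-line argument: the network function depends on the masks only through the masked tensors, and biases are either left unmasked or must themselves satisfy the half-zero hypothesis.
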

    Applied layerwise, \Cref{cor:network-half-density} says that once each layer is at least half sparse, the entire search space of masked subnetworks can be represented using a single fixed density near $\frac12$ in every layer. This does \emph{not} mean that \texttt{edge-popup} at $k=\frac12$ will automatically converge to the best possible mask; optimization issues such as local minima may still remain, just as they do in ordinary training. What it does show, however, is that the principal \emph{expressive} obstruction disappears: in the half-sparse regime, fixing the density at $\frac{1}{2}$ is already rich enough to represent every masked subnetwork.
    
    If half-sparse layers admit a fixed-density search, then one can attempt to embed a dense layer into a larger layer whose additional entries are permanently zero. Running \texttt{edge-popup} at fixed density $1/2$ on this enlarged layer would then be expressive enough by \Cref{cor:network-half-density}.  Naively enlarging the weight tensors themselves would be wasteful. However, the additional zero entries do not require weights---they only require \emph{scores}. This leads to a simple but crucial observation: one can enlarge the \emph{score space} without enlarging the weight space.
    \begin{proposition}\label{prop:augmented-space}
        Let $w \in \mathbb{R}^d$ and define the augmented vector $\widehat{w}=(w,0)\in \mathbb{R}^{2d}$ by adjoining $d$ zero coordinates. Then for every binary mask $m \in \{0,1\}^d$ there exists a binary mask $\widehat{m} \in \{0,1\}^{2d}$ with exactly $d$ ones such that the restriction of $\widehat{w}\odot \widehat{m}$ to the first $d$ coordinates equals $w\odot m$.
    \end{proposition}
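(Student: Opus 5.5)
The plan is a direct construction of $\widehat{m}$; \Cref{prop:half-density} is not needed here. Fix $m\in\{0,1\}^d$ and let $p=|m|$ be its number of ones, so $0\le p\le d$. I would take $\widehat{m}=(m,u)$, where $u\in\{0,1\}^d$ is any vector with exactly $d-p$ ones, for instance $u_j=1$ for $j\le d-p$ and $u_j=0$ otherwise. Such a $u$ exists because $0\le d-p\le d$.

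The proof then has two checks. First, $\widehat{m}$ has $p+(d-p)=d$ ones, as required. Second, $\widehat{w}\odot\widehat{m}=(w\odot m,\,0\odot u)=(w\odot m,0)$, because the product is coordinatewise and the augmented weights are zero. Its restriction to the first $d$ coordinates is therefore exactly $w\odot m$. This completes the argument.

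There is no real obstacle here; the statement is essentially bookkeeping. The one point worth noting is that the restriction to the original coordinates reproduces $m$ itself, not merely some mask with the same masked output. So the result holds even without invoking redundancy among zero weights of $w$. One could alternatively derive it from \Cref{prop:half-density} applied to $\widehat{w}$, which has at least half zero entries and even length $2d$, so $r=d$. However, that route only yields equality of the masked vectors, which still suffices since it gives $w\odot m$ on the first $d$ coordinates. The direct construction is cleaner, and I would present it, perhaps remarking on the alternative.
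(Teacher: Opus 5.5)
Your proof is correct and follows the paper's own construction: keep $m$ on the original coordinates, place $d-|m|$ ones among the $d$ auxiliary coordinates, and observe that these extra ones multiply zero entries of $\widehat{w}$. Hence $\widehat{m}$ has exactly $d$ ones and the restriction of $\widehat{w}\odot\widehat{m}$ to the first $d$ coordinates is $w\odot m$.
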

    That is, \texttt{double-scoring} is not merely representationally equivalent to \texttt{edge-popup} on an augmented half-sparse system; the optimization dynamics agree exactly as well (if this is not obvious, \cref{proofs appendix} contains formal verification of this fact).  These results show that \texttt{double-scoring} does not merely add parameters; it changes the parameterization so that fixed-density score optimization can represent every effective sparsity pattern.
\section{Experiments}
\label{experiments section}
    In this section, we verify the performance of \texttt{double-scoring} experimentally.  Note that the theoretical construction fixes the augmented score density at $\frac{1}{2}$, which shows that no layerwise density tuple is needed for representability. In experiments, we also consider a targetable augmented variant in which a single global augmented-space density is supplied to probe high-sparsity regimes and an iterated variant which aggregates the half-density masks over multiple runs. We focus on the one-shot augmented variant. The iterated variant can reach different sparsity regimes but incurs a multiplicative runtime cost proportional to the number of rounds.  In both cases, the method removes the need to tune a separate density for each layer; the effective original-coordinate sparsity is induced by competition with auxiliary score coordinates and is reported explicitly. Throughout, it should be kept in mind that \texttt{double-scoring} should be interpreted as an effective-sparsity method: the requested augmented-space density determines a competition rule, while the original-coordinate sparsity is induced by the learned score geometry. Therefore, the relevant quantities are both accuracy and achieved sparsity.

    Unless otherwise stated, experiments use a bias-free ReLU MLP with three width-256 hidden layers on FashionMNIST.  For each of three seeds, all methods share the same random initialization: \(5000\) extraction/training examples, \(5000\) validation examples, and the full test set.  Strong-ticket accuracy is measured by extracting a mask, rewinding weights to the shared initialization and evaluating the masked network without further weight training.  Sparse-training methods are included as rewound-topology baselines.  As an additional architecture sanity check, \Cref{app:cifar-convnet-matched} repeats the strong-ticket comparison on CIFAR-10 using a VGG-style ConvNet without BatchNorm and with baselines matched to the achieved sparsity of DoubleScore-Augmented.  
    \subsection{Strong ticket extraction}
    \label{strong ticket extraction subsection}

        We first evaluate whether each method can extract a useful subnetwork while the weights remain frozen at their random initialization.  For each seed, we initialize a common linear MLP, allow each method to produce a binary mask, then rewind the weights to the shared initialization before evaluating the masked network.  This ensures that the reported accuracy measures strong-ticket extraction rather than sparse training performance.  
        
        \begin{table}[h]
            \centering
            \caption{
                High-sparsity strong-ticket extraction results.
                Entries report test accuracy of the extracted untrained subnetwork, with achieved sparsity shown separately.
                Sparse-training methods are evaluated as rewound-topology baselines: their masks are learned during sparse training but evaluated after rewinding weights to the shared initialization.
            }
            \begin{tabular}{lrrrr}
                \toprule
                Method & 90\% acc. & 90\% achieved & 95\% acc. & 95\% achieved \\
                \midrule
                Random & 8.92 $\pm$ 0.93 & 90.0 & 12.20 $\pm$ 2.22 & 95.0 \\
                SNIP & 9.53 $\pm$ 0.80 & 90.0 & 10.06 $\pm$ 0.11 & 95.0 \\
                GraSP & 11.29 $\pm$ 1.31 & 90.0 & 13.28 $\pm$ 5.53 & 95.0 \\
                EdgePopup & 77.73 $\pm$ 0.38 & 90.0 & 60.61 $\pm$ 0.97 & 95.0 \\
                IMP & 64.20 $\pm$ 6.01 & 90.0 & 62.11 $\pm$ 4.14 & 95.0 \\
                SET & 33.97 $\pm$ 6.64 & 90.0 & 27.40 $\pm$ 9.53 & 95.0 \\
                RigL & 17.69 $\pm$ 10.02 & 90.0 & 16.32 $\pm$ 8.11 & 95.0 \\
                Movement & 76.34 $\pm$ 3.24 & 90.0 & 71.88 $\pm$ 7.08 & 95.0 \\
                DoubleScore-Aug & 82.74 $\pm$ 0.04 & 81.3 & 78.03 $\pm$ 0.70 & 90.0 \\
                DoubleScore-Iter & 82.09 $\pm$ 0.40 & 86.6 & 77.06 $\pm$ 0.76 & 90.7 \\
                \bottomrule
            \end{tabular}
            \label{tab:strong ticket extraction table}
        \end{table}
        
        \Cref{tab:strong ticket extraction table} summarizes the high-sparsity regime on FashionMNIST.  These comparisons are not intended to claim same-sparsity dominance in every row.  The double-score variants induce an effective sparsity on the original coordinates, which can differ from the nominal requested sparsity.  We therefore report achieved sparsity throughout.  In \Cref{app:cifar-convnet-matched}, we include a matched-sparsity CIFAR-10 ConvNet sanity check in which baselines are run at the achieved sparsity induced by DoubleScore-Augmented.  Fixed-density \texttt{edge-popup} is a strong direct baseline, achieving \(77.7\%\) accuracy at \(90.0\%\) achieved sparsity, but its performance drops sharply at the more extreme \(95\%\) requested sparsity, where it obtains \(60.6\%\).  In contrast, DoubleScore-Augmented obtains \(82.7\%\) accuracy in the \(90\%\) requested-sparsity setting and \(78.0\%\) in the \(95\%\) setting, achieving sparsities of \(81.3\%\) and \(90\%\) respectively.  DoubleScore-Iterated is similarly strong, achieving \(82.1\%\) and \(77.1\%\) accuracy, respectively, at achieved sparsities of \(86.6\%\) and \(90.7\%\).
        
        The comparison to pruning-at-initialization methods is especially stark.  SNIP and GraSP remain near chance in this benchmark, indicating that simple saliency-at-initialization criteria do not identify high-performing strong tickets in this setting.  Rewound sparse-training baselines provide a stronger comparison: movement pruning, in particular, discovers topologies that remain useful after rewinding.  Nevertheless, the double-score variants outperform these rewound topologies in the high-sparsity regime, while extracting masks with frozen weights.
        
        As in all experiments, we report achieved sparsity in addition to requested sparsity.  The augmented score-space method does not enforce the requested original-coordinate density exactly; rather, the effective sparsity emerges through competition with auxiliary score coordinates.  This is part of the mechanism studied in \Cref{hyperparameter stability subsection} and \Cref{ablations subsection}.  The key conclusion from the baseline comparison is that \texttt{double-scoring} produces high-accuracy strong tickets in regimes where fixed-density \texttt{edge-popup} and standard pruning-at-initialization baselines degrade substantially.  These results suggest that the main obstacle is not the existence of strong tickets, but their extraction: once the sparsity-selection bottleneck is relaxed through augmented score-space optimization, high-performing strong tickets can be found reliably.
    \subsection{Sparse training and weak tickets}
    \label{sparse training and weak tickets subsection}

        As shown in the previous section, \texttt{double-scoring} dominates in the strong-ticket regime.  Here, we show that it also remains competitive after training.  For each method, we first extract a binary mask, rewind the weights to the shared random initialization, and then train only the surviving weights while keeping the mask fixed.  Thus this experiment separates the quality of the extracted topology at initialization from its trainability under standard sparse training.
        
        \begin{figure}[h]
            \centering
            \includegraphics[width=1.0\linewidth]{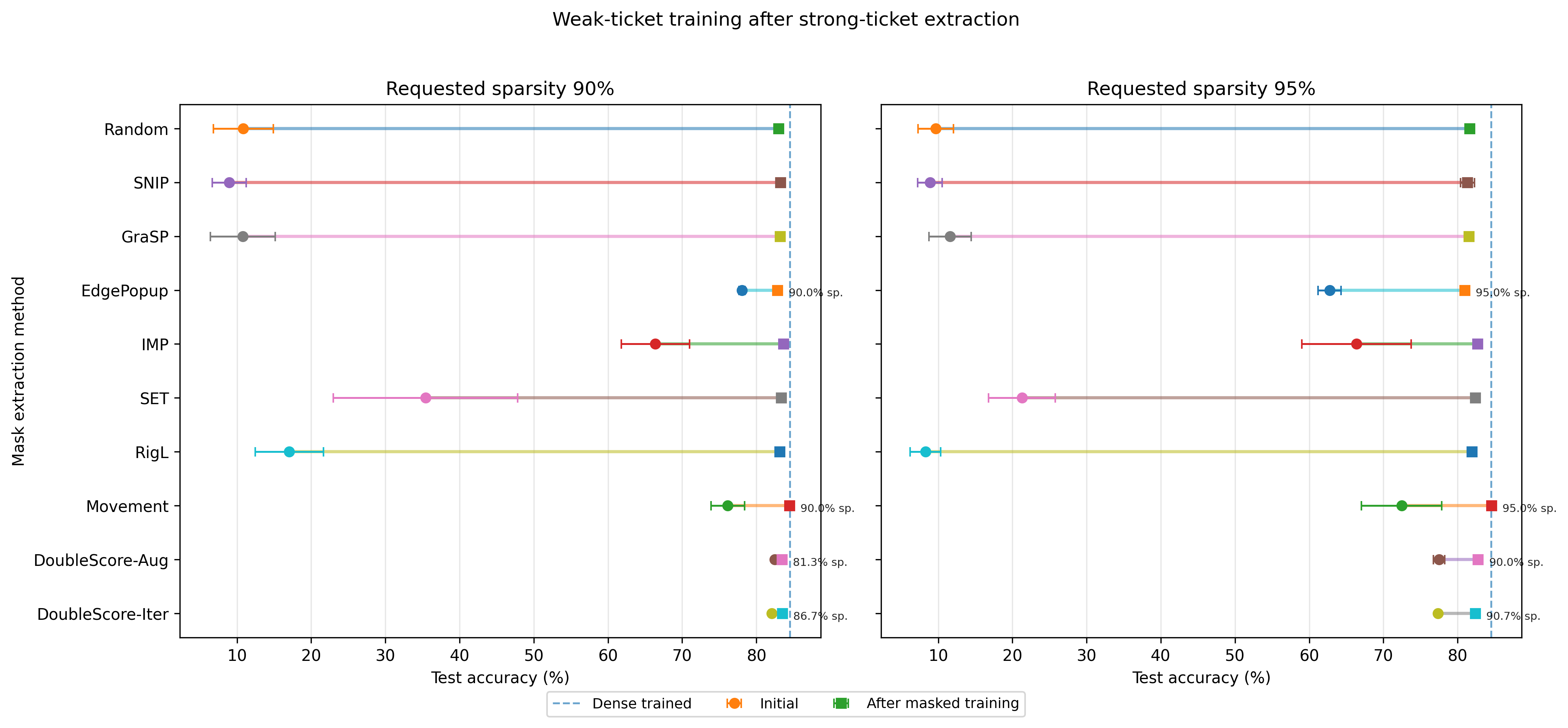}
            \caption{
                Weak-ticket training after strong-ticket extraction.
                Each method first extracts a binary mask, after which weights are rewound to the shared random initialization and trained with the mask fixed.
                Points show test accuracy before and after masked weight training.
                DoubleScore masks begin with high accuracy and remain competitive after training, whereas several baselines only become accurate after weight optimization.
                Sparse-training methods such as Movement are included as rewound-topology baselines.
            }
            \label{fig:sparse training and weak tickets figure}
        \end{figure}
        
        The results indicate that double-score masks are not only strong at initialization, but also remain highly trainable.  At requested \(90\%\) sparsity, DoubleScore-Augmented begins at \(82.5\%\) test accuracy and reaches \(83.4\%\) after masked training, while DoubleScore-Iterated begins at \(82.1\%\) and reaches \(83.5\%\).  In contrast, pruning-at-initialization baselines such as SNIP and GraSP begin near chance but train to above \(83\%\).  Thus these baselines can produce trainable sparse networks, but they do not identify strong tickets.
        
        The comparison with sparse-training methods further clarifies the distinction.  Movement pruning achieves the highest final trained accuracy, reaching \(84.4\%\) at requested \(90\%\) sparsity and \(84.6\%\) at requested \(95\%\) sparsity.  However, movement pruning learns its topology while training weights, whereas double-score methods extract their masks with frozen weights.  The role of \texttt{double-scoring} is therefore not to replace all sparse-training procedures, but to produce subnetworks that are already accurate at initialization and remain competitive after subsequent training.

    \subsection{Hyperparameter stability}
    \label{hyperparameter stability subsection}

        A central motivation for \texttt{double-scoring} is that \texttt{edge-popup} requires the user to specify a sparsity level, and in multilayer networks this choice is effectively a layerwise hyperparameter.  To test whether this tuning problem is merely cosmetic or whether it materially affects extraction quality, we performed a sparsity-sensitivity experiment on the controlled FashionMNIST MLP benchmark.  We fixed the requested sparsity at \(90\%\), trained \texttt{edge-popup} over a collection of scalar keep densities and randomly sampled layerwise keep-density tuples, and evaluated each extracted mask after rewinding the weights to the same random initialization.  We compare this sweep to fixed target-density \texttt{edge-popup}, random masks, and the two double-score variants.  For the \texttt{edge-popup} sweep, the reported range is taken over all tested scalar and layerwise keep-density configurations across seeds.  We also report two validation-oracle \texttt{edge-popup} results, obtained by selecting the \texttt{edge-popup} configuration with the best validation accuracy for each seed both with and without a sparsity constraint.  The constrained oracle is the relevant same-regime comparison. The unconstrained oracle is included only to show that validation tuning over \(k\) can drift to a substantially denser regime.

        \begin{figure}[t]
            \centering
            \includegraphics[width=1.0\linewidth]{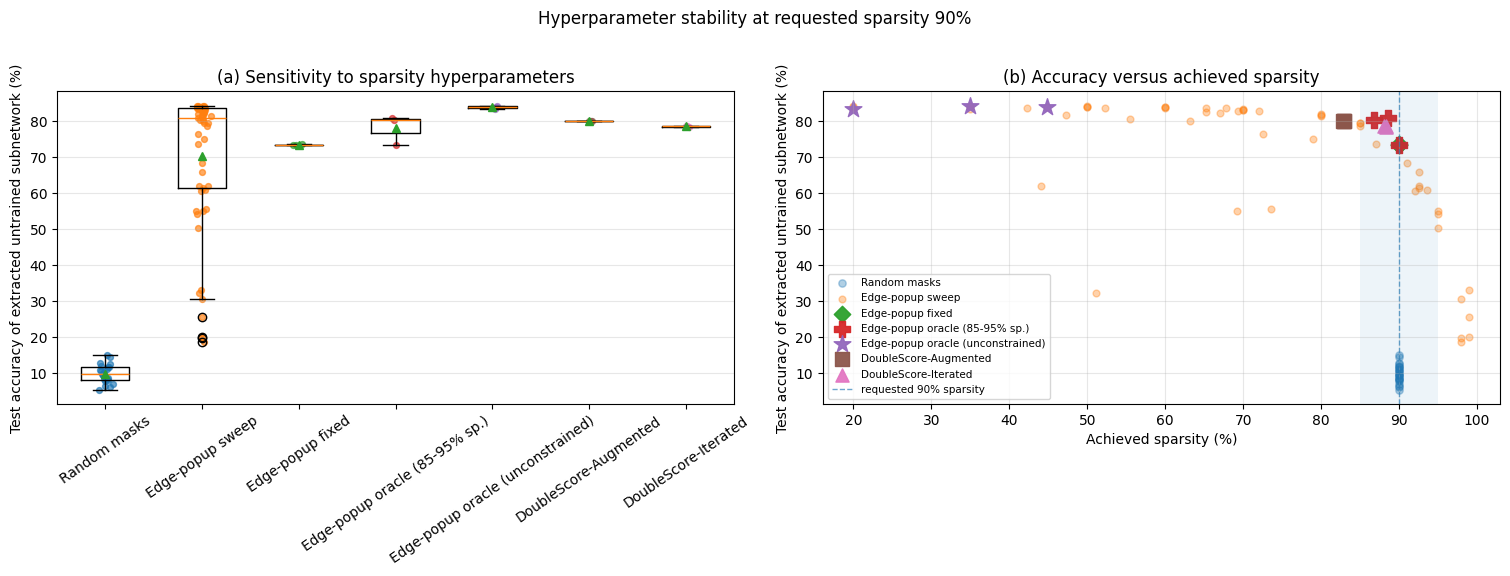}
            \caption{
                Hyperparameter stability at requested \(90\%\) sparsity on FashionMNIST with a linear MLP.
                Left: \texttt{edge-popup} exhibits large variance across scalar and layerwise sparsity choices, while the double-score variants are stable across seeds.
                Right: the constrained oracle provides a same-regime high-sparsity reference, while the unconstrained validation-oracle \texttt{edge-popup} configuration achieves high accuracy only by selecting substantially denser masks, whereas the double-score variants remain in the high-sparsity regime.
                All masks are evaluated after rewinding weights to the shared random initialization.
            }
            \label{fig:stability experiment figure}
        \end{figure}
        
        The results show that \texttt{edge-popup} is highly sensitive to the sparsity choice.  Across the sweep, \texttt{edge-popup} test accuracy ranges from \(18.6\%\) to \(84.4\%\), with a standard deviation of \(19.6\) percentage points.  The fixed target-density \texttt{edge-popup} baseline is stable but substantially lower, achieving \(73.5\%\) accuracy at \(90.0\%\) achieved sparsity.  In contrast, DoubleScore-Augmented achieves \(80.1\%\) accuracy with standard deviation below \(0.2\) percentage points, and DoubleScore-Iterated achieves \(78.6\%\) accuracy with similarly low variance.  Thus the double-score variants provide high-performing strong-ticket extraction without requiring a sparsity sweep.

        To separate hyperparameter sensitivity from the trivial capacity advantage of denser subnetworks, we report two validation-selected \texttt{edge-popup} references. The first is a constrained oracle, which selects the best validation configuration among \texttt{edge-popup} masks whose achieved sparsity lies in the high-sparsity band \(85\%-95\%\). This separates high-sparsity tuning from the capacity advantage of denser subnetworks and is the relevant same-regime reference.  The unconstrained oracle is included only as a diagnostic: it can achieve higher accuracy by selecting substantially denser masks, and therefore is not a same-sparsity comparison.  Thus the advantage of \texttt{double-scoring} is not that it dominates every possible \texttt{edge-popup} configuration, but that it avoids a fragile layerwise sparsity search while remaining in a high-sparsity regime.
        
    \subsection{Ablations}
    \label{ablations subsection}

        The previous experiments show that the augmented score-space parameterization improves extraction and stabilizes sparsity selection. We next isolate which components of the augmented score-space parameterization are responsible for the observed gains.  A natural critique of the \texttt{double-scoring} algorithm is that the gains it produces are the result of doubling the number of trainable parameters.  Via a series of ablations, we will show that the gain is not merely from more trainable score parameters; the critical change is the minimal doubled score-space parameterization.  We compare ordinary single-score \texttt{edge-popup}, signed-score variants without the absolute-value parameterization, augmented score spaces with different auxiliary widths, a frozen-auxiliary variant, and a projected-final variant.  In all cases, weights are frozen during mask extraction and the resulting mask is evaluated after rewinding to the shared random initialization.

        \begin{table}
            \centering
            \caption{
                Ablation study at requested \(90\%\) sparsity.
                All variants are evaluated as strong-ticket extractors after rewinding weights to the shared initialization.
                Score parameters are reported as trainable/total multiples relative to the original weight count.
            }
            \begin{tabular}{lrrrr}
                \toprule
                Variant & Test acc. (\%) & Achieved       & Score             & Interpretation \\
                        &                & sparsity (\%)  & params            &                \\
                        &                &                & (train/total) &                \\
                \midrule
                Random & 8.93 $\pm$ 4.39 & 90.0 $\pm$ 0.0 & 0.0$\times$/0.0$\times$ & chance baseline \\
                SingleScore-Abs & 73.88 $\pm$ 0.31 & 90.0 $\pm$ 0.0 & 1.0$\times$/1.0$\times$ & \texttt{edge-popup} baseline \\
                SingleScore-NoAbs & 72.60 $\pm$ 0.52 & 90.0 $\pm$ 0.0 & 1.0$\times$/1.0$\times$ & signed-score ranking \\
                Aug-x1-Abs & 80.34 $\pm$ 0.42 & 82.9 $\pm$ 0.2 & 2.0$\times$/2.0$\times$ & canonical augmented method \\
                Aug-x1-FrozenAux & 79.87 $\pm$ 0.42 & 82.9 $\pm$ 0.0 & 1.0$\times$/2.0$\times$ & tests trainable auxiliary capacity \\
                Aug-x1-NoAbs & 78.49 $\pm$ 0.20 & 83.3 $\pm$ 0.2 & 2.0$\times$/2.0$\times$ & tests abs-score parameterization \\
                Aug-x1-Projected & 70.16 $\pm$ 6.48 & 90.0 $\pm$ 0.0 & 2.0$\times$/2.0$\times$ & exact final original sparsity \\
                Aug-x2-Abs & 80.62 $\pm$ 0.26 & 80.8 $\pm$ 0.1 & 3.0$\times$/3.0$\times$ & larger auxiliary space \\
                Aug-x4-Abs & 80.93 $\pm$ 0.45 & 79.1 $\pm$ 0.2 & 5.0$\times$/5.0$\times$ & much larger auxiliary space \\
                \bottomrule
            \end{tabular}
            \label{tab:ablations table}
        \end{table}
        
        The ablations show that the augmented score space is the critical ingredient for inducing high-performing effective sparsity patterns.  At requested \(90\%\) sparsity, ordinary single-score \texttt{edge-popup} with \(\mathrm{abs}{S}\)-ranking achieves \(73.9\%\) accuracy at \(90.0\%\) achieved sparsity.  The canonical augmented variant, Aug-x1-Abs, improves to \(80.3\%\) accuracy, while achieving \(82.9\%\) sparsity.  Thus the augmented method substantially improves the extracted strong ticket, but it does so by allowing the effective original-coordinate sparsity to emerge from competition with auxiliary score coordinates rather than by enforcing the requested density exactly.  SingleScore-Abs is the \texttt{edge-popup} baseline using magnitude-based score ranking.  The deviation in the result here from that of \Cref{tab:strong ticket extraction table} highlights the sensitivity of classical \texttt{edge-popup} to score initializations, as demonstrated in \Cref{fig:stability experiment figure}.
        
        The frozen-auxiliary ablation is especially informative.  Freezing the auxiliary scores barely changes performance: Aug-x1-Abs obtains \(80.3\%\) accuracy, while Aug-x1-Abs-FrozenAux obtains \(79.9\%\).  This indicates that the gain is not simply due to adding more trainable score parameters.  Instead, the auxiliary coordinates act as a learned or fixed competitive reservoir that changes the thresholding dynamics in the original coordinates.  The absolute-value score parameterization is also consistently beneficial.  Removing it reduces performance for both single-score and augmented variants.  At requested \(90\%\) sparsity, SingleScore-Abs outperforms SingleScore-NoAbs, and Aug-x1-Abs outperforms Aug-x1-NoAbs.  This supports the use of magnitude-based score ranking in the main method.
        
        Finally, the projected-final variant clarifies the role of effective sparsity.  When the augmented method is trained normally but then projected back to exactly the requested original-coordinate sparsity, performance drops substantially.  This suggests that the advantage of \texttt{double-scoring} is not simply that it learns better original-coordinate saliency scores for a fixed target density.  Rather, the augmented score space improves extraction by jointly shaping the mask and its effective sparsity.  Consequently, all main comparisons report achieved sparsity alongside accuracy.
    
\section{Limitations}
\label{limitations section}

    Several limitations remain.  First, the theoretical results are representational and do not guarantee that straight-through score optimization will find globally optimal masks.  Second, \texttt{double-scoring} induces an effective original-coordinate sparsity rather than enforcing an exact sparsity budget.  This is useful when the goal is high-performing sparse extraction, but applications requiring exact sparsity may require calibration of the nominal augmented density or a projection procedure, and our simple projected-final ablation shows that naive projection can substantially reduce accuracy.  Third, the main experiments are controlled FashionMNIST MLP benchmarks.  We include a no-BatchNorm CIFAR-10 ConvNet sanity check, but large-scale architectures, transformers, and modern high-resolution benchmarks remain open due to compute constraints.  Finally, the iterated variant incurs substantially higher runtime than the one-shot augmented variant, so we treat it as an exploratory extension rather than the main practical method.
\section{Conclusion}
\label{conclusion section}

    We introduced \texttt{double-scoring}, an augmented score-space parameterization for \texttt{edge-popup}-style strong lottery ticket extraction. The method replaces a brittle layerwise sparsity search with competition in an enlarged score space, while leaving the weight tensors unchanged. Theoretical results show that the augmented parameterization preserves representational access to original-coordinate masks, and controlled experiments show improved strong-ticket extraction, strong hyperparameter stability, and robust trainability after mask extraction. These results suggest that the practical barrier to strong lottery ticket extraction is not merely the existence of suitable subnetworks, but the parameterization used to search for them.
    
\begin{ack}

    Use unnumbered first level headings for the acknowledgments. All acknowledgments
    go at the end of the paper before the list of references. Moreover, you are required to declare funding (financial activities supporting the submitted work) and competing interests (related financial activities outside the submitted work).
    More information about this disclosure can be found at: \url{https://neurips.cc/Conferences/2026/PaperInformation/FundingDisclosure}.
\end{ack}

\bibliographystyle{plainnat}
\bibliography{references}

@article{lotterytickethypothesis,
  doi = {10.48550/ARXIV.1803.03635},
  
  url = {https://arxiv.org/abs/1803.03635},
  
  author = {Frankle, Jonathan and Carbin, Michael},
  
  title = {The Lottery Ticket Hypothesis: Finding Sparse, Trainable Neural Networks},
  
  publisher = {arXiv},
  
  year = {2018},
  
  copyright = {arXiv.org perpetual, non-exclusive license}
}

@misc{ramanujan,
  doi = {10.48550/ARXIV.1911.13299},
  
  url = {https://arxiv.org/abs/1911.13299},
  
  author = {Ramanujan, Vivek and Wortsman, Mitchell and Kembhavi, Aniruddha and Farhadi, Ali and Rastegari, Mohammad},
  
  title = {What's Hidden in a Randomly Weighted Neural Network?},
  
  publisher = {arXiv},
  
  year = {2019},
  
  copyright = {arXiv.org perpetual, non-exclusive license}
}

@misc{pruningisallyouneed,
  doi = {10.48550/ARXIV.2002.00585},
  
  url = {https://arxiv.org/abs/2002.00585},
  
  author = {Malach, Eran and Yehudai, Gilad and Shalev-Shwartz, Shai and Shamir, Ohad},
  
  title = {Proving the Lottery Ticket Hypothesis: Pruning is All You Need},
  
  publisher = {arXiv},
  
  year = {2020},
  
  copyright = {arXiv.org perpetual, non-exclusive license}
}

@article{orseau2020logarithmic,
  title={Logarithmic pruning is all you need},
  author={Orseau, Laurent and Hutter, Marcus and Rivasplata, Omar},
  journal={Advances in Neural Information Processing Systems},
  volume={33},
  pages={2925--2934},
  year={2020}
}

@article{lee2018snip,
  title={Snip: Single-shot network pruning based on connection sensitivity},
  author={Lee, Namhoon and Ajanthan, Thalaiyasingam and Torr, Philip HS},
  journal={arXiv preprint arXiv:1810.02340},
  year={2018}
}

@article{wang2020picking,
  title={Picking winning tickets before training by preserving gradient flow},
  author={Wang, Chaoqi and Zhang, Guodong and Grosse, Roger},
  journal={arXiv preprint arXiv:2002.07376},
  year={2020}
}

@article{tanaka2020pruning,
  title={Pruning neural networks without any data by iteratively conserving synaptic flow},
  author={Tanaka, Hidenori and Kunin, Daniel and Yamins, Daniel L and Ganguli, Surya},
  journal={Advances in Neural Information Processing Systems},
  volume={33},
  pages={6377--6389},
  year={2020}
}

@article{pensia2020optimal,
  title={Optimal lottery tickets via subset sum: Logarithmic over-parameterization is sufficient},
  author={Pensia, Ankit and Rajput, Shashank and Nagle, Alliot and Vishwakarma, Harit and Papailiopoulos, Dimitris},
  journal={Advances in neural information processing systems},
  volume={33},
  pages={2599--2610},
  year={2020}
}

@InProceedings{pmlr-v119-frankle20a,
  title = 	 {Linear Mode Connectivity and the Lottery Ticket Hypothesis},
  author =       {Frankle, Jonathan and Dziugaite, Gintare Karolina and Roy, Daniel and Carbin, Michael},
  booktitle = 	 {Proceedings of the 37th International Conference on Machine Learning},
  pages = 	 {3259--3269},
  year = 	 {2020},
  editor = 	 {III, Hal Daumé and Singh, Aarti},
  volume = 	 {119},
  series = 	 {Proceedings of Machine Learning Research},
  month = 	 {13--18 Jul},
  publisher =    {PMLR},
  url = 	 {https://proceedings.mlr.press/v119/frankle20a.html}
}

@article{yang2023medmnist,
  title={Medmnist v2-a large-scale lightweight benchmark for 2d and 3d biomedical image classification},
  author={Yang, Jiancheng and Shi, Rui and Wei, Donglai and Liu, Zequan and Zhao, Lin and Ke, Bilian and Pfister, Hanspeter and Ni, Bingbing},
  journal={Scientific Data},
  volume={10},
  number={1},
  pages={41},
  year={2023},
  publisher={Nature Publishing Group UK London}
}

@inproceedings{yang2021medmnist,
  title={Medmnist classification decathlon: A lightweight automl benchmark for medical image analysis},
  author={Yang, Jiancheng and Shi, Rui and Ni, Bingbing},
  booktitle={2021 IEEE 18th International Symposium on Biomedical Imaging (ISBI)},
  pages={191--195},
  year={2021},
  organization={IEEE}
}

@article{masoudnia2014mixture,
  title={Mixture of experts: a literature survey},
  author={Masoudnia, Saeed and Ebrahimpour, Reza},
  journal={Artificial Intelligence Review},
  volume={42},
  pages={275--293},
  year={2014},
  publisher={Springer}
}

@article{yuksel2012twenty,
  title={Twenty years of mixture of experts},
  author={Yuksel, Seniha Esen and Wilson, Joseph N and Gader, Paul D},
  journal={IEEE transactions on neural networks and learning systems},
  volume={23},
  number={8},
  pages={1177--1193},
  year={2012},
  publisher={IEEE}
}

@article{cai2024survey,
  title={A survey on mixture of experts},
  author={Cai, Weilin and Jiang, Juyong and Wang, Fan and Tang, Jing and Kim, Sunghun and Huang, Jiayi},
  journal={arXiv preprint arXiv:2407.06204},
  year={2024}
}

@article{nguyen2018practical,
  title={Practical and theoretical aspects of mixture-of-experts modeling: An overview},
  author={Nguyen, Hien D and Chamroukhi, Faicel},
  journal={Wiley Interdisciplinary Reviews: Data Mining and Knowledge Discovery},
  volume={8},
  number={4},
  pages={e1246},
  year={2018},
  publisher={Wiley Online Library}
}

@incollection{gormley2019mixture,
  title={Mixture of experts models},
  author={Gormley, Isobel Claire and Fr{\"u}hwirth-Schnatter, Sylvia},
  booktitle={Handbook of mixture analysis},
  pages={271--307},
  year={2019},
  publisher={Chapman and Hall/CRC}
}

@article{chen2022towards,
  title={Towards understanding the mixture-of-experts layer in deep learning},
  author={Chen, Zixiang and Deng, Yihe and Wu, Yue and Gu, Quanquan and Li, Yuanzhi},
  journal={Advances in neural information processing systems},
  volume={35},
  pages={23049--23062},
  year={2022}
}

@article{weiss2016survey,
  title={A survey of transfer learning},
  author={Weiss, Karl and Khoshgoftaar, Taghi M and Wang, DingDing},
  journal={Journal of Big data},
  volume={3},
  pages={1--40},
  year={2016},
  publisher={Springer}
}

@article{zhuang2020comprehensive,
  title={A comprehensive survey on transfer learning},
  author={Zhuang, Fuzhen and Qi, Zhiyuan and Duan, Keyu and Xi, Dongbo and Zhu, Yongchun and Zhu, Hengshu and Xiong, Hui and He, Qing},
  journal={Proceedings of the IEEE},
  volume={109},
  number={1},
  pages={43--76},
  year={2020},
  publisher={Ieee}
}

@article{hosna2022transfer,
  title={Transfer learning: a friendly introduction},
  author={Hosna, Asmaul and Merry, Ethel and Gyalmo, Jigmey and Alom, Zulfikar and Aung, Zeyar and Azim, Mohammad Abdul},
  journal={Journal of Big Data},
  volume={9},
  number={1},
  pages={102},
  year={2022},
  publisher={Springer}
}

@article{sutton2019bitter,
  title={The bitter lesson, 2019},
  author={Sutton, Rich},
  journal={URL http://www.incompleteideas.net/IncIdeas/BitterLesson.html},
  year={2019}
}

@article{yu2024super,
  title={The super weight in large language models},
  author={Yu, Mengxia and Wang, De and Shan, Qi and Reed, Colorado J and Wan, Alvin},
  journal={arXiv preprint arXiv:2411.07191},
  year={2024}
}

@article{mocanu2018scalable,
  title={Scalable training of artificial neural networks with adaptive sparse connectivity inspired by network science},
  author={Mocanu, Decebal Constantin and Mocanu, Elena and Stone, Peter and Nguyen, Phuong H and Gibescu, Madeleine and Liotta, Antonio},
  journal={Nature communications},
  volume={9},
  number={1},
  pages={2383},
  year={2018},
  publisher={Nature Publishing Group UK London}
}

@inproceedings{evci2020rigging,
  title={Rigging the lottery: Making all tickets winners},
  author={Evci, Utku and Gale, Trevor and Menick, Jacob and Castro, Pablo Samuel and Elsen, Erich},
  booktitle={International conference on machine learning},
  pages={2943--2952},
  year={2020},
  organization={PMLR}
}

@article{sanh2020movement,
  title={Movement pruning: Adaptive sparsity by fine-tuning},
  author={Sanh, Victor and Wolf, Thomas and Rush, Alexander},
  journal={Advances in neural information processing systems},
  volume={33},
  pages={20378--20389},
  year={2020}
}

\appendix
    
\section{Reproducibility and code}
\label{reproducibility and code appendix}

    All code necessary to reproduce the experiments is included in the anonymous supplementary material.

\section{Experimental details}
\label{experimental details appendix}

    All experiments in the main paper, with the exception of the toy experiment illustrating the $k$-selection bottleneck in \Cref{fig:k-bottleneck}, are conducted on FashionMNIST using a fully connected ReLU network.  The input images are flattened to dimension \(784\), and the network has three hidden layers of width \(256\), followed by a \(10\)-class output layer.  Unless otherwise stated, all linear layers are bias-free.  We normalize FashionMNIST using mean \(0.2860\) and standard deviation \(0.3530\).  For each random seed, we randomly shuffle the FashionMNIST training set and use \(5{,}000\) examples for mask extraction or sparse training and \(5{,}000\) disjoint examples for validation.  All reported test accuracies are computed on the full FashionMNIST test set.
    
    Weights are initialized once per seed and shared across methods.  We use the signed Kaiming uniform initialization
    \[
    W_{ij} \sim \operatorname{Unif}\!\left[-\frac{1}{\sqrt{k}}\frac{\operatorname{gain}}{\sqrt{\operatorname{fan\_in}}},
    \frac{1}{\sqrt{k}}\frac{\operatorname{gain}}{\sqrt{\operatorname{fan\_in}}}\right],
    \]
    with \(k=1/2\) and ReLU gain.  After each method extracts a mask, we reload the original initialization before evaluating the masked network.  Thus the strong-ticket experiments measure the quality of the extracted mask rather than the quality of trained weights.
    
    All methods use Adam unless otherwise stated, with \(\beta=(0.9,0.999)\), \(\epsilon=10^{-8}\), and weight decay \(0\).  Score variables are trained with learning rate \(10^{-2}\), while weight-training baselines use learning rate \(10^{-3}\).  The batch size is \(512\).  We set Python, NumPy, and PyTorch random seeds for each run and disable cuDNN benchmarking.

    \paragraph{Datasets and assets used.}
    \begin{table}[t]
        \centering
        \small
        \caption{
            External datasets and software assets used in the experiments.
        }
        \label{tab:assets-licenses}
        \begin{tabular}{llll}
            \toprule
            Asset &bCredit/source & License / terms \\
            \midrule
            FashionMNIST  & Zalando Research; Xiao et al. & MIT License \\
            CIFAR-10 & Krizhevsky, Nair, Hinton & Public academic benchmark \\
            PyTorch  & PyTorch contributors & BSD-style / BSD-3-Clause \\
            torchvision  & torchvision contributors & BSD-3-Clause \\
            NumPy, pandas, matplotlib & package contributors & open-source scientific Python packages \\
            \bottomrule
        \end{tabular}
    \end{table}
    
    \paragraph{Mask densities and achieved sparsity.}
        For a requested sparsity \(s\), the corresponding keep density is \(k=1-s\).  For methods that directly impose a density, masks are constructed by retaining the top \(k\)-fraction of scores or saliencies.  For double-score methods, the effective sparsity on the original weight coordinates is induced by the augmented score-space competition and need not equal the requested sparsity.  We therefore report the achieved sparsity
        \[
        1-\frac{\#\{\text{active mask entries}\}}{\#\{\text{total mask entries}\}}
        \]
        for every method.
    
    \paragraph{Edge-popup.}
        For fixed \texttt{edge-popup}, weights are frozen and each weight tensor has an associated trainable score tensor of the same shape.  At each forward pass, we form a binary mask by retaining the entries with largest score magnitudes, i.e.
        \[
        H_t=\operatorname{TopKMask}(|S_t|;k),
        \]
        and train only the scores using a straight-through estimator for the hard top-\(k\) operation.  The final mask is extracted from the learned score magnitudes.
    
    \paragraph{DoubleScore-Augmented.}
        For each layer with weight matrix \(W_t\in \mathbb{R}^{m\times n}\), the augmented variant introduces a score matrix
        \[
        \widehat S_t\in \mathbb{R}^{m\times (n+n_{\mathrm{aux}})},
        \qquad n_{\mathrm{aux}}=n
        \]
        in the main experiments.  A hard top-\(k\) mask is computed in the augmented score space using \(|\widehat S_t|\), and the mask is then restricted to the original \(n\) coordinates.  The auxiliary score coordinates therefore compete with original coordinates for inclusion, allowing the effective original-coordinate sparsity to emerge from optimization.  No additional weight parameters are introduced.
    
    \paragraph{DoubleScore-Iterated.}
        The iterated variant maintains a cumulative mask.  At each round, it applies a doubled score-space mask only to currently active weights, multiplies the result into the cumulative mask, and stops once the achieved sparsity is within \(0.02\) of the requested sparsity or after at most \(8\) rounds.  The per-round keep density is chosen as the ratio between the target keep density and the current keep density, capped at \(1/2\).
    
    \paragraph{Pruning-at-initialization baselines.}
        SNIP is implemented by introducing differentiable mask parameters initialized to one and ranking weights by the magnitude of the gradient of the loss with respect to these mask parameters.  We compute SNIP saliencies using \(5\) minibatches and then apply a global top-\(k\) threshold.  GraSP is implemented using the standard Hessian-gradient saliency structure.  We compute two independent collections of \(5\) minibatches, use temperature \(200\), form the Hessian-gradient proxy, and rank weights by the signed score \(-\theta(Hg)_\theta\), followed by global top-\(k\) thresholding.
    
    \paragraph{Sparse-training baselines.}
        IMP, SET, RigL, and Movement pruning are included as rewound-topology baselines.  These methods are allowed to train weights while discovering a sparse topology, but the resulting mask is evaluated only after rewinding the weights to the shared random initialization.  Thus their reported strong-ticket accuracy measures whether the learned topology itself transfers back to initialization.
        
        IMP uses \(5\) pruning rounds and \(20\) training epochs per round.  At each round, the model is reset to the original initialization, trained under the current mask, and globally magnitude-pruned to the next density level.
        
        SET and RigL initialize a random global sparse mask at the requested keep density and train weights under the mask.  Every \(100\) optimization steps, \(30\%\) of active weights are pruned by magnitude and the same number of inactive weights are regrown.  SET regrows weights randomly.  RigL regrows using dense gradients computed on the current minibatch.
        
        Movement pruning trains weights and signed movement scores jointly.  It uses separate Adam parameter groups for weights and scores, with learning rates \(10^{-3}\) and \(10^{-2}\), respectively.  The keep density follows a cubic schedule from dense to the requested density, with \(10\%\) warmup and \(10\%\) cooldown.  Movement scores are initialized to zero with \(10^{-6}\) Gaussian noise for tie-breaking, and the final mask is obtained from the signed movement scores without taking absolute values.
    
    \paragraph{Strong-ticket extraction benchmark.}
        For the main strong-ticket comparison, we use seeds \(0,1,2\), requested sparsities \(50\%\), \(90\%\), and \(95\%\), and train mask-extraction methods for \(100\) epochs.  The main paper reports the high-sparsity \(90\%\) and \(95\%\) results, with the \(50\%\) results included in the appendix.
    
    \paragraph{Weak-ticket training benchmark.}
        For the weak-ticket experiment, each method first extracts a mask using the same extraction protocol.  We then rewind the weights to the shared initialization and train the surviving weights for \(40\) epochs while keeping the mask fixed.  We report both the initial accuracy before masked weight training and the final accuracy after masked weight training.  A dense baseline is trained once per seed under the same optimizer settings.
    
    \paragraph{Hyperparameter-stability benchmark.}
        For the hyperparameter-stability experiment, we fix requested sparsity at \(90\%\).  We compare fixed target-density \texttt{edge-popup}, random masks, the double-score variants, and a sweep of \texttt{edge-popup} configurations.  The \texttt{edge-popup} sweep includes scalar keep densities
        \[
        0.01,0.02,0.05,0.075,0.10,0.15,0.20,0.30,0.40,0.50,0.65,0.80
        \]
        and \(8\) randomly sampled layerwise keep-density tuples per seed drawn from the same pool.  We also report a validation-oracle \texttt{edge-popup} result, which selects the \texttt{edge-popup} configuration with highest validation accuracy for each seed.  The oracle is included only as an unconstrained tuning reference; it is not a same-sparsity baseline.
    
    \paragraph{Ablation benchmark.}
        For the ablation study, we use requested sparsities \(90\%\) and \(95\%\) with seeds \(0,1,2\).  We compare ordinary single-score \texttt{edge-popup} with and without magnitude ranking, augmented score spaces with auxiliary-width multipliers \(1,2,\) and \(4\), a frozen-auxiliary variant, and a projected-final variant.  In the frozen-auxiliary variant, the auxiliary score coordinates are randomly initialized and held fixed, while the original score coordinates are trained.  In the projected-final variant, the augmented method is trained normally, but the final mask is projected onto the original coordinates at the exact requested sparsity.  This tests whether the augmented method's gains come from improved original-coordinate saliency at fixed sparsity or from the effective sparsity induced by auxiliary score competition.
    
    \paragraph{Reported statistics.}
        All tables report mean \(\pm\) standard deviation over the stated random seeds.  We report test accuracy, validation accuracy when used for selection, achieved sparsity, and wall-clock runtime.  Validation accuracy is used only for the validation-oracle \texttt{edge-popup} configuration in the hyperparameter-stability experiment; all main comparisons are based on test accuracy after mask extraction and rewinding.

    \paragraph{Compute resources.}
        All reported experiments were run on Google Colab GPU instances.  Runs used a single GPU when available, with PyTorch automatically falling back to CPU otherwise.  The main FashionMNIST MLP experiments require modest compute and can be reproduced on a single consumer GPU.  The CIFAR-10 ConvNet sanity-check experiments are more expensive but were also run on a single Colab GPU.  Each results table reports average wall-clock time per method when relevant.  Because Colab assigns GPU types dynamically, the exact GPU model varied across runs; wall-clock times are therefore reported as approximate reproduction guidance rather than hardware-normalized benchmarks.  The total compute for the reported experiments was dominated by score-training methods that require full extraction runs, especially \texttt{edge-popup} sweeps, dynamic sparse-training baselines, and iterated\texttt{double-scoring}variants.
        
        The reported experiments represent the final controlled runs used in the paper.  During development, additional exploratory and failed runs were performed to debug implementations, choose stable protocols, and test variants not included in the final paper.  These preliminary runs required additional compute but are not used to support the paper's main claims.

\section{Additional baseline experiments}
\label{additional baselines appendix}

    This appendix contains additional experimental results omitted from the main paper for space.  Unless otherwise stated, all experiments use the setup described in Appendix~\ref{experimental details appendix}: a bias-free ReLU MLP with three hidden layers of width \(256\) on FashionMNIST, shared random initializations across methods, \(5{,}000\) extraction/training examples, \(5{,}000\) validation examples, and the full test set.  All reported values are mean \(\pm\) standard deviation over the stated random seeds.

    \subsection{Full strong-ticket extraction results}
    \label{app:full-strong-ticket}
    
        The main paper reports the high-sparsity \(90\%\) and \(95\%\) strong-ticket extraction results.  \Cref{tab:strong-ticket-appendix-all} gives the full strong-ticket table, including the \(50\%\) requested-sparsity setting and wall-clock time.  Each method first extracts a mask, after which the weights are rewound to the shared random initialization before evaluation.  Sparse-training methods are therefore evaluated as rewound-topology baselines.
        
        \begin{table}[h]
            \centering
            \scriptsize
            \setlength{\tabcolsep}{3.5pt}
            \caption{
                Full strong-ticket extraction results.  Each entry reports test accuracy of the extracted untrained subnetwork, achieved sparsity, and average runtime.  All masks are evaluated after rewinding weights to the shared random initialization.
            }
            \label{tab:strong-ticket-appendix-all}
            \begin{tabular}{lllll}
                \toprule
                Requested sparsity & Method & Test acc. (\%) & Achieved sparsity (\%) & Seconds \\
                \midrule
                    50\% & Random & 11.34 $\pm$ 1.78 & 50.0 $\pm$ 0.0 & 2.9 \\
                    50\% & SNIP & 9.04 $\pm$ 1.29 & 50.0 $\pm$ 0.0 & 3.4 \\
                    50\% & GraSP & 10.47 $\pm$ 0.81 & 50.0 $\pm$ 0.0 & 3.9 \\
                    50\% & EdgePopup & 84.03 $\pm$ 0.19 & 50.0 $\pm$ 0.0 & 100.5 \\
                    50\% & IMP & 74.31 $\pm$ 3.59 & 50.0 $\pm$ 0.0 & 99.1 \\
                    50\% & SET & 43.68 $\pm$ 8.87 & 50.0 $\pm$ 0.0 & 99.5 \\
                    50\% & RigL & 39.50 $\pm$ 19.50 & 50.0 $\pm$ 0.0 & 99.2 \\
                    50\% & Movement & 79.44 $\pm$ 1.03 & 50.0 $\pm$ 0.0 & 101.0 \\
                    50\% & DoubleScore-Aug & 84.10 $\pm$ 0.36 & 47.6 $\pm$ 0.0 & 101.3 \\
                    50\% & DoubleScore-Iter & 83.75 $\pm$ 0.27 & 68.1 $\pm$ 0.0 & 200.9 \\
                    90\% & Random & 8.92 $\pm$ 0.93 & 90.0 $\pm$ 0.0 & 2.8 \\
                    90\% & SNIP & 9.53 $\pm$ 0.80 & 90.0 $\pm$ 0.0 & 3.5 \\
                    90\% & GraSP & 11.29 $\pm$ 1.31 & 90.0 $\pm$ 0.0 & 3.9 \\
                    90\% & EdgePopup & 77.73 $\pm$ 0.38 & 90.0 $\pm$ 0.0 & 101.0 \\
                    90\% & IMP & 64.20 $\pm$ 6.01 & 90.0 $\pm$ 0.0 & 100.2 \\
                    90\% & SET & 33.97 $\pm$ 6.64 & 90.0 $\pm$ 0.0 & 100.0 \\
                    90\% & RigL & 17.69 $\pm$ 10.02 & 90.0 $\pm$ 0.0 & 99.3 \\
                    90\% & Movement & 76.34 $\pm$ 3.24 & 90.0 $\pm$ 0.0 & 100.9 \\
                    90\% & DoubleScore-Aug & 82.74 $\pm$ 0.04 & 81.3 $\pm$ 0.1 & 101.4 \\
                    90\% & DoubleScore-Iter & 82.09 $\pm$ 0.40 & 86.6 $\pm$ 0.1 & 794.7 \\
                    95\% & Random & 12.20 $\pm$ 2.22 & 95.0 $\pm$ 0.0 & 2.9 \\
                    95\% & SNIP & 10.06 $\pm$ 0.11 & 95.0 $\pm$ 0.0 & 3.5 \\
                    95\% & GraSP & 13.28 $\pm$ 5.53 & 95.0 $\pm$ 0.0 & 3.9 \\
                    95\% & EdgePopup & 60.61 $\pm$ 0.97 & 95.0 $\pm$ 0.0 & 101.0 \\
                    95\% & IMP & 62.11 $\pm$ 4.14 & 95.0 $\pm$ 0.0 & 99.7 \\
                    95\% & SET & 27.40 $\pm$ 9.53 & 95.0 $\pm$ 0.0 & 99.8 \\
                    95\% & RigL & 16.32 $\pm$ 8.11 & 95.0 $\pm$ 0.0 & 100.2 \\
                    95\% & Movement & 71.88 $\pm$ 7.08 & 95.0 $\pm$ 0.0 & 101.2 \\
                    95\% & DoubleScore-Aug & 78.03 $\pm$ 0.70 & 90.0 $\pm$ 0.0 & 101.5 \\
                    95\% & DoubleScore-Iter & 77.06 $\pm$ 0.76 & 90.7 $\pm$ 0.0 & 794.6 \\
                \bottomrule
            \end{tabular}
        \end{table}
        
        \begin{figure}[h]
            \centering
            \includegraphics[width=0.85\linewidth]{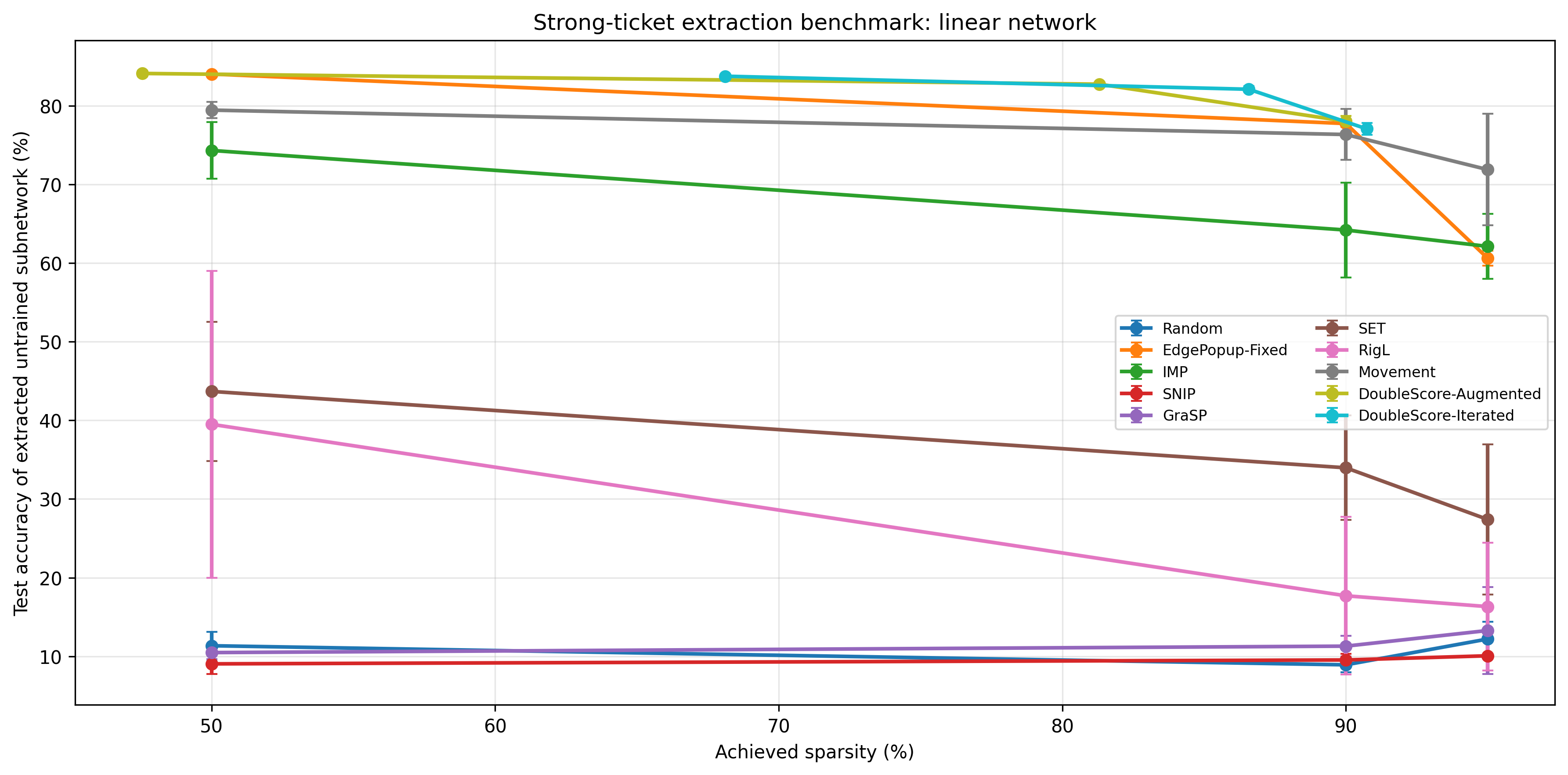}
            \caption{
                Strong-ticket extraction accuracy across requested sparsity levels.  DoubleScore-Augmented and DoubleScore-Iterated remain accurate in the high-sparsity regime where fixed-density \texttt{edge-popup} and pruning-at-initialization baselines degrade.
            }
            \label{fig:strong-ticket-full-accuracy}
        \end{figure}
        
        \begin{figure}[h]
            \centering
            \includegraphics[width=0.85\linewidth]{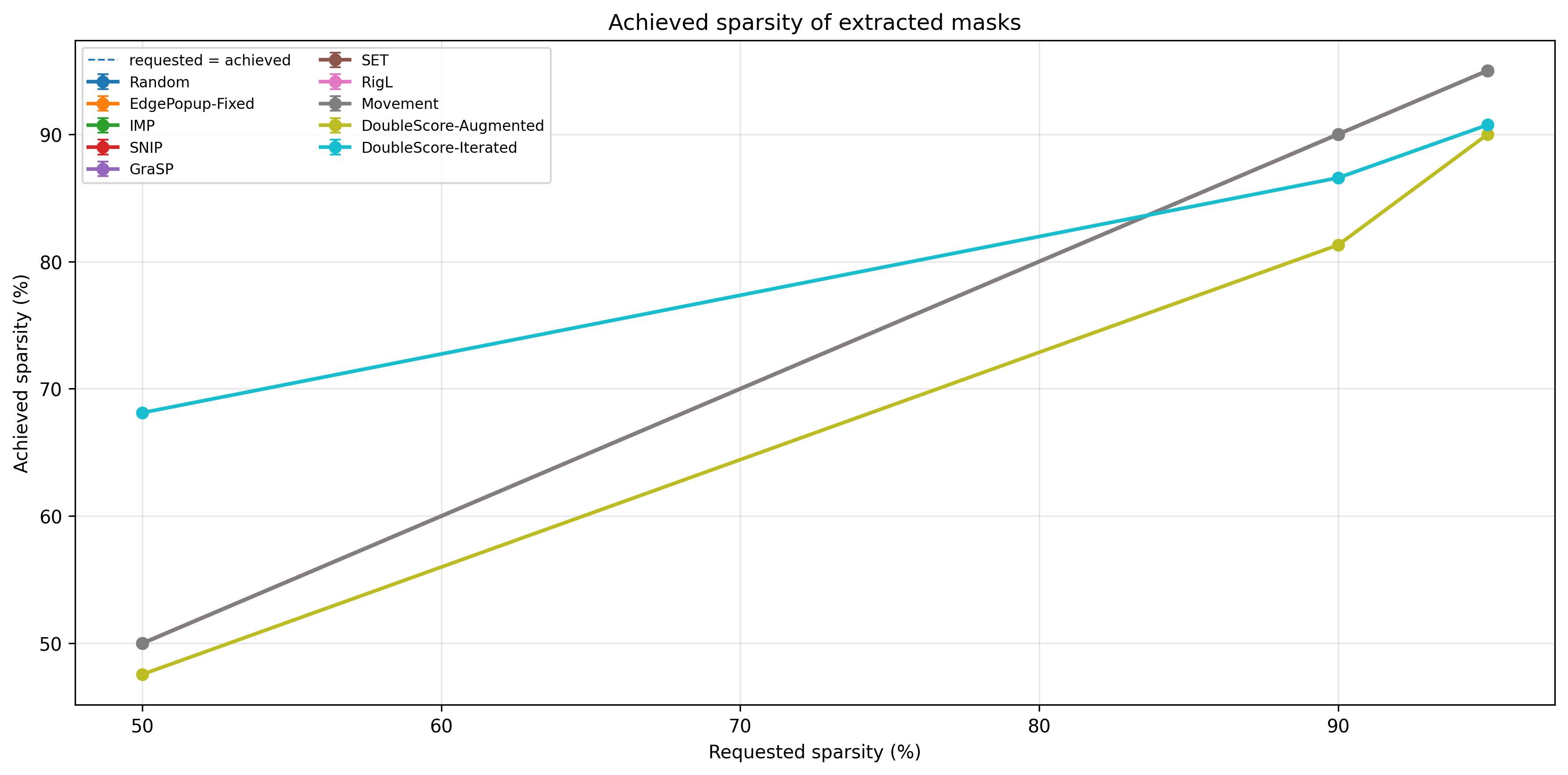}
            \caption{
                Achieved sparsity of extracted masks.  Methods that impose a fixed global density match the requested sparsity by construction. The double-score variants induce an effective original-coordinate sparsity through competition with auxiliary score coordinates, so their achieved sparsity can differ from the requested value.
            }
            \label{fig:strong-ticket-achieved-sparsity}
        \end{figure}
        
        These full results reinforce the main-paper conclusion.  At \(50\%\) requested sparsity, fixed-density \texttt{edge-popup} and DoubleScore-Augmented perform similarly.  The separation becomes more pronounced in the high-sparsity regime: at \(90\%\) and \(95\%\) requested sparsity, the double-score variants retain substantially higher strong-ticket accuracy than fixed-density \texttt{edge-popup}, random masks, SNIP, GraSP, SET, RigL, and IMP.  Movement pruning is the strongest rewound sparse-training baseline, but the double-score methods obtain higher strong-ticket accuracy in the high-sparsity settings while extracting masks with frozen weights.
    
    \section{Weak ticket training details}
    \label{weak ticket training appendix}
    
        The weak-ticket experiment asks whether the masks extracted in the strong-ticket setting remain trainable after ordinary masked weight training.  For each method, we first extract a mask, rewind the weights to the shared initialization, and then train only the surviving weights while keeping the mask fixed.  The main paper reports a dumbbell plot at \(90\%\) and \(95\%\) requested sparsity.  Here we provide the full final-accuracy, achieved-sparsity, and training-gain tables.
        
        \begin{table}[H]
        \centering
        \scriptsize
        \setlength{\tabcolsep}{3.5pt}
        \caption{
            Final test accuracy after masked weight training.  Dense denotes the fully dense baseline trained under the same optimizer settings.  Several masks that are poor strong tickets, such as random, SNIP, and GraSP masks, become trainable weak tickets after weight optimization.
        }
        \label{tab:weak-final-accuracy}
        \begin{tabular}{llllll}
            \toprule
             & 0\% requested sparse & 50\% requested sparse & 80\% requested sparse & 90\% requested sparse & 95\% requested sparse \\
            method &  &  &  &  &  \\
            \midrule
            Dense & 84.52 $\pm$ 0.42 & & & & \\
            Random & & 83.86 $\pm$ 0.78 & 83.92 $\pm$ 0.30 & 83.00 $\pm$ 0.27 & 81.61 $\pm$ 0.36 \\
            EdgePopup-Fixed & & 83.73 $\pm$ 0.21 & 83.17 $\pm$ 0.15 & 82.85 $\pm$ 0.19 & 80.96 $\pm$ 0.11 \\
            IMP & & 84.56 $\pm$ 0.20 & 83.98 $\pm$ 0.13 & 83.62 $\pm$ 0.33 & 82.67 $\pm$ 0.29 \\
            SNIP & & 84.02 $\pm$ 0.39 & 83.85 $\pm$ 0.10 & 83.22 $\pm$ 0.31 & 81.34 $\pm$ 0.93 \\
            GraSP & & 84.16 $\pm$ 0.46 & 83.81 $\pm$ 0.18 & 83.17 $\pm$ 0.36 & 81.52 $\pm$ 0.21 \\
            SET & & 84.20 $\pm$ 0.18 & 83.43 $\pm$ 0.42 & 83.35 $\pm$ 0.30 & 82.38 $\pm$ 0.27 \\
            RigL & & 84.28 $\pm$ 0.40 & 83.60 $\pm$ 0.60 & 83.15 $\pm$ 0.23 & 81.94 $\pm$ 0.23 \\
            Movement & & 84.74 $\pm$ 0.35 & 84.58 $\pm$ 0.27 & 84.44 $\pm$ 0.22 & 84.55 $\pm$ 0.18 \\
            DoubleScore-Augmented & & 84.08 $\pm$ 0.23 & 83.10 $\pm$ 0.60 & 83.44 $\pm$ 0.11 & 82.75 $\pm$ 0.09 \\
            DoubleScore-Iterated & & 83.92 $\pm$ 0.32 & 83.31 $\pm$ 0.48 & 83.50 $\pm$ 0.20 & 82.36 $\pm$ 0.24 \\
            \bottomrule
        \end{tabular}
        \end{table}
        
        \begin{table}[H]
        \centering
        \scriptsize
        \setlength{\tabcolsep}{3.5pt}
        \caption{
            Achieved sparsity for the weak-ticket experiment.  The double-score variants do not necessarily match the requested original-coordinate sparsity, since their effective sparsity is induced by auxiliary score competition.
        }
        \label{tab:weak-achieved-sparsity}
        \begin{tabular}{llllll}
            \toprule
             & 0\% requested sparse & 50\% requested sparse & 80\% requested sparse & 90\% requested sparse & 95\% requested sparse \\
            method &  &  &  &  &  \\
            \midrule
            Dense & 0.0 $\pm$ 0.0 & & & & \\
            Random & & 50.0 $\pm$ 0.0 & 80.0 $\pm$ 0.0 & 90.0 $\pm$ 0.0 & 95.0 $\pm$ 0.0 \\
            EdgePopup-Fixed & & 50.0 $\pm$ 0.0 & 80.0 $\pm$ 0.0 & 90.0 $\pm$ 0.0 & 95.0 $\pm$ 0.0 \\
            IMP & & 50.0 $\pm$ 0.0 & 80.0 $\pm$ 0.0 & 90.0 $\pm$ 0.0 & 95.0 $\pm$ 0.0 \\
            SNIP & & 50.0 $\pm$ 0.0 & 80.0 $\pm$ 0.0 & 90.0 $\pm$ 0.0 & 95.0 $\pm$ 0.0 \\
            GraSP & & 50.0 $\pm$ 0.0 & 80.0 $\pm$ 0.0 & 90.0 $\pm$ 0.0 & 95.0 $\pm$ 0.0 \\
            SET & & 50.0 $\pm$ 0.0 & 80.0 $\pm$ 0.0 & 90.0 $\pm$ 0.0 & 95.0 $\pm$ 0.0 \\
            RigL & & 50.0 $\pm$ 0.0 & 80.0 $\pm$ 0.0 & 90.0 $\pm$ 0.0 & 95.0 $\pm$ 0.0 \\
            Movement & & 50.0 $\pm$ 0.0 & 80.0 $\pm$ 0.0 & 90.0 $\pm$ 0.0 & 95.0 $\pm$ 0.0 \\
            DoubleScore-Augmented & & 47.6 $\pm$ 0.0 & 72.6 $\pm$ 0.1 & 81.3 $\pm$ 0.1 & 90.0 $\pm$ 0.0 \\
            DoubleScore-Iterated & & 68.2 $\pm$ 0.1 & 79.2 $\pm$ 0.1 & 86.7 $\pm$ 0.1 & 90.7 $\pm$ 0.0 \\
            \bottomrule
        \end{tabular}
        \end{table}
        
        \begin{table}[H]
        \centering
        \scriptsize
        \setlength{\tabcolsep}{3.5pt}
        \caption{
        Training gain in the weak-ticket experiment, measured as final test accuracy after masked training minus initial test accuracy immediately after mask extraction and rewinding.
        Large gains for random, SNIP, and GraSP masks show that these masks can be trainable weak tickets despite not being strong tickets.
        The double-score variants exhibit small gains because they already begin with high accuracy at initialization.
        }
        \label{tab:weak-training-gain}
        \begin{tabular}{llllll}
            \toprule
             & 0\% requested sparse & 50\% requested sparse & 80\% requested sparse & 90\% requested sparse & 95\% requested sparse \\
            method &  &  &  &  &  \\
            \midrule
            Dense & 74.53 $\pm$ 1.85 & & & & \\
            Random & & 72.63 $\pm$ 1.94 & 74.79 $\pm$ 2.14 & 72.15 $\pm$ 3.90 & 71.95 $\pm$ 2.52 \\
            EdgePopup-Fixed & & -0.17 $\pm$ 0.45 & -0.20 $\pm$ 0.43 & 4.80 $\pm$ 0.38 & 18.22 $\pm$ 1.59 \\
            IMP & & 8.15 $\pm$ 2.57 & 32.53 $\pm$ 6.57 & 17.23 $\pm$ 4.49 & 16.30 $\pm$ 7.18 \\
            SNIP & & 73.59 $\pm$ 2.89 & 75.55 $\pm$ 2.36 & 74.27 $\pm$ 2.27 & 72.44 $\pm$ 1.46 \\
            GraSP & & 74.35 $\pm$ 4.07 & 73.67 $\pm$ 1.62 & 72.39 $\pm$ 4.61 & 69.93 $\pm$ 2.77 \\
            SET & & 44.48 $\pm$ 4.83 & 32.85 $\pm$ 7.99 & 47.95 $\pm$ 12.62 & 61.10 $\pm$ 4.65 \\
            RigL & & 41.10 $\pm$ 10.31 & 52.31 $\pm$ 8.61 & 66.11 $\pm$ 4.43 & 73.65 $\pm$ 2.06 \\
            Movement & & 6.35 $\pm$ 1.83 & 5.69 $\pm$ 1.72 & 8.31 $\pm$ 2.15 & 12.09 $\pm$ 5.37 \\
            DoubleScore-Augmented & & 0.24 $\pm$ 0.26 & -0.52 $\pm$ 0.61 & 0.94 $\pm$ 0.29 & 5.25 $\pm$ 0.73 \\
            DoubleScore-Iterated & & -0.22 $\pm$ 0.12 & -0.50 $\pm$ 0.41 & 1.42 $\pm$ 0.26 & 5.03 $\pm$ 0.28 \\
            \bottomrule
        \end{tabular}
        \end{table}
        
        \begin{figure}[H]
            \centering
            \includegraphics[width=0.85\linewidth]{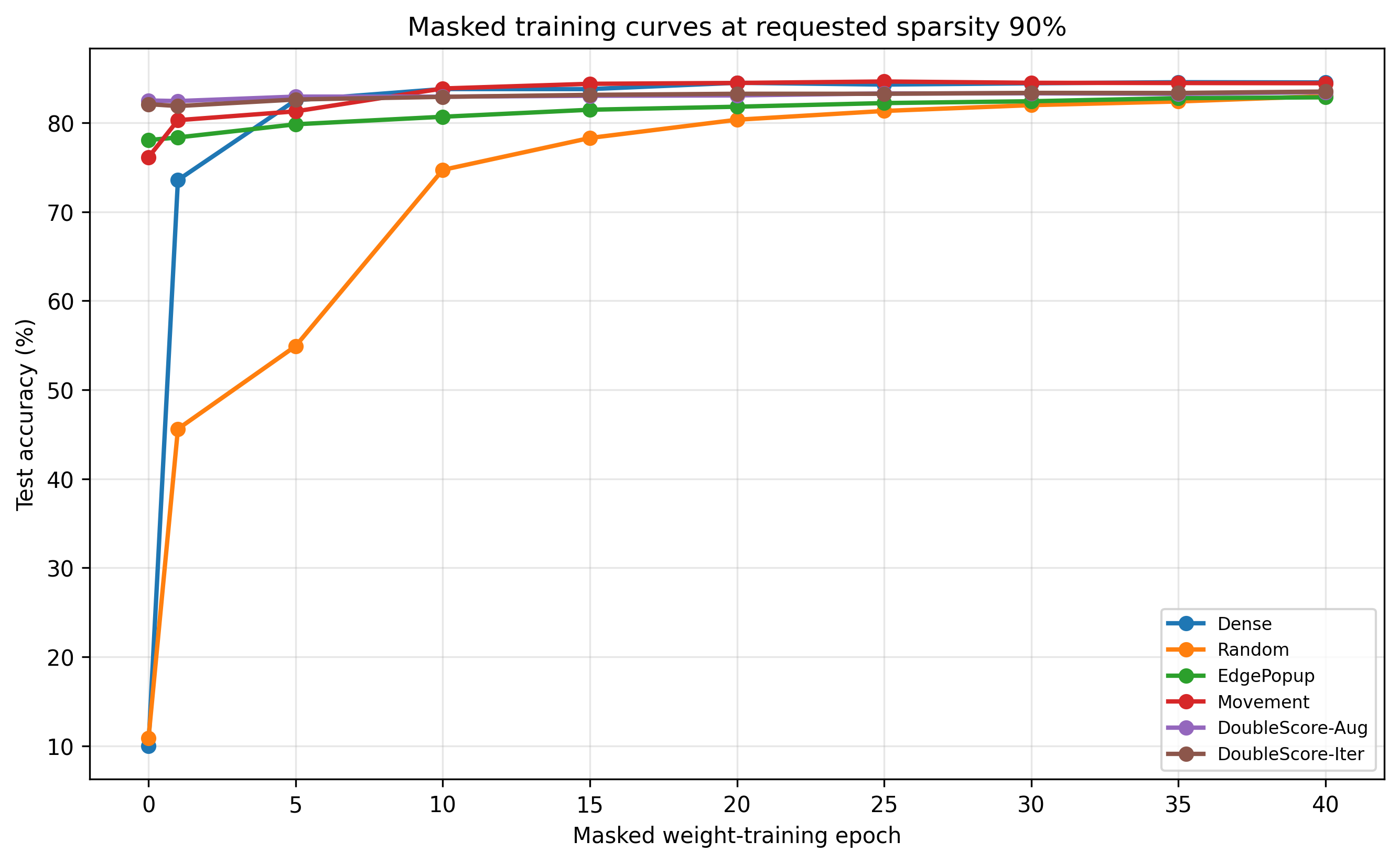}
            \caption{
            Masked weight-training curves at requested \(90\%\) sparsity.
            Double-score masks start near their final trained accuracy, while several baselines begin near chance and only become competitive after weight optimization.
            }
            \label{fig:weak-training-curves-90}
        \end{figure}
        
        The full weak-ticket results clarify the distinction between strong-ticket extraction and weak-ticket trainability.  Random, SNIP, and GraSP masks are generally poor strong tickets, but after masked weight training they can achieve final test accuracies comparable to other sparse networks.  In contrast, the double-score masks begin with high accuracy and therefore require little additional optimization to reach their final trained performance.  Movement pruning achieves the highest final trained accuracy in several settings, consistent with its role as a sparse-training method rather than a frozen-weight strong-ticket extractor.
    \section{Hyperparameter stability details}
    \label{hyperparameter stability appendix}

        The main text reports the full hyperparameter-stability comparison, including both constrained and unconstrained validation-selected \texttt{edge-popup} references.  Here we show the underlying scalar keep-density sweep used as part of that experiment.

        \begin{figure}[H]
            \centering
            \includegraphics[width=0.85\linewidth]{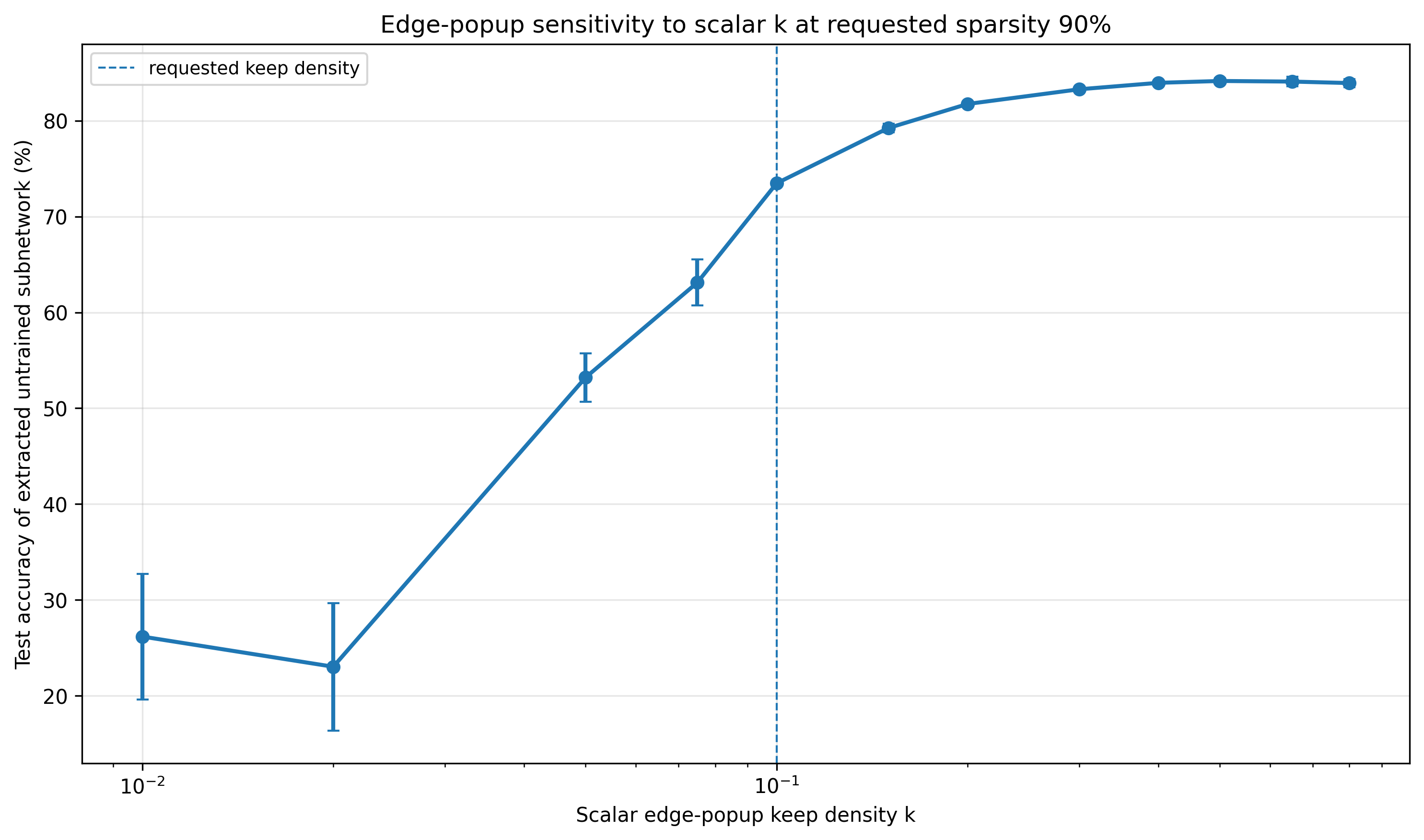}
            \caption{
                Scalar keep-density sweep for \texttt{edge-popup} at requested \(90\%\) sparsity.
                Performance varies substantially with the density parameter, illustrating that fixed-density \texttt{edge-popup} requires nontrivial sparsity selection even in this controlled benchmark.
                The full hyperparameter-stability comparison, including constrained and unconstrained validation-selected references, appears in \Cref{fig:stability experiment figure}.
                }
            \label{fig:edgepopup-k-sweep}
        \end{figure}
        
    \section{Additional ablation experiments}
    \label{additional ablations appendix}
    
        The main paper reports the ablation study at requested \(90\%\) sparsity.  \Cref{tab:ablation-appendix-all} gives the full ablation table at both \(90\%\) and \(95\%\) requested sparsity.  These results separate the effects of magnitude-based score ranking, auxiliary score competition, auxiliary width, frozen auxiliary coordinates, and final projection to exact original-coordinate sparsity.
        
        \begin{table}[H]
        \centering
        \scriptsize
        \setlength{\tabcolsep}{3.5pt}
        \caption{
        Full ablation results at requested \(90\%\) and \(95\%\) sparsity.
        SingleScore-Abs is the fixed-density \texttt{edge-popup}-style baseline.
        The augmented variants improve strong-ticket accuracy, while the projected-final variant shows that the gain is tied to the effective sparsity induced by auxiliary score competition rather than simply to better original-coordinate saliencies at a fixed density.
        }
        \label{tab:ablation-appendix-all}
        \begin{tabular}{lllllr}
            \toprule
            Requested sparsity & Variant & Test acc. (\%) & Achieved sparsity (\%) & Score params & n \\
            \midrule
            90\% & Random & 8.93 $\pm$ 4.39 & 90.0 $\pm$ 0.0 & 0.0$\times$ & 3 \\
            90\% & SingleScore-Abs & 73.88 $\pm$ 0.31 & 90.0 $\pm$ 0.0 & 1.0$\times$ & 3 \\
            90\% & SingleScore-NoAbs & 72.60 $\pm$ 0.52 & 90.0 $\pm$ 0.0 & 1.0$\times$ & 3 \\
            90\% & Aug-x1-Abs & 80.34 $\pm$ 0.42 & 82.9 $\pm$ 0.2 & 2.0$\times$ & 3 \\
            90\% & Aug-x1-FrozenAux & 79.87 $\pm$ 0.42 & 82.9 $\pm$ 0.0 & 2.0$\times$ & 3 \\
            90\% & Aug-x1-NoAbs & 78.49 $\pm$ 0.20 & 83.3 $\pm$ 0.2 & 2.0$\times$ & 3 \\
            90\% & Aug-x1-Projected & 70.16 $\pm$ 6.48 & 90.0 $\pm$ 0.0 & 2.0$\times$ & 3 \\
            90\% & Aug-x2-Abs & 80.62 $\pm$ 0.26 & 80.8 $\pm$ 0.1 & 3.0$\times$ & 3 \\
            90\% & Aug-x4-Abs & 80.93 $\pm$ 0.45 & 79.1 $\pm$ 0.2 & 5.0$\times$ & 3 \\
            95\% & Random & 9.63 $\pm$ 1.92 & 95.0 $\pm$ 0.0 & 0.0$\times$ & 3 \\
            95\% & SingleScore-Abs & 46.96 $\pm$ 4.67 & 95.0 $\pm$ 0.0 & 1.0$\times$ & 3 \\
            95\% & SingleScore-NoAbs & 44.49 $\pm$ 2.92 & 95.0 $\pm$ 0.0 & 1.0$\times$ & 3 \\
            95\% & Aug-x1-Abs & 73.75 $\pm$ 0.12 & 90.0 $\pm$ 0.0 & 2.0$\times$ & 3 \\
            95\% & Aug-x1-FrozenAux & 73.37 $\pm$ 0.58 & 90.0 $\pm$ 0.0 & 2.0$\times$ & 3 \\
            95\% & Aug-x1-NoAbs & 71.23 $\pm$ 1.94 & 90.1 $\pm$ 0.0 & 2.0$\times$ & 3 \\
            95\% & Aug-x1-Projected & 51.76 $\pm$ 7.94 & 95.0 $\pm$ 0.0 & 2.0$\times$ & 3 \\
            95\% & Aug-x2-Abs & 77.41 $\pm$ 0.30 & 86.0 $\pm$ 0.1 & 3.0$\times$ & 3 \\
            95\% & Aug-x4-Abs & 78.27 $\pm$ 0.86 & 82.8 $\pm$ 0.1 & 5.0$\times$ & 3 \\
            \bottomrule
        \end{tabular}
        \end{table}
        
        \begin{figure}[H]
            \centering
            \includegraphics[width=0.85\linewidth]{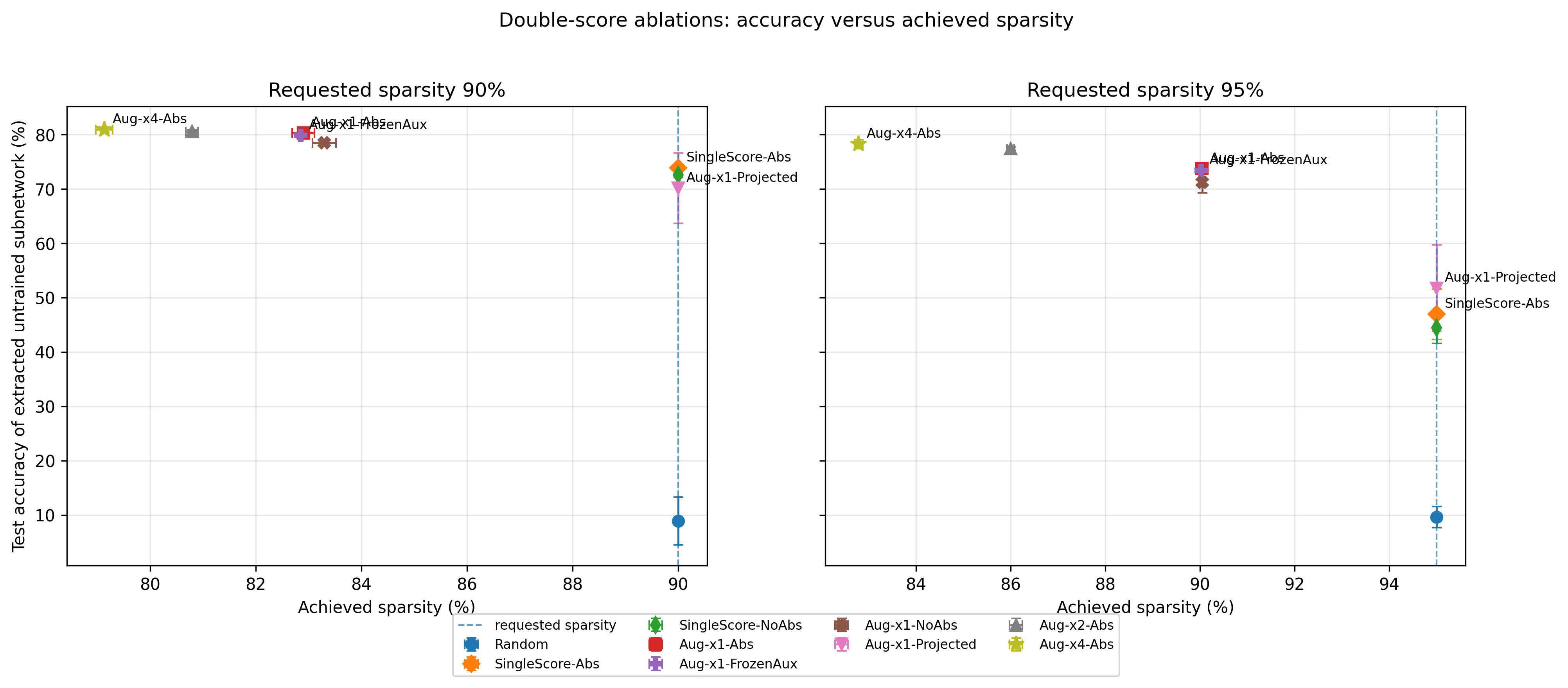}
            \caption{
            Ablation accuracy versus achieved sparsity.  The augmented variants obtain high accuracy while inducing effective sparsities below the requested original-coordinate sparsity.  The projected-final variant enforces the requested sparsity exactly and loses much of the augmented method's advantage.
            }
            \label{fig:ablation-scatter-appendix}
        \end{figure}
        
        \begin{figure}[H]
            \centering
            \includegraphics[width=0.75\linewidth]{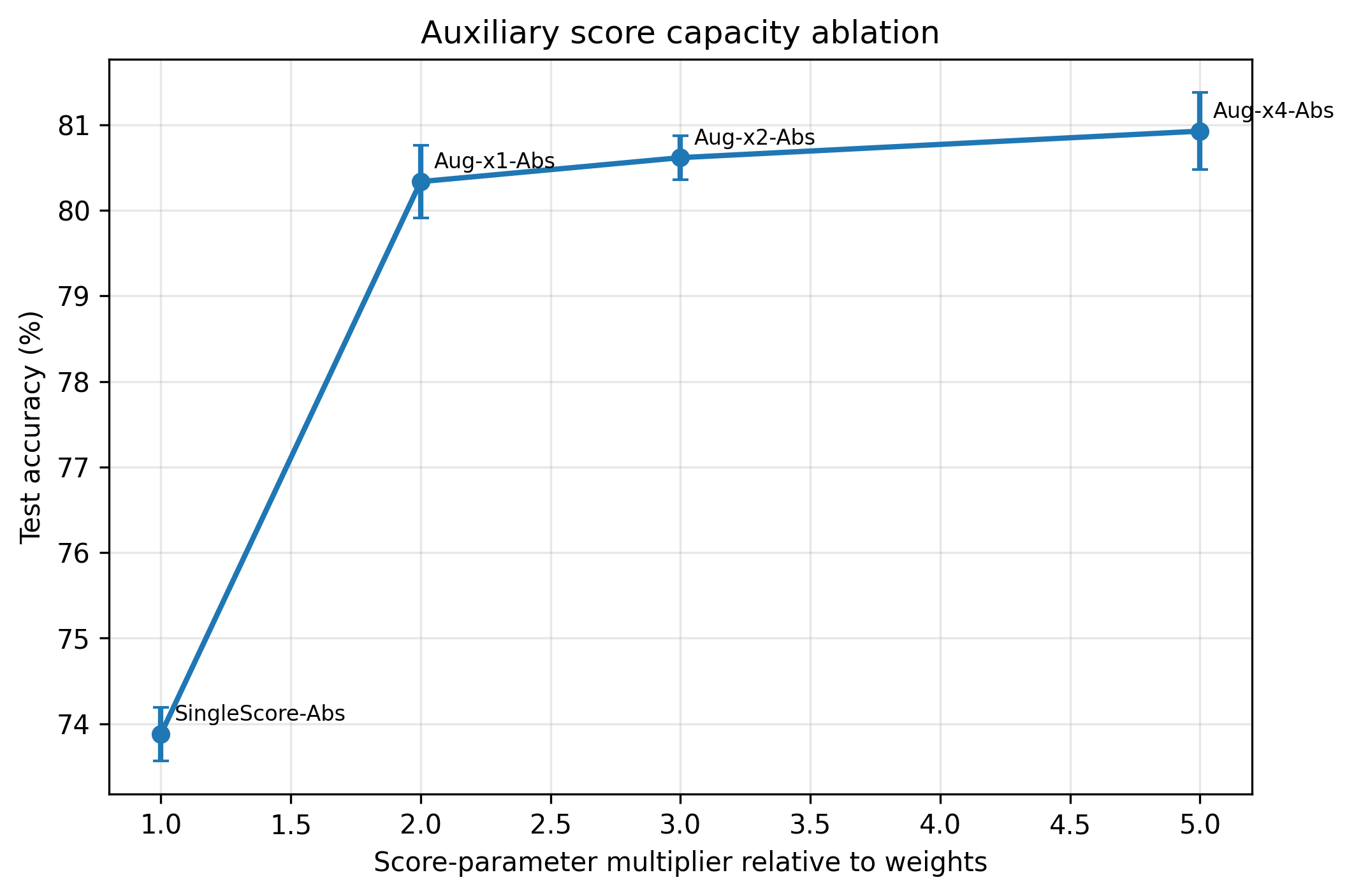}
            \caption{
            Auxiliary score-capacity ablation.  Increasing auxiliary width improves accuracy but also decreases achieved sparsity, indicating that auxiliary score capacity primarily shifts the effective sparsity/accuracy tradeoff rather than simply improving fixed-sparsity saliency learning.
            }
            \label{fig:ablation-capacity}
        \end{figure}
        
        The ablations support three conclusions.  First, magnitude-based score ranking is beneficial: removing the absolute-value parameterization lowers performance for both single-score and augmented variants.  Second, the auxiliary coordinates are not useful merely because they add trainable parameters.  The frozen-auxiliary variant performs nearly identically to the fully trainable Aug-x1-Abs variant, consistent with the interpretation that the auxiliary coordinates act as a competitive reservoir in the top-\(k\) selection.  Third, the projected-final variant shows that the augmented method's advantage is tied to induced effective sparsity.  When the final mask is forced back to exact requested original-coordinate sparsity, performance drops substantially.

    \section{Matched-sparsity CIFAR-10 ConvNet sanity check}
    \label{app:cifar-convnet-matched}
    
        To test whether the main phenomenon is specific to FashionMNIST MLPs, we ran an additional CIFAR-10 experiment using a VGG-style convolutional network without BatchNorm.  Removing BatchNorm avoids the ambiguity that high initial accuracy might come from adapted normalization statistics rather than from the extracted mask.  We use the same \(5{,}000/5{,}000\) train-validation split and the full CIFAR-10 test set.

        \paragraph{Strong ticket extraction.}
        For each seed and nominal target sparsity, we first extract a DoubleScore-Augmented mask and record its achieved sparsity.  All baselines are then run at that matched requested sparsity using the same initialization, data split, and optimizer budget.  Thus this experiment controls for the effective-sparsity difference present in the main FashionMNIST results.

        \begin{table}[H]
            \centering
            \scriptsize
            \setlength{\tabcolsep}{3.5pt}
            \caption{
                Matched-sparsity CIFAR-10 ConvNet strong-ticket extraction.
                For each seed and nominal target sparsity, DoubleScore-Augmented is run first; all other methods are run at the achieved sparsity of the DoubleScore-Augmented mask.  The architecture is a VGG-style ConvNet without BatchNorm.  All masks are evaluated after rewinding weights to the shared random initialization.
            }
            \label{tab:cifar-convnet-matched-strong}
            \begin{tabular}{lll}
                    \toprule
                     & DS nominal 90\%: acc. & DS nominal 95\%: acc. \\
                    method &  &  \\
                    \midrule
                    DoubleScore-Augmented & 46.68 $\pm$ 2.21 & 41.77 $\pm$ 0.51 \\
                    Random & 9.21 $\pm$ 1.06 & 8.97 $\pm$ 0.90 \\
                    EdgePopup-Fixed & 46.16 $\pm$ 3.23 & 42.84 $\pm$ 1.04 \\
                    IMP & 41.69 $\pm$ 2.41 & 23.39 $\pm$ 6.12 \\
                    SNIP & 10.00 $\pm$ 0.00 & 10.00 $\pm$ 0.01 \\
                    GraSP & 10.36 $\pm$ 0.62 & 10.04 $\pm$ 0.07 \\
                    SET & 10.95 $\pm$ 1.74 & 10.01 $\pm$ 0.02 \\
                    RigL & 11.67 $\pm$ 1.50 & 12.50 $\pm$ 2.55 \\
                    Movement & 36.20 $\pm$ 4.22 & 35.30 $\pm$ 1.87 \\
                    \bottomrule
                \end{tabular}
        \end{table}
        The results show that DoubleScore-Augmented remains a nontrivial strong-ticket extractor in this harder no-BatchNorm convolutional setting.  At nominal \(90\%\) sparsity, it obtains \(46.7\pm2.2\%\) test accuracy at \(83.8\%\) achieved sparsity, compared with \(46.2\pm3.2\%\) for matched fixed \texttt{edge-popup}, \(41.7\pm2.4\%\) for IMP, and \(36.2\pm4.2\%\) for movement pruning.  At nominal \(95\%\) sparsity, DoubleScore obtains \(41.8\pm0.5\%\) at \(90.9\%\) achieved sparsity, comparable to matched fixed \texttt{edge-popup} at \(42.8\pm1.0\%\) and above the other evaluated baselines.  Thus the qualitative strong-ticket behavior is not limited to the FashionMNIST MLP setting, although in this ConvNet experiment DoubleScore is best interpreted as competitive with matched \texttt{edge-popup} rather than uniformly superior to it. 

        The CIFAR-10 ConvNet result is intentionally conservative: matched \texttt{edge-popup} is given the sparsity level induced by DoubleScore-Augmented. Thus this experiment should not be read as showing that DoubleScore uniformly dominates \texttt{edge-popup}. Rather, it shows that DoubleScore remains competitive with a retrospectively well-chosen \texttt{edge-popup} density and substantially above several other baselines in a no-BatchNorm convolutional architecture.  In practice, these sparsity values are not known in advance for a given initialization, architecture, and training protocol.  As shown, \texttt{edge-popup} performance varies substantially away from these density choices.  The CIFAR-10 result therefore supports the same qualitative conclusion as the main experiments: DoubleScore does not necessarily dominate a well-chosen \texttt{edge-popup} density, but it provides a mechanism for inducing a useful effective sparsity without performing an explicit sparsity sweep.

        \paragraph{Hyperparameter stability.}
        The matched CIFAR-10 stability experiment tests whether the sparsity-selection issue persists in the no-BatchNorm ConvNet setting.  For each seed, DoubleScore-Augmented is first run at nominal \(90\%\) sparsity, and the resulting achieved sparsity is used as the matched reference sparsity for the \texttt{edge-popup} comparison.  This makes the comparison deliberately conservative: \texttt{edge-popup} is evaluated at, or near, the sparsity level induced by DoubleScore itself.
        
        \begin{table}[H]
            \centering
            \scriptsize
            \setlength{\tabcolsep}{4pt}
            \caption{
            Matched CIFAR-10 ConvNet hyperparameter-stability experiment at nominal \(90\%\) DoubleScore sparsity.
            DoubleScore-Augmented is run first, achieving approximately \(84\%\) sparsity. 
            The fixed matched \texttt{edge-popup} row uses this achieved sparsity directly. 
            The matched-band oracle selects the best validation \texttt{edge-popup} configuration whose achieved sparsity lies near the DoubleScore-induced sparsity, while the unconstrained oracle selects over the full \texttt{edge-popup} sweep.
            The unconstrained oracle attains higher accuracy by selecting much denser masks and is therefore not a same-regime comparison.
            }
            \begin{tabular}{lrrrr}
                \toprule
                Method/group & Test acc. (\%) & Range (\%) & Achieved sparsity (\%) & Runs \\
                \midrule
                DoubleScore-Augmented & 45.73 $\pm$ 1.17 & [44.94, 47.07] & 84.0 $\pm$ 0.4 & 3 \\
                Edge-popup fixed matched & 49.08 $\pm$ 1.66 & [47.58, 50.87] & 84.0 $\pm$ 0.4 & 3 \\
                Edge-popup oracle (matched band) & 50.15 $\pm$ 0.63 & [49.74, 50.87] & 83.9 $\pm$ 1.0 & 3 \\
                Edge-popup oracle (unconstrained) & 57.39 $\pm$ 0.48 & [56.96, 57.91] & 40.0 $\pm$ 8.7 & 3 \\
                Edge-popup sweep & 44.11 $\pm$ 11.61 & [9.64, 57.91] & 73.5 $\pm$ 22.9 & 51 \\
                Random masks & 10.09 $\pm$ 1.10 & [8.02, 13.85] & 84.0 $\pm$ 0.4 & 24 \\
                \bottomrule
            \end{tabular}
            \label{tab:cifar-convnet-matched-stability}
        \end{table}
        
        \begin{figure}[H]
            \centering
            \includegraphics[width=1\linewidth]{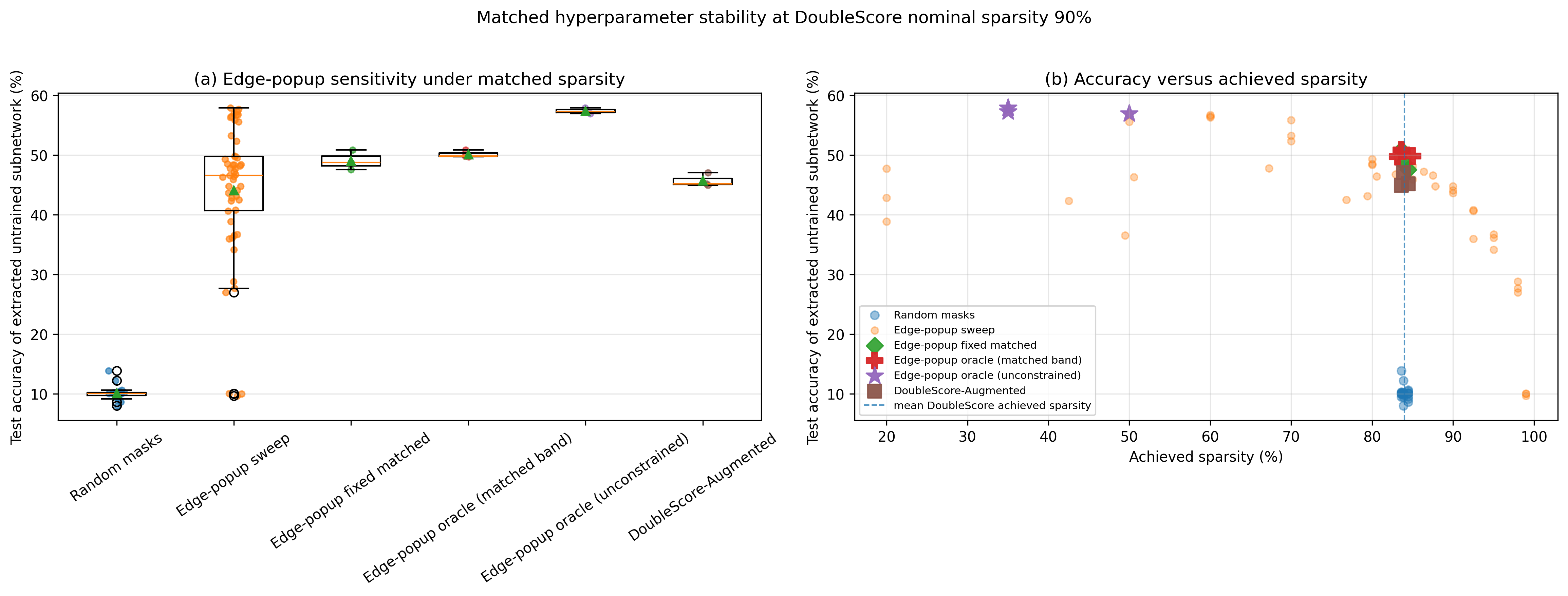}
            \caption{
            Matched CIFAR-10 ConvNet hyperparameter stability.
            Left: \texttt{edge-popup} exhibits substantial variation across scalar and layerwise keep-density choices, while DoubleScore-Augmented produces a nontrivial strong ticket without an explicit density sweep.
            Right: accuracy versus achieved sparsity. 
            Fixed matched \texttt{edge-popup} and the matched-band oracle are evaluated near the DoubleScore-induced sparsity, whereas the unconstrained oracle achieves higher accuracy by selecting substantially denser masks.
            }
            \label{fig:cifar-convnet-matched-stability}
        \end{figure}
        
        Under this retrospective matched comparison, fixed \texttt{edge-popup} reaches \(49.1\pm1.7\%\), slightly above DoubleScore-Augmented at \(45.7\pm1.2\%\), and the matched-band validation oracle reaches \(50.1\pm0.6\%\).  Thus, in this ConvNet setting, DoubleScore should not be interpreted as uniformly superior to a well-chosen \texttt{edge-popup} density.  However, the sweep also shows why the matched comparison is retrospective: \texttt{edge-popup} performance varies from near chance to \(57.9\%\) across the tested configurations, and the unconstrained oracle reaches its best performance by selecting much denser masks, with only \(40.0\pm8.7\%\) achieved sparsity.  These results support the main qualitative claim: DoubleScore induces a useful effective sparsity without performing an explicit sparsity sweep, while \texttt{edge-popup} can be highly competitive when given a good density after the fact.

        \paragraph{Ablations.}
        The matched CIFAR-10 ConvNet ablations are more nuanced than the FashionMNIST MLP ablations and shows that the advantage of DoubleScore over single-score \texttt{edge-popup} is architecture- and regime-dependent.  At nominal \(90\%\), matched SingleScore-Abs slightly outperforms Aug-x1-Abs, while at nominal \(95\%\) Aug-x1-Abs is slightly higher.  We therefore do not claim uniform dominance over a retrospectively matched \texttt{edge-popup} density.  The more robust conclusion is that DoubleScore induces useful effective sparsities without an explicit density search and remains competitive with \texttt{edge-popup} even when \texttt{edge-popup} is given the induced sparsity retrospectively.  Canonical Aug-x1-Abs remains competitive with matched single-score \texttt{edge-popup}, but does not uniformly dominate it: at nominal \(90\%\) sparsity, SingleScore-Abs slightly outperforms Aug-x1-Abs at the same achieved sparsity, while at nominal \(95\%\) Aug-x1-Abs is slightly higher.  The main mechanistic conclusion comes from the projected-final variant.  When augmented training is followed by projection back to the original-coordinate density, performance collapses, reaching only \(21.0\pm2.2\%\) at the \(90\%\) nominal setting and \(15.5\pm5.4\%\) at the \(95\%\) nominal setting.  Thus the useful effect of augmentation is not simply improved original-coordinate saliency at a fixed density; it is the effective sparsity and layerwise mask distribution induced by competition with auxiliary score coordinates.

        \begin{table}[H]
            \centering
            \scriptsize
            \setlength{\tabcolsep}{3.5pt}
            \caption{
                Matched CIFAR-10 ConvNet ablations.
                For each seed and nominal target, canonical Aug-x1-Abs is run first; all other variants are run at the achieved sparsity induced by the canonical variant.  The projected-final variant trains with augmented score competition but extracts its final mask by projecting back to the original score coordinates at the matched original-coordinate sparsity.  Its large performance drop shows that the augmented method's advantage is not merely better original-coordinate saliency learning, but the effective sparsity pattern induced by auxiliary score competition.  Auxiliary-capacity variants induce different achieved sparsities and should be interpreted as tracing a sparsity--accuracy tradeoff rather than as same-sparsity comparisons.
            }
            \label{tab:cifar-convnet-matched-ablations}
            \begin{tabular}{rllllllrr}
                \toprule
                nominal & variant & accuracy & accuracy & method & achieved & score & seconds & n \\
                target &  &  & range & requested & sparsity & params & mean &  \\
                sparsity & &   &  &  sparsity &   &(train/total)  &   &  \\
                \midrule
                0.90 & Aug-x1-Abs & 45.99 $\pm$ 1.13 & [44.69, 46.72] & 90.0 $\pm$ 0.0 & 83.8 $\pm$ 0.1 & 2.0x/2.0x & 37.104471 & 3 \\
                0.90 & Aug-x1-Abs-FrozenAux & 47.22 $\pm$ 2.78 & [44.14, 49.54] & 83.8 $\pm$ 0.1 & 76.5 $\pm$ 0.2 & 1.0x/2.0x & 36.958158 & 3 \\
                0.90 & Aug-x1-Abs-ProjectedFinal & 21.01 $\pm$ 2.21 & [19.01, 23.39] & 83.8 $\pm$ 0.1 & 83.8 $\pm$ 0.1 & 2.0x/2.0x & 36.434561 & 3 \\
                0.90 & Aug-x1-NoAbs & 46.11 $\pm$ 2.32 & [44.02, 48.60] & 83.8 $\pm$ 0.1 & 78.6 $\pm$ 0.0 & 2.0x/2.0x & 35.926176 & 3 \\
                0.90 & Aug-x2-Abs & 46.73 $\pm$ 1.00 & [45.87, 47.83] & 83.8 $\pm$ 0.1 & 73.7 $\pm$ 0.2 & 3.0x/3.0x & 36.813634 & 3 \\
                0.90 & Aug-x4-Abs & 50.60 $\pm$ 0.15 & [50.45, 50.75] & 83.8 $\pm$ 0.1 & 72.1 $\pm$ 0.1 & 5.0x/5.0x & 38.307762 & 3 \\
                0.90 & Random & 10.07 $\pm$ 0.12 & [10.00, 10.21] & 83.8 $\pm$ 0.1 & 83.8 $\pm$ 0.1 & 0.0x/0.0x & 0.847551 & 3 \\
                0.90 & SingleScore-Abs & 49.00 $\pm$ 1.39 & [47.69, 50.46] & 83.8 $\pm$ 0.1 & 83.8 $\pm$ 0.1 & 1.0x/1.0x & 35.396873 & 3 \\
                0.90 & SingleScore-NoAbs & 47.81 $\pm$ 0.55 & [47.18, 48.14] & 83.8 $\pm$ 0.1 & 83.8 $\pm$ 0.1 & 1.0x/1.0x & 34.870252 & 3 \\
                0.95 & Aug-x1-Abs & 43.48 $\pm$ 1.69 & [41.53, 44.62] & 95.0 $\pm$ 0.0 & 90.9 $\pm$ 0.1 & 2.0x/2.0x & 36.728117 & 3 \\
                0.95 & Aug-x1-Abs-FrozenAux & 47.82 $\pm$ 1.26 & [46.99, 49.27] & 90.9 $\pm$ 0.1 & 85.1 $\pm$ 0.1 & 1.0x/2.0x & 36.848718 & 3 \\
                0.95 & Aug-x1-Abs-ProjectedFinal & 15.45 $\pm$ 5.40 & [10.27, 21.05] & 90.9 $\pm$ 0.1 & 90.9 $\pm$ 0.1 & 2.0x/2.0x & 36.788965 & 3 \\
                0.95 & Aug-x1-NoAbs & 45.68 $\pm$ 1.92 & [43.55, 47.27] & 90.9 $\pm$ 0.1 & 86.1 $\pm$ 0.1 & 2.0x/2.0x & 36.180120 & 3 \\
                0.95 & Aug-x2-Abs & 46.58 $\pm$ 2.01 & [44.30, 48.12] & 90.9 $\pm$ 0.1 & 82.4 $\pm$ 0.2 & 3.0x/3.0x & 37.085452 & 3 \\
                0.95 & Aug-x4-Abs & 46.43 $\pm$ 0.43 & [46.10, 46.92] & 90.9 $\pm$ 0.1 & 80.0 $\pm$ 0.5 & 5.0x/5.0x & 38.074876 & 3 \\
                0.95 & Random & 9.73 $\pm$ 0.60 & [9.05, 10.18] & 90.9 $\pm$ 0.1 & 90.9 $\pm$ 0.1 & 0.0x/0.0x & 0.852979 & 3 \\
                0.95 & SingleScore-Abs & 42.73 $\pm$ 1.04 & [41.77, 43.84] & 90.9 $\pm$ 0.1 & 90.9 $\pm$ 0.1 & 1.0x/1.0x & 35.495703 & 3 \\
                0.95 & SingleScore-NoAbs & 40.48 $\pm$ 0.93 & [39.57, 41.42] & 90.9 $\pm$ 0.1 & 90.9 $\pm$ 0.1 & 1.0x/1.0x & 34.999686 & 3 \\
                \bottomrule
            \end{tabular}
        \end{table}
        
        The auxiliary-capacity variants further support this interpretation.  Increasing auxiliary capacity changes the induced achieved sparsity and can improve accuracy, but the resulting variants are denser on the original coordinates (this phenomenon is again noted in \Cref{extra capacity section}).  These rows should therefore be read as tracing an induced sparsity/accuracy tradeoff, not as same-sparsity comparisons.  The frozen-auxiliary variant remains strong, indicating that trainability of the auxiliary coordinates is not essential for them to function as a competitive reservoir.

\section{Additional motivation for theoretical results and proofs}
\label{proofs appendix}

    In the same paper in which Ramanujan et al.~posed the strong lottery ticket hypothesis, they also introduced an algorithm called \texttt{edge-popup} to provide empirical evidence for it. Pseudocode for the procedure is included below in \Cref{original edge popup algorithm}.  Recall that scores are ranked by magnitude; i.e. \(\operatorname{TopKMask}(S;k)\) denotes the mask retaining the largest entries of \(|S|\).
    
    \begin{algorithm}
        \caption{Classical \texttt{edge-popup}}
        \label{original edge popup algorithm}
        \begin{algorithmic}[1]\small
            \Require Frozen randomly initialized weights $\{W_t,b_t\}_{t=1}^{L}$, layerwise densities $\{k_t\}_{t=1}^{L}$, training data, loss function $\mathcal{L}$, learning rate $\eta$
            \State Initialize score tensors $\{S_t,c_t\}_{t=1}^{L}$, where $S_t$ has the same shape as $W_t$ and $c_t$ has the same shape as $b_t$
            \For{each training iteration}
                \For{$t=1,\dots,L$}
                    \State $H_t \gets \operatorname{TopKMask}(S_t;k_t)$
                    \State $h_t \gets \operatorname{TopKMask}(c_t;k_t)$
                    \State $\widetilde{W}_t \gets W_t \odot H_t$
                    \State $\widetilde{b}_t \gets b_t \odot h_t$
                \EndFor
                \State Compute the network output using $\{(\widetilde{W}_t,\widetilde{b}_t)\}_{t=1}^{L}$
                \State Compute loss $\mathcal{L}$
                \State Update the scores by gradient descent with straight-through gradients:
                \[
                S_t \gets S_t - \eta \nabla_{S_t}\mathcal{L},
                \qquad
                c_t \gets c_t - \eta \nabla_{c_t}\mathcal{L}
                \]
            \EndFor
            \State \Return Final masks $\{H_t,h_t\}_{t=1}^{L}$
        \end{algorithmic}
    \end{algorithm}
    
    Ramanujan et al.~proved a useful local monotonicity statement for this procedure: under appropriate smoothness assumptions, if one swaps in a nonzero number of edges in a single layer while keeping the remainder of the network fixed, then the minibatch loss decreases.

    \begin{theorem}[Ramanujan et al.~\cite{ramanujan}]
        \label{decreasing loss edge popup}
        When a nonzero number of edges are swapped in one layer by a straight-through gradients update in \Cref{original edge popup algorithm} and the rest of the network remains fixed, the minibatch loss decreases, provided the loss is sufficiently smooth.
    \end{theorem}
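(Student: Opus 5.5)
We follow the argument of Ramanujan et al.: write the loss as a function of a single layer's input--output interactions and track how each score update changes the ranking. Fix a layer $t$ and write its pre-activations as $\mathcal{I}_v=\sum_{u} H_{vu} W_{vu}\mathcal{Z}_u$, where $\mathcal{Z}_u$ are the (fixed) inputs to the layer. Since the rest of the network is held fixed, the minibatch loss depends on the layer only through $\mathcal{I}=(\mathcal{I}_v)_v$.

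Under the straight-through estimator, the gradient with respect to a score entry is $\partial\mathcal{L}/\partial S_{vu}=(\partial\mathcal{L}/\partial\mathcal{I}_v)\,W_{vu}\mathcal{Z}_u$. This is the same quantity regardless of whether edge $(u,v)$ is currently active. For positive scores, which is the regime we will assume at first, the update therefore gives
\[
\tilde S_{vu}=S_{vu}-\eta\,\frac{\partial\mathcal{L}}{\partial\mathcal{I}_v}W_{vu}\mathcal{Z}_u .
\]

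Now suppose a swap occurs: edge $(i,\rho)$ enters the top-$k$ set and edge $(j,\rho)$ leaves it. This forces $S_{\rho i}<S_{\rho j}$ and $\tilde S_{\rho i}>\tilde S_{\rho j}$. Subtracting the two update equations gives
\[
\eta\,\frac{\partial\mathcal{L}}{\partial\mathcal{I}_\rho}\bigl(W_{\rho j}\mathcal{Z}_j-W_{\rho i}\mathcal{Z}_i\bigr)>0 .
\]
We then observe that $\tilde{\mathcal{I}}_\rho-\mathcal{I}_\rho=W_{\rho i}\mathcal{Z}_i-W_{\rho j}\mathcal{Z}_j$. Summing over all swaps, the first-order change in loss is
\[
\sum_\rho \frac{\partial\mathcal{L}}{\partial\mathcal{I}_\rho}\bigl(\tilde{\mathcal{I}}_\rho-\mathcal{I}_\rho\bigr)<0 ,
\]
since each swapped pair contributes a strictly negative term. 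For multiple swaps into the same node we pair up entering and leaving edges: any entering edge outranks any leaving edge after the update and was outranked before it, so every such pair gives a strictly negative term, and the pairing then yields strict negativity of the sum.

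The ``sufficiently smooth'' hypothesis is what upgrades this first-order statement. Taylor expanding $\mathcal{L}$ in $\mathcal{I}$ gives
\[
\mathcal{L}(\tilde{\mathcal{I}})=\mathcal{L}(\mathcal{I})+\nabla\mathcal{L}\cdot(\tilde{\mathcal{I}}-\mathcal{I})+O\bigl(\|\tilde{\mathcal{I}}-\mathcal{I}\|^2\bigr).
\]
We interpret the hypothesis as saying that the quadratic remainder is dominated by the strictly negative linear term, which is how Ramanujan et al.\ state it. Under that interpretation the loss strictly decreases.

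The main obstacle is sign handling, because scores are ranked by $|S|$. Away from zero, $\partial|S|/\partial S=\operatorname{sign}(S)$, and the inequalities above become inequalities on $|S|$. The effective update is then
\[
|\tilde S_{vu}|=|S_{vu}|-\eta\,\frac{\partial\mathcal{L}}{\partial\mathcal{I}_v}W_{vu}\mathcal{Z}_u ,
\]
valid as long as no sign changes occur. We plan to make this exact by assuming, as the paper does, that the gradient is taken through $|S|$ away from zero and that $\eta$ is small enough that no score crosses zero. Under that assumption the argument above goes through verbatim with $|S|$ in place of $S$.
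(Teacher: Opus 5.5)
The paper does not prove this statement itself; it cites it from Ramanujan et al. So the benchmark is their first-order Taylor argument. Your reconstruction of its core is right: the straight-through gradient $g_{vu}=\frac{\partial\mathcal{L}}{\partial\mathcal{I}_v}W_{vu}\mathcal{Z}_u$, the rank-reversal inequality, and the negativity of the linear term.

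There is, however, a genuine (though easily repaired) gap in how you sum over swaps. In \Cref{original edge popup algorithm}, $\operatorname{TopKMask}(S_t;k_t)$ acts on the whole layer tensor. So only the \emph{total} number $\lfloor k_t d_t\rfloor$ of retained edges in the layer is conserved, not the in-degree of each node. An entering edge $(i,\rho')$ can be offset by a leaving edge $(j,\rho)$ with $\rho\neq\rho'$, and a node may gain edges while losing none. In that case several things break:
your identity $\tilde{\mathcal{I}}_\rho-\mathcal{I}_\rho=W_{\rho i}\mathcal{Z}_i-W_{\rho j}\mathcal{Z}_j$ no longer holds;
pairing entering with leaving edges \emph{at the same node} cannot in general be done;
and an individual node's term can even be positive. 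For example, if $\rho'$ only gains $(i,\rho')$, its contribution is $g_{\rho' i}$, and the ranking only tells you $g_{\rho' i}<g_{\rho j}$, which permits $g_{\rho' i}>0$. Hence $\sum_\rho \frac{\partial\mathcal{L}}{\partial\mathcal{I}_\rho}(\tilde{\mathcal{I}}_\rho-\mathcal{I}_\rho)<0$ cannot be established node by node.

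The repair is to regroup the linear term by edges rather than nodes:
\[
\sum_\rho \frac{\partial\mathcal{L}}{\partial\mathcal{I}_\rho}\bigl(\tilde{\mathcal{I}}_\rho-\mathcal{I}_\rho\bigr)=\sum_{v,u} g_{vu}\bigl(\tilde H_{vu}-H_{vu}\bigr)=\sum_{e\in E_{\mathrm{in}}} g_e-\sum_{e\in E_{\mathrm{out}}} g_e .
\]
Here $\tilde H$ is the post-update mask, and $E_{\mathrm{in}}$, $E_{\mathrm{out}}$ are the sets of entering and leaving edges. We have $|E_{\mathrm{in}}|=|E_{\mathrm{out}}|$ because the layer keeps exactly $\lfloor k_t d_t\rfloor$ entries. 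Fix any bijection $\pi:E_{\mathrm{in}}\to E_{\mathrm{out}}$ across the whole layer. Your own observation (each entering edge was outranked by each leaving edge before the update and outranks it afterwards) gives $|S_e|<|S_{\pi(e)}|$ and $|\tilde S_e|>|\tilde S_{\pi(e)}|$. Hence $\eta\,(g_{\pi(e)}-g_e)>|S_{\pi(e)}|-|S_e|>0$, so every paired term is negative whether or not $e$ and $\pi(e)$ share a target node. A few side conditions: this needs no ties among the relevant scores; $g_{vu}$ should be read as a sum over the minibatch, which is harmless by linearity; and if the layer's bias mask also changes, apply the same pairing separately to the bias top-$k$.

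The rest of your plan is sound. Your magnitude-score step is correct and is necessary rather than cosmetic. The identity $|\tilde S|=|S|-\eta g$ holds exactly when no score changes sign. Without that assumption, a score overshooting zero can enter the top-$k$ with $g_e>0$ while displacing an edge with $g_{\pi(e)}=0$, increasing the loss to first order. Reading ``sufficiently smooth'' as dominance of the remainder by the linear term is likewise a real hypothesis, not a consequence of small $\eta$, since a swap moves $\mathcal{I}$ by an amount independent of $\eta$. That reading matches the informal source statement.
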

    
    This makes \texttt{edge-popup} quite attractive in principle. The difficulty, as discussed in the main paper, is choosing layerwise densities $k_1,\ldots,k_\ell$.  We illustrate this instability with a toy experiment in \Cref{fig:k-bottleneck}. Performance depends sharply on the density choice, both globally and layerwise, and repeated runs do not suggest any simple universal rule for choosing $k$. This is the bottleneck that prevents \texttt{edge-popup} from fully achieving its intended aim.
    \begin{figure}[t]
        \centering
        \includegraphics[width=0.8\linewidth]{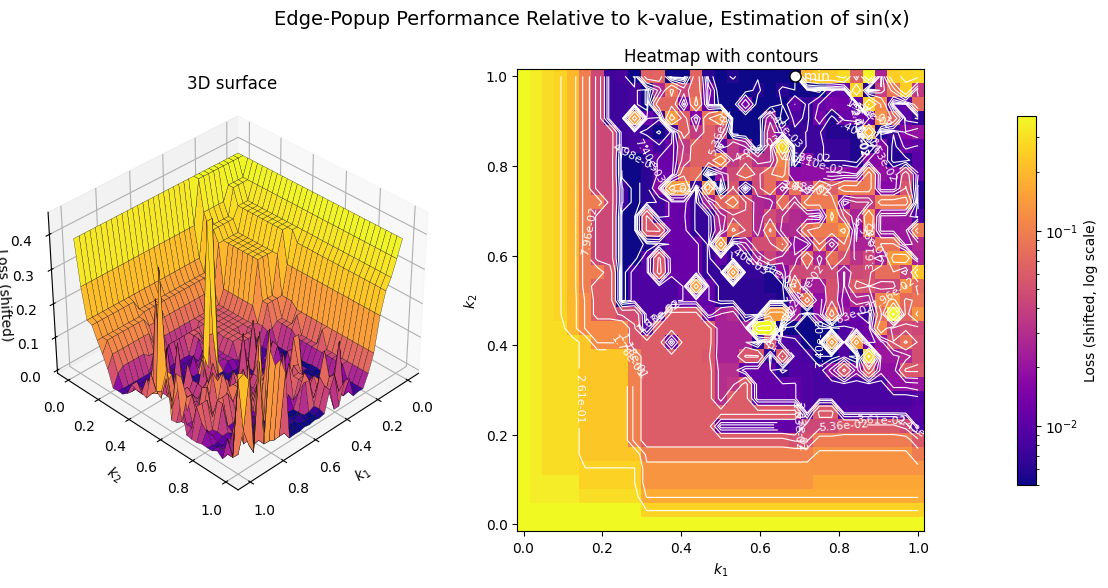}
        \caption{The $k$-selection bottleneck:  A network with a single width 32 hidden layer was trained using \texttt{edge-popup} to approximate a sine function across layerwise density tuples.  Left: The mean-squared errors of the resulting masked networks are plotted above over a two-layer grid of the possible layerwise densities.  Right: A heat map of the same figure with contour lines.  As is apparent, the landscape is jagged, highly non-uniform, and has little to no discernible structure.}
        \label{fig:k-bottleneck}
    \end{figure}
    
    On toy networks, one can brute-force all layerwise density choices and, when one does so, \texttt{edge-popup} often performs very well. \Cref{fig:toy-sine} shows the issue on small, synthetic problems where exhaustive search is actually feasible. Thus, the problem is not that \texttt{edge-popup} is incapable of finding strong masks. Rather, the problem is that the search over $k$ becomes prohibitive extremely quickly.
    
    \begin{figure}[t]
        \centering
        \includegraphics[width=1.0\linewidth]{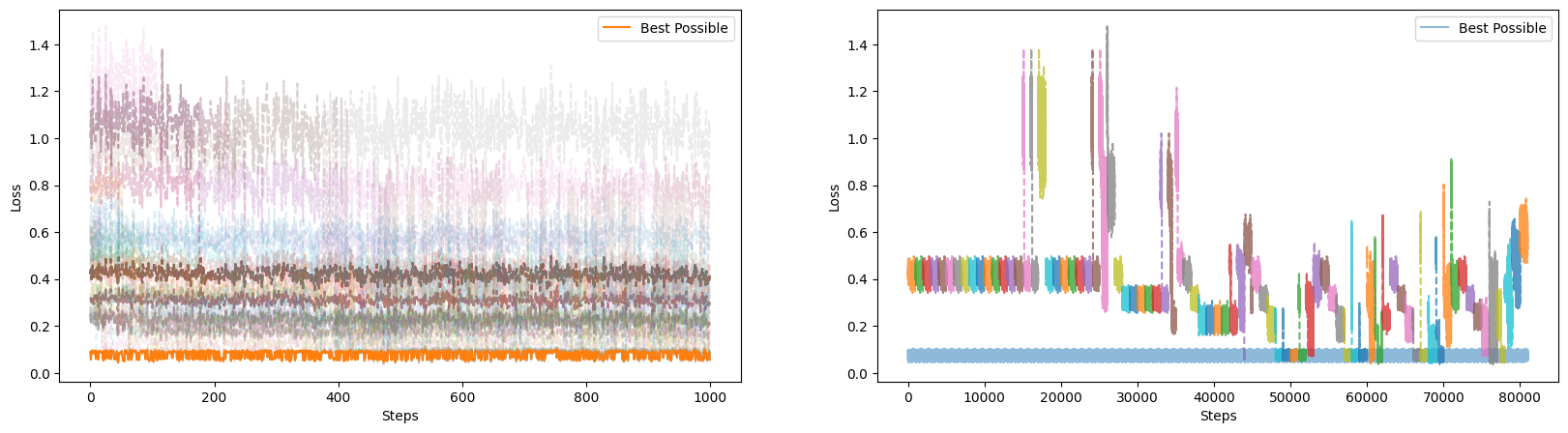}
        \caption{A network with a single width 8 hidden layer was trained using \texttt{edge-popup} to approximate a sine function across all layerwise density tuples.  The loss curves are compared to the loss of the best possible mask, found via an exhaustive search of all possible masks.  In the left chart, all loss curves are overlaid.  In the right, each is plotted sequentially.  This reveals that \texttt{edge-popup} can match the best brute-force mask on a genuinely tiny search space, but also reveals that the dependence on $k$-tuples is highly nonuniform.}
        \label{fig:toy-sine}
    \end{figure}
    
    Despite the difficulties associated with choosing a good value of $k$, there is one important regime in which the sparsity-selection problem largely disappears: the case of already sparse weight tensors. That is, for weights that are already equal to zero, the corresponding mask entry has no effect in terms of the resulting subnetwork.
    
    \begin{example}
        Observe that the following two masks are distinct (as they differ in the entry in the second row and second column) but produce the same result when applied to the weight matrix below:
        $$
            \begin{bmatrix}
                0 & 1  \\
                1 & \textcolor{red}{0} \\
            \end{bmatrix} \odot \begin{bmatrix}
                w_{1,1} & w_{1,2} \\
                w_{2,1} & 0 \\
            \end{bmatrix}
            =\begin{bmatrix}
                0 & w_{1,2} \\
                w_{2,1} & 0 \\
            \end{bmatrix}=\begin{bmatrix}
                0 & 1  \\
                1 & \textcolor{red}{1} \\
            \end{bmatrix}\odot \begin{bmatrix}
                w_{1,1} & w_{1,2} \\
                w_{2,1} & 0 \\
            \end{bmatrix}
        $$
        
        Thus distinct masks can represent the same masked tensor whenever they differ only on locations where the weights are already zero.
    \end{example}
    
    This suggests a natural modification of \texttt{edge-popup}. This is not technically needed for the basic \texttt{double-scoring} construction; but instead it records a quantitative selector-mask version of the same principle.  The core idea is that if many entries of a weight tensor are already zero, then there is no need to assign scores to all coordinates. Rather, one need only score the entries whose mask values actually matter. More generally, there are situations in which one may wish to apply \texttt{edge-popup} only to a specified subset of the weights. For instance, one might wish to preserve a distinguished part of the tensor, or enforce that masking respects some additional structural constraint such as symmetry. This motivates the following selector-mask version of \texttt{edge-popup} (\Cref{alg:network-selective-edge-popup}).

    \begin{algorithm}[ht]
        \caption{Network-level \texttt{edge-popup} with scoring selection}
        \label{alg:network-selective-edge-popup}
        \begin{algorithmic}[1]\small
            \Require Frozen network parameters $\{W_t,b_t\}_{t=1}^{L}$, selector masks $\{Q_t,q_t\}_{t=1}^{L}$, density parameters $\{k_t\}_{t=1}^{L}$, training data, loss function $\mathcal{L}$, learning rate $\eta$
            \For{$t=1,\dots,L$}
                \State Let $I_t = \{ \alpha : (Q_t)_\alpha = 1\}$ and $J_t = \{ \beta : (q_t)_\beta = 1\}$
                \State Initialize score vectors $s_t \in \mathbb{R}^{|I_t|}$ and $c_t \in \mathbb{R}^{|J_t|}$
            \EndFor
            \For{each training iteration}
                \For{$t=1,\dots,L$}
                    \State $u_t \gets \operatorname{TopKMask}(s_t;k_t)$
                    \State $v_t \gets \operatorname{TopKMask}(c_t;k_t)$
                    \State Extend $u_t$ to a full mask $H_t$ on $W_t$ using selector mask $Q_t$
                    \State Extend $v_t$ to a full mask $h_t$ on $b_t$ using selector mask $q_t$
                    \State $\widetilde{W}_t \gets W_t \odot H_t$
                    \State $\widetilde{b}_t \gets b_t \odot h_t$
                \EndFor
                \State Compute the network output using $\{(\widetilde{W}_t,\widetilde{b}_t)\}_{t=1}^{L}$
                \State Compute loss $\mathcal{L}$
                \For{$t=1,\dots,L$}
                    \State Update $s_t$ and $c_t$ by gradient descent with straight-through gradients
                \EndFor
            \EndFor
            \State \Return Final masks $\{H_t,h_t\}_{t=1}^{L}$
        \end{algorithmic}
    \end{algorithm}
    
    The point of \Cref{alg:network-selective-edge-popup} is that it decouples the set of coordinates that receive popup scores from the full support of the weight tensor. In particular, when a layer is already sparse, one may score all nonzero entries and only some of the zero entries, thereby shrinking the effective search space without losing expressive power.
    
    We first record a quantitative version of this principle.
    
    \begin{theorem}
    \label{thm:quantitative-half-density}
        Let $M$ be a weight tensor with $n$ entries and sparsity
        \[
        \frac{1}{2}-\frac{1}{n} \leq s \leq \frac{1}{2}-\frac{2}{n}.
        \]
        Suppose one applies \Cref{alg:network-selective-edge-popup} to $M$ by assigning popup scores to all nonzero entries of $M$ and to any additional $n(1-s)-2$ zero entries of $M$. Then, by fixing the target mask density at $k=\frac{1}{2}$, one can represent every function obtainable by a nontrivial masking of $M$.
    \end{theorem}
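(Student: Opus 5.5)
The plan is a direct counting argument, in the spirit of \Cref{prop:half-density} but restricted to the scored coordinates of \Cref{alg:network-selective-edge-popup}.

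\textbf{Setup.} Write $z=sn$ for the number of zero entries of $M$ and $p=n(1-s)$ for the number of nonzero entries, so $z+p=n$. The scored index set $I$ consists of all $p$ nonzero positions together with $p-2$ zero positions, so
\[
|I| = 2p-2 .
\]
Top-$k$ at density $k=\tfrac12$ on $I$ therefore retains exactly $p-1$ scored coordinates. Every unscored coordinate is a zero of $M$, so its mask value is irrelevant to $M\odot H$.

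\textbf{Feasibility of the selection.} First I check that the prescribed $p-2$ zero entries actually exist, i.e.\ $p-2\le z$, which is equivalent to $2p\le n+2$, or $s\ge \tfrac12-\tfrac1n$. This is exactly what the sparsity hypothesis is meant to guarantee. As printed, the two bounds on $s$ appear to be in the reverse order (the lower bound exceeds the upper bound). I would read the hypothesis as requiring that $sn$ be an integer and that $s$ sit just below $\tfrac12$, so that $p-2\le z$. The proof should state explicitly that this is the only place the sparsity assumption enters.

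\textbf{Main step.} Any masking of $M$ produces $M\odot H$, and this is determined entirely by the subset $A$ of nonzero positions that $H$ keeps. I interpret ``nontrivial'' as $\emptyset\neq A\subsetneq \mathrm{supp}(M)$, so that $1\le |A|\le p-1$. Given such an $A$, I choose the retained scored set
\[
R = A \cup Z_A ,
\]
where $Z_A$ consists of $p-1-|A|$ of the scored zero positions. This is possible because
\[
0\le p-1-|A|\le p-2 ,
\]
using $|A|\ge 1$ for the upper bound and $|A|\le p-1$ for the lower bound. Then $|R|=p-1$, which is exactly the top-$\tfrac12$ count. Assigning scores with $|s_\alpha|$ large for $\alpha\in R$ and small for $\alpha\in I\setminus R$ makes $\operatorname{TopKMask}$ select exactly $R$, with no ties. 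The extended mask $H$ then satisfies $M\odot H = M\odot \mathbf 1_A$, which is the same tensor as the original masking. Hence every function obtainable by a nontrivial masking of $M$ is represented.

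\textbf{Expected obstacle.} The only delicate point is the boundary bookkeeping, namely showing that both extremes fit. At $|A|=1$ the padding needs $p-2$ zeros, which is exactly the number scored. At $|A|=p-1$ no padding is needed. This is also why the trivial masks are excluded: $A=\mathrm{supp}(M)$ would require $p$ retained coordinates, and $A=\emptyset$ would require $p-1$ zeros, and neither is available. Reconciling this with the stated form of the sparsity interval (and any floor in $\lfloor |I|/2\rfloor$ if $n(1-s)$ is not an integer) is the part I would write most carefully.
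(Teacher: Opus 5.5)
Your proposal is correct and is essentially the paper's argument. The paper phrases the same count in proportions: at $k=\frac12$ the achievable fraction $k_1$ of retained nonzero entries lies in an interval centered at $\frac12$ of width $\frac{sz}{1-s}$, and solving for the number $t=n(1-s)-2$ of scored zeros needed to cover $\frac{1}{n(1-s)},\dots,\frac{n(1-s)-1}{n(1-s)}$ is exactly your padding of $A$ with $p-1-|A|$ scored zeros; your explicit retained-set construction and your observation that the hypothesis enters only through $n(1-s)-2\le sn$ (i.e.\ $s\ge\frac12-\frac1n$, the printed bounds being reversed) spell out two points the paper leaves implicit.
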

    \begin{proof}
        Let $M_0=\{i:M_i=0\}$ and $M_0^c=\{i:M_i\neq 0\}$, so that $|M_0|=sn$ and $|M_0^c|=n(1-s)$.  Suppose that popup scores are assigned to all nonzero entries of $M$ and to $t$ zero entries of $M$. Write $z:=\frac{t}{|M_0|}$, so that $sz=t/n$ is the proportion of tracked zero entries relative to the total size of $M$. The total number of scored entries is therefore
        \[
        n(1-s)+nsz=n\bigl(1-s(1-z)\bigr).
        \]
        
        Fix a density parameter $k$, and suppose the resulting mask retains $m$ of the scored entries. Then
        \[
        m=kn\bigl(1-s(1-z)\bigr).
        \]
        Write $m=m_1+m_2$, where $m_1$ is the number of retained nonzero entries and $m_2$ is the number of retained scored zero entries. If
        \[
        k_1:=\frac{m_1}{n(1-s)}
        \qquad\text{and}\qquad
        k_2:=\frac{m_2}{nsz},
        \]
        then
        \[
        k_1n(1-s)+k_2nsz=kn\bigl(1-s(1-z)\bigr).
        \]
        Since $0\leq k_2\leq 1$, it follows that
        \[
        \frac{k\bigl(1-s(1-z)\bigr)-sz}{1-s}\leq k_1\leq \frac{k\bigl(1-s(1-z)\bigr)}{1-s}.
        \]
        Thus, for fixed $s$, $z$, and $k$, the proportion $k_1$ of retained nonzero entries ranges over an interval of width $\frac{sz}{1-s}$.  Now, choose $k$ so that this interval is centered at $\frac{1}{2}$. One convenient choice is
        \[
        k=\frac{1-s+sz}{2\bigl(1-s(1-z)\bigr)}=\frac{1}{2},
        \]
        which is the midpoint of the admissible interval above. With this choice, $k_1$ ranges over an interval centered at $\frac12$ of width $\frac{sz}{1-s}$.
        
        The nontrivial masking proportions on the nonzero entries are precisely
        \[
        \frac{1}{n(1-s)},\frac{2}{n(1-s)},\dots,\frac{n(1-s)-1}{n(1-s)}.
        \]
        Hence, it is enough for the interval of possible $k_1$ values to contain all of these, which will occur provided its width is $1-\frac{2}{n(1-s)}$.  Thus, we impose
        \[
        \frac{sz}{1-s}=1-\frac{2}{n(1-s)}.
        \]
        Solving for $z$ gives
        \[
        z=\frac{n(1-s)-2}{ns}.
        \]
        Equivalently, the number of zero entries that must be assigned scores is
        \[
        t=|M_0|z=sn\cdot \frac{n(1-s)-2}{ns}=n(1-s)-2.
        \]
        Therefore, by tracking scores on all nonzero entries of $M$ and on any additional $n(1-s)-2$ zero entries, the selector-mask version of \texttt{edge-popup} at fixed density $k=\frac12$ can realize every nontrivial masking pattern on the nonzero entries of $M$. Since the zero entries of $M$ do not affect the represented function, this proves the claim.
    \end{proof}
    
    \begin{remark}
        If one also wishes to include the two trivial masks (the all-zero mask and the all-one mask on the nonzero support), then it is enough to score $n(1-s)$ zero entries rather than $n(1-s)-2$, provided $s\geq \frac{1}{2}$.  Since this is wasteful in practice (one can simply check these trivial masks without any score training), this is likely undesirable.
    \end{remark}
    
    The content of \Cref{thm:quantitative-half-density} is that, on a sufficiently sparse tensor, selector-based scoring allows one to simulate the full family of nontrivial masking levels while keeping the target density fixed at $k=\frac{1}{2}$. Thus, the apparent freedom in the density parameter is, in this regime, largely illusory: one may trade a search over densities for a suitable choice of scored coordinates.  This has a particularly clean consequence when one is interested only in representability rather than in the bookkeeping of scored zero coordinates.  This is the content of \Cref{prop:half-density} in the main paper, the proof of which is included below.

    \paragraph{Proof of \Cref{prop:half-density}}
    \begin{proof} 
        Let $P$ be the set of indices at which $W$ is nonzero, and let $Z$ be the set of indices at which $W$ is zero. Since at least half of the entries of $W$ vanish, we have $|Z|\ge \lceil d/2 \rceil \geq \lfloor d/2 \rfloor=r$, and, hence, $|P|=d-|Z| \le d-\lceil d/2 \rceil=\lfloor d/2 \rfloor=r$.
        
        Let $M \in \{0,1\}^{m\times n}$ and define $M^\ast$ as follows. First, set $M^\ast_{ij}=M_{ij}$ for all $(i,j)\in P$.  This guarantees that $W\odot M$ and $W\odot M^\ast$ agree on every nonzero entry of $W$. Now add ones to $M^\ast$ on enough indices from $Z$ so that $M^\ast$ has exactly $r$ ones in total, and set all remaining entries on $Z$ equal to zero.
        
        This construction is always possible because the number of ones already prescribed on $P$ is at most $|P|\le r$, while there are at least $r$ indices available in $Z$. Moreover, every additional one placed on $Z$ multiplies a zero entry of $W$, and therefore does not change the masked tensor. Hence, $W \odot M = W \odot M^\ast$.
    \end{proof}
    The layerwise version of this then follows in \Cref{cor:network-half-density}.
    
    \paragraph{Proof of \Cref{cor:network-half-density}}
    \begin{proof}
        Apply \Cref{prop:half-density} independently to each layer.
    \end{proof}
    Applied layerwise, \Cref{cor:network-half-density} says that once each layer is at least half sparse, the entire search space of masked subnetworks can be represented using a single fixed density near $\frac12$ in every layer, thereby removing the principal \emph{expressive} obstruction.
    \begin{example}
        Simple sparse-network experiments line up with this picture (\Cref{fig:sparse-clustering}). When the base model is initialized with substantial sparsity, good performance tends to cluster near the half-density regime rather than requiring a finely tuned layerwise search.  The comparison is particularly stark when contrasted with \Cref{fig:k-bottleneck}:  in the sparse regime, the test accuracy landscape is substantially smoother with respect to variations in the layerwise $k$ vector.
        
        \begin{figure}[t]
            \centering
            \includegraphics[width=0.4\linewidth]{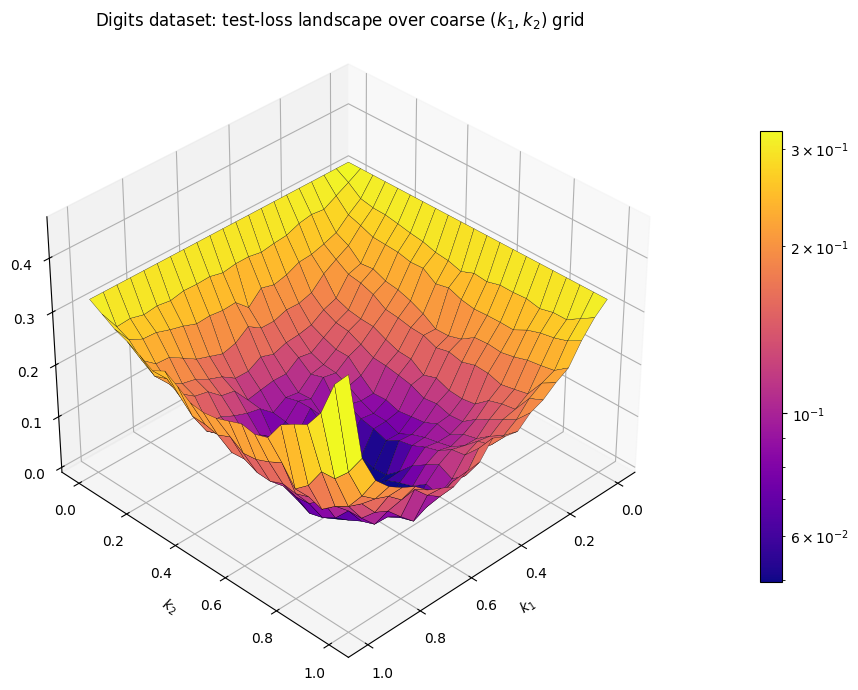}         \includegraphics[width=0.59\linewidth]{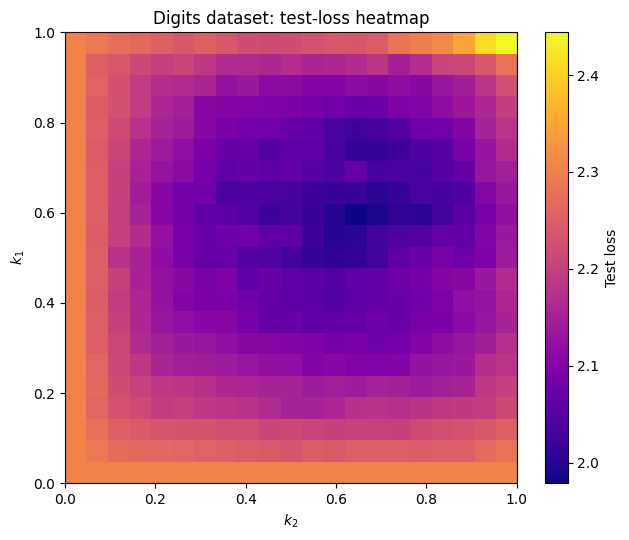}
            \caption{Left: test accuracy as a function of $k$-tuples for a 50\% sparsity two-layer network trained on the scikit-learn digits dataset.  The best performance begins to appear around the center of each $k$ interval.   Right: A heat map of test  losses for the same network across $k$-tuples.  The best-performing region concentrates near the fixed-density regime suggested by \Cref{prop:half-density}, consistent with the claim that the sparse regime reduces the effective burden of layerwise density selection.}
            \label{fig:sparse-clustering}
        \end{figure}
    \end{example}

    These observations lead naturally to our implementation of \texttt{double-scoring}, which treats a network as being synthetically embedded in a larger dense network.  This is enough by \Cref{prop:augmented-space}, the proof of which is given below.

    \paragraph{Proof of \Cref{prop:augmented-space}}
    \begin{proof}
    
        Let $\widehat{m}$ agree with $m$ on the first $d$ coordinates. If $m$ contains $q$ ones, then add exactly $d-q$ ones among the final $d$ coordinates of $\widehat{m}$ and set all remaining auxiliary coordinates to zero. Since the final $d$ coordinates of $\widehat{w}$ are zero, these additional ones do not affect the masked vector. By construction, $\widehat{m}$ has exactly $d$ ones and restricts to $m$ on the original coordinates.
    
    \end{proof}
    
    This is the key representational idea: by enlarging the \emph{score space}, a dense layer can be treated as though it were embedded inside a half-sparse augmented layer, without ever modifying the underlying weights.  This yields \texttt{double-scoring}.  The comparison of \texttt{double-scoring} to the embedded network in \Cref{prop:augmented-space} is not merely representationally equivalent to \texttt{edge-popup} on an augmented half-sparse system; the optimization dynamics agree exactly as well.
    
    \begin{theorem}[\texttt{double-scoring} as augmented \texttt{edge-popup}]
        \label{thm:double-scoring-equivalence}
        Fix a feed-forward network with frozen weights $\{W_t,b_t\}_{t=1}^L$. For each layer $t$, let
        \[
        \widehat{W}_t := (W_t,0)
        \qquad\text{and}\qquad
        \widehat{b}_t := (b_t,0),
        \]
        where the second half consists of zero entries and the concatenation is taken after flattening the tensors. Likewise, let
        \[
        \widehat{S}_t := (S_t,T_t)
        \qquad\text{and}\qquad
        \widehat{f}_t := (f_t,g_t)
        \]
        be the corresponding doubled score tensors.
        
        Consider the following two procedures, both run at density $1/2$:
        \begin{enumerate}
            \item classical \texttt{edge-popup} applied to the augmented tensors $(\widehat{W}_t,\widehat{b}_t)$ with scores $(\widehat{S}_t,\widehat{h}_t)$;
            \item \texttt{double-scoring} applied to the original tensors $(W_t,b_t)$ with score pairs $(S_t,T_t)$ and $(f_t,g_t)$, where the effective masks are obtained by restricting $\operatorname{TopKMask}(\widehat{S}_t;1/2)$ and $\operatorname{TopKMask}(\widehat{f}_t;1/2)$ to the original coordinates.
        \end{enumerate}
        Assume that the backward pass for $\operatorname{TopKMask}$ uses the usual straight-through estimator, i.e. it acts as the identity on gradients.
        
        Then, at every training iteration:
        \begin{enumerate}
            \item the masked parameters used by the two procedures agree on the original coordinates, and hence the network outputs and losses are identical;
            \item the gradients of the original score coordinates agree in the two procedures;
            \item the gradients of the auxiliary score coordinates are zero in both procedures.
        \end{enumerate}
        Consequently, after restricting to the original coordinates, \texttt{double-scoring} produces exactly the same score iterates, masks, network outputs, and losses as classical \texttt{edge-popup} on the augmented zero-padded network.
    \end{theorem}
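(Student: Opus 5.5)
The argument is an induction on the training iteration. The induction hypothesis is that both procedures enter iteration $j$ with identical augmented score tensors $\widehat{S}_t=(S_t,T_t)$ and $\widehat{f}_t=(f_t,g_t)$ for every layer $t$. At $j=0$ this holds because both procedures start from the same initialization. We then prove the three claims for a single iteration, which also shows that the hypothesis carries over to iteration $j+1$. The final "consequently" statement follows by concatenating the iterations.

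\emph{Forward pass (claim 1).} Both procedures compute $\widehat{H}_t=\operatorname{TopKMask}(\widehat{S}_t;1/2)$ and $\widehat{h}_t=\operatorname{TopKMask}(\widehat{f}_t;1/2)$ from the same scores. The masks therefore coincide exactly, with any tie-breaking in the top-$k$ rule fixed identically for both. For augmented \texttt{edge-popup}, the masked parameter is
\[
\widehat{W}_t\odot\widehat{H}_t=(W_t\odot \widehat{H}_t|_{\mathrm{orig}},\,0\odot\widehat{H}_t|_{\mathrm{aux}})=(W_t\odot H_t,0),
\]
and the bias is handled the same way. In the augmented network the zero-padded coordinates are extra weight entries, and they contribute nothing to any pre-activation. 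So the layer map equals the original layer map with $\widetilde{W}_t=W_t\odot H_t$ and $\widetilde{b}_t=b_t\odot h_t$. Composing over layers gives identical outputs, and on the same minibatch the losses are identical too. The step that needs care here is making the "augmented network" precise: the padded coordinates should be read as a flattened extension of the parameter vector whose contribution to the layer's linear map is $\sum 0\cdot(\cdot)=0$. The cleanest way to do this is to treat the layer as linear in its flattened parameter vector, $\widehat{W}_t\mapsto L_t(\widehat{W}_t)$, where $L_t$ ignores or zero-weights the auxiliary slots.

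\emph{Backward pass (claims 2 and 3).} Under the straight-through estimator,
\[
\nabla_{\widehat{S}_t}\mathcal{L}=\nabla_{\widehat{H}_t}\mathcal{L}\cdot(\text{derivative of }|\cdot|),
\qquad
\nabla_{\widehat{H}_t}\mathcal{L}=\widehat{W}_t\odot \nabla_{\widetilde{\widehat{W}}_t}\mathcal{L},
\]
computed coordinatewise, with the sign factor coming from the $|S|$ parameterization. On auxiliary coordinates the weight factor is $0$, so the gradient vanishes. This holds in augmented \texttt{edge-popup} because $\widehat{W}_t$ is zero there, and in \texttt{double-scoring} because $T_t,g_t$ do not enter the loss except through the straight-through path, whose chain rule again multiplies by the zero padding. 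That proves claim 3. On original coordinates both gradients equal $W_t\odot\nabla_{\widetilde{W}_t}\mathcal{L}$ times the same sign factor. This follows from claim 1, since the loss and all upstream activations agree, so the gradient with respect to $\widetilde{W}_t$ agrees. That proves claim 2.

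\emph{Main obstacle.} The delicate point is claim 3 for \texttt{double-scoring}. Its definition does not literally contain zero weights, so we have to fix what "straight-through gradients" means for $T_t$. I would formalize the STE as treating $\widehat{H}_t$ as the identity of $\widehat{S}_t$ in the backward pass, and then compute the derivative of $\mathcal{L}$ with respect to the auxiliary entries of the mask. Since the restricted mask $H_t$ does not depend on them in the forward computation graph, that derivative is $0$. This is consistent with the zero-padding interpretation from the forward pass, which shows the two conventions agree. With identical gradients and identical optimizer updates, the scores entering iteration $j+1$ coincide, which closes the induction.
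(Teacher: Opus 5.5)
Your proposal is correct and follows essentially the same route as the paper. Both arguments use the fact that the loss sees $\widehat H_t$ only through its original-coordinate restriction, equivalently through $(W_t\odot H_t,0)$. Hence the straight-through gradient with respect to the mask is $\bigl(\partial\mathcal L/\partial(W_t\odot H_t)\odot W_t,\,0\bigr)$ in both procedures, and the common factor $\operatorname{sign}(\widehat S_t)$ then gives agreement of the raw-score gradients and vanishing auxiliary gradients. Your explicit induction over iterations, the fixed tie-breaking convention, and the observation that identical gradients with identical optimizer state give identical updates are bookkeeping the paper leaves implicit, but they are exactly what justifies the ``consequently'' clause.
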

    \begin{proof}
        We compare gradients first with respect to the magnitude scores \(A_t=|\widehat S_t|\). Since the \texttt{double-scoring} loss depends on \(\widehat H_t\) only through its restriction \(H_t\) to the original coordinates, its gradient with respect to \(\widehat H_t\) has the form
        \[
            \frac{\partial \mathcal L}{\partial \widehat H_t} = \left(\frac{\partial \mathcal L}{\partial H_t},0\right).
        \]
        Under the straight-through estimator for the hard top-\(k\) map, this is also the gradient with respect to the magnitude scores \(A_t\). Thus the auxiliary
        magnitude-score coordinates receive zero gradient.
        
        For classical \texttt{edge-popup} on the augmented zero-padded tensor, the loss depends on \(\widehat H_t\) only through
        \[
            \widehat W_t\odot \widehat H_t=(W_t\odot H_t,0).
        \]
        Consequently,
        \[
            \frac{\partial \mathcal L}{\partial \widehat H_t}=\frac{\partial \mathcal L}{\partial(\widehat W_t\odot \widehat H_t)} \odot \widehat W_t = \left( \frac{\partial \mathcal L}{\partial(W_t\odot H_t)}\odot W_t,0\right),
        \]
        which is the same gradient obtained in the restricted \texttt{double-scoring} formulation. Finally, gradients with respect to the raw scores are obtained by multiplying by \(\operatorname{sign}(\widehat S_t)\), away from zero. Since the raw scores are the same in the two procedures, this sign factor is identical, and the raw-score gradients agree as well.  The argument for the bias terms is analogous.
    \end{proof}
    \begin{remark}
        In the hard restricted-mask formulation above, the auxiliary score coordinates participate in the top-\(k\) competition but receive zero gradient, because the corresponding augmented weights are identically zero. Thus, with identical initialization, deterministic optimizers, and zero score weight decay, freezing the auxiliary scores gives exactly the same iterates as treating them as trainable parameters. The frozen-auxiliary ablation in \Cref{ablations subsection} is consistent with this prediction: although run as a separately initialized variant, it performs nearly identically to the fully parameterized augmented version.
    \end{remark}
    This allows us to directly inherit the local monotonicity result (\Cref{decreasing loss edge popup}) obtained by Ramanujan et al.
    \begin{corollary}
        \label{cor:double-scoring-monotonicity}
        Any local monotonicity result (e.g. \Cref{decreasing loss edge popup}) for classical \texttt{edge-popup} at density $1/2$ applies verbatim to \texttt{double-scoring} via the augmented zero-padded realization of \Cref{thm:double-scoring-equivalence}.
    \end{corollary}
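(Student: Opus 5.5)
The plan is to reduce the corollary to \Cref{thm:double-scoring-equivalence} by a transport-of-structure argument. A local monotonicity result for classical \texttt{edge-popup} is a statement about one straight-through update of \Cref{original edge popup algorithm}. It has the form: if the current scores, masks and minibatch loss are given, and the update swaps a nonzero number of edges in one layer with the rest of the network fixed, then the loss after the update is smaller, provided the loss is sufficiently smooth. The goal is to show that every quantity in this hypothesis and conclusion takes the same value under \texttt{double-scoring} as under classical \texttt{edge-popup} at density $1/2$ on the zero-padded network $\{(\widehat W_t,\widehat b_t)\}$.

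First, I would fix a \texttt{double-scoring} run and build its companion run of augmented \texttt{edge-popup} with the same initial scores $\widehat S_t=(S_t,T_t)$ and $\widehat f_t=(f_t,g_t)$. By induction on iterations, \Cref{thm:double-scoring-equivalence} gives three facts at every step. The score iterates coincide on all coordinates: original coordinates have equal gradients, and auxiliary coordinates have zero gradient in both runs. The top-$k$ masks $\widehat H_t$ therefore coincide. Finally, the network outputs and minibatch losses coincide.

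Next, I would check that the hypotheses of the monotonicity theorem transfer. The main point is the meaning of ``a nonzero number of edges swapped in one layer.'' In the augmented network a swap may exchange an original coordinate with an auxiliary one. In \texttt{double-scoring} that corresponds to an original edge entering or leaving the effective mask $H_t$. I would argue that it suffices to phrase everything in the augmented realization: the theorem is applied to the augmented \texttt{edge-popup} run, which is literally an instance of \Cref{original edge popup algorithm} at density $1/2$. Smoothness of the loss in the augmented masked parameters is equivalent to smoothness in the original ones, because the map $(W_t\odot H_t,0)\mapsto W_t\odot H_t$ is a linear isometric identification. ``The rest of the network remains fixed'' is preserved because the runs coincide layer by layer.

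Finally, I would conclude that the conclusion, the decrease of the minibatch loss, holds in the augmented run and hence, by equality of losses, in the \texttt{double-scoring} run.

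I expect the main obstacle to be a subtle one. Ramanujan et al.'s argument uses swaps whose score ordering changes because of a gradient step. Swaps involving auxiliary coordinates happen only because original scores move past stationary auxiliary scores. Such swaps multiply a zero weight, so their contribution to the first-order loss change is zero rather than negative. I would handle this by noting that in the augmented setting these are still legitimate edge swaps of \Cref{original edge popup algorithm}, so the theorem applies verbatim as stated. If the original proof requires strict inequality per swap, I would restrict to swaps that change the effective original mask. Zero-weight swaps do not alter the represented network, so they contribute nothing to the loss change, and the argument goes through as long as at least one effective swap occurs.
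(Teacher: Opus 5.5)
Your proposal is correct and follows the paper's proof, which likewise just observes via \Cref{thm:double-scoring-equivalence} that \texttt{double-scoring} is literally classical \texttt{edge-popup} on the zero-padded network, with identical iterates, masks and losses, so any one-step loss-decrease statement for the latter transfers verbatim. Your worry in the last paragraph is unfounded, so no fallback is needed: a swap between an original coordinate and a stationary auxiliary one changes the pre-activation by the nonzero original-edge term, and because only the original score moved, the sign argument of Ramanujan et al.\ still gives a strictly negative first-order change (it is just their case with $w_{j\rho}=0$).
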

    \begin{proof}
        By \Cref{thm:double-scoring-equivalence}, \texttt{double-scoring} is exactly classical \texttt{edge-popup} on the augmented zero-padded network after restriction to the original coordinates. Hence any one-step loss improvement statement for classical \texttt{edge-popup} transfers immediately.
    \end{proof}
    
    \begin{example}
        To demonstrate with a toy example, the effect of \texttt{double-scoring} is illustrated in \Cref{fig:double-score-compare-1k}. The fixed-density procedure tracks the outcome of an exhaustive search over layerwise densities without requiring separate runs for each choice of $(k_1,\dots,k_L)$. In particular, it recovers performance comparable to the best density configuration while avoiding the associated combinatorial cost.
        
        \begin{figure}[t]
            \centering
            \includegraphics[width=0.5\linewidth]{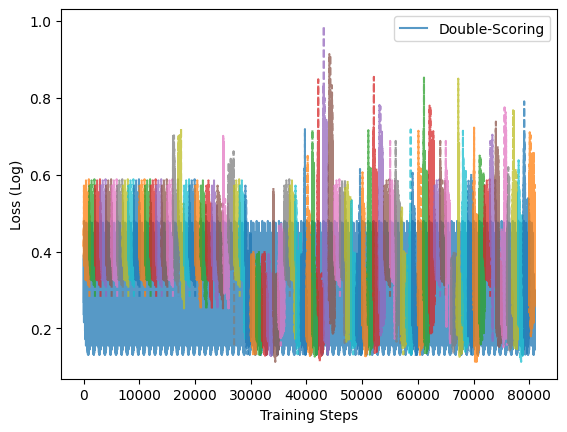}
            \caption{Comparison between the proposed fixed-density method and brute-force layerwise density search, with the loss curves presented sequentially.  The loss curve for \texttt{double-scoring} is repeated for each layerwise density tuple for ease of comparison (albeit at the cost of some visual clarity). As is apparent, the \texttt{double-scoring} procedure rapidly reaches the same level of loss as \texttt{edge-popup} with the best $k$-tuple choices over an exhaustive search.}
            \label{fig:double-score-compare-1k}
        \end{figure}
    \end{example}
    
    \begin{remark}
        It is worth emphasizing that \texttt{double-scoring} builds directly on the \texttt{edge-popup} algorithm of Ramanujan et al.~\cite{ramanujan}. The underlying optimization procedure remains essentially unchanged; the modification consists only in enlarging the score space while keeping the weight tensors fixed. 

        The primary contribution is therefore a reparameterization of the mask-search problem: the underlying \texttt{edge-popup} optimization is left essentially unchanged, while the score space is enlarged so that the effective layerwise sparsity can emerge from the top-\(k\) competition.
    \end{remark}

    \begin{remark}
        We note also that there are a number of obvious modifications to \Cref{double-scoring algo} that produce functionally identical results.  Some of these are minor (e.g., instead of storing the doubled score tensors in two tensors of identical shape to the weight tensor, then concatenating prior to the top-half score selection step to generate the mask, one could store the score tensors in a single tensor of doubled width or extend the dimension of the score tensor by one), which we will not address.  One, however, is worth noting: the choice to use as our mask the slice of the double-width mask corresponding to the initial contiguous block of the same shape as the score tensor (i.e. the first half) is entirely arbitrary--using any appropriately sized portion is enough, irrespective of the correspondence.  That is, at the level of representability, the choice of original-coordinate slice is arbitrary: any injection from the original weight coordinates into the augmented score coordinates gives the same family of representable masked tensors. In all experiments, we use the first half of the augmented coordinates for simplicity, but in principle one is generally free to choose this in whatever way is most convenient (which may be of some use if, e.g., the weight tensor is not stored contiguously in physical memory).
    \end{remark}
    
\section{Scaling heuristics}
\label{scaling heuristics appendix}
    
    Equipped now with the ability to obtain strong lottery tickets, we are able to provide empirical support for the logarithmic overparameterization scaling predicted by Pensia et al.~\cite{pensia2020optimal}. As the width of the network increases, the likelihood of successfully recovering near-optimal subnetworks improves markedly, even when the number of parameters grows only modestly. This observation is consistent with the theoretical prediction that relatively mild overparameterization suffices to ensure the existence (and practical recoverability) of high-quality lottery tickets.
    
    \begin{figure}[h]
        \centering
        \includegraphics[width=\linewidth]{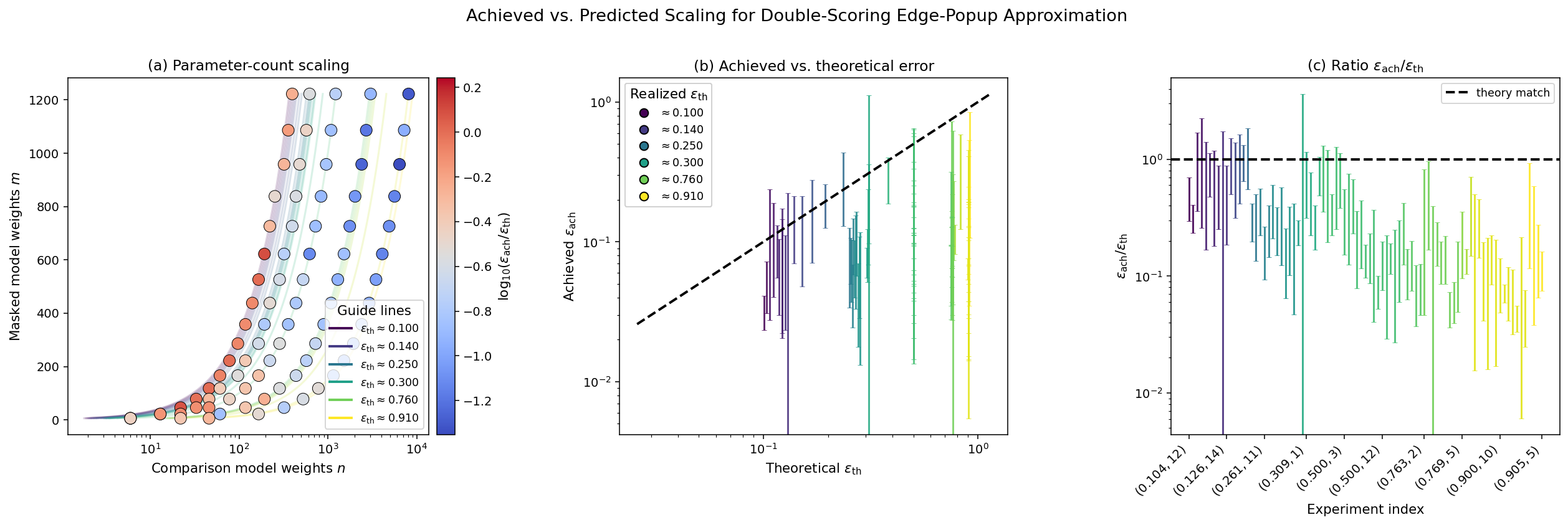}
        \caption{Approximation error trends are broadly consistent with the logarithmic-overparameterization heuristics suggested by \cite{pensia2020optimal} (and, in fact, often more favorable).  We find only very occasional deviation in the small weight regime where initialization noise dominates.}
        \label{fig:scaling}
    \end{figure}

    \Cref{fig:scaling} illustrates the relationship between the theoretical scaling law and the empirical performance of the \texttt{double-scoring} model. The left-most chart plots the parameter counts of the masked model ($m$) against those of the comparison model ($n$), overlaid with the theoretical scaling curves $m = n \log_2(1/\varepsilon)$ for several target errors $\varepsilon$. Each point corresponds to a trained model and is colored according to the logarithmic ratio $\log_{10}\!\left(\frac{\varepsilon_{\mathrm{ach}}}{\varepsilon_{\mathrm{th}}}\right)$,  where $\varepsilon_{\mathrm{th}} = 2^{-m/n}$ is the predicted error and $\varepsilon_{\mathrm{ach}}$ is the observed test error. The concentration of points near or below the theoretical curves indicates that the parameter allocation is correctly bounded by the predicted scaling.  Because the model parameter counts $m$ and $n$ are integers, the realized theoretical values $\varepsilon_{\mathrm{th}} = 2^{-m/n}$ differ slightly from the nominal target values; accordingly, the guide lines in this graph correspond to the realized theoretical $\varepsilon$ values rather than the nominal targets.
    
    The center chart provides a direct comparison between achieved and predicted errors on a log--log scale. Each point represents a single experiment, and the diagonal $y = x$ corresponds to perfect agreement between theory and practice. Across a wide range of configurations, the points lie close to or below this diagonal, demonstrating that the empirical error tracks the theoretical prediction. The deviations are relatively small and do not exhibit a systematic bias, suggesting that the scaling law captures the dominant behavior.
    
    The right-most chart summarizes this agreement by plotting the ratio $\frac{\varepsilon_{\mathrm{ach}}}{\varepsilon_{\mathrm{th}}}$ for each experiment. Values near or below $1$ indicate strong agreement with theory. We observe that this ratio remains approximately bounded by unity across all tested configurations, with only modest variance. Taken together, these results provide strong empirical evidence that the approximation error achieved by the masked model scales as $\varepsilon \approx 2^{-m/n}$, in accordance with the theoretical prediction.

\section{Brute-force optimal subnetwork search and limitations}

    To better understand the behavior of \texttt{double-scoring}, we conduct experiments on small networks where exhaustive search over all possible masks is computationally feasible. This allows us to directly compare the subnetworks found by \texttt{double-scoring} against the true optimal subnetworks.

        \begin{figure}[t]
            \centering
            \includegraphics[width=0.5\linewidth]{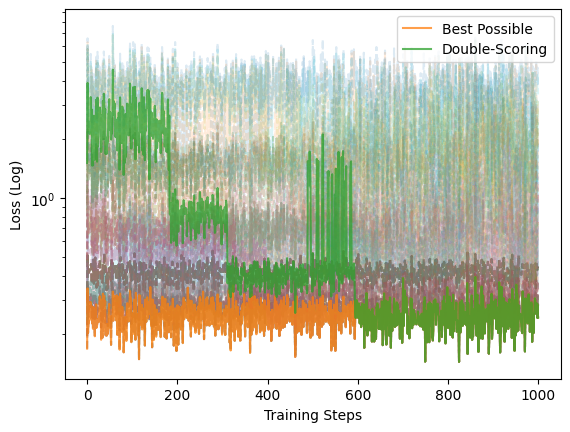}
            \caption{Loss curves for training a network with one hidden layer of width 8 trained to approximate a sine curve using an exhaustive layerwise density search (dashed lines), the \texttt{double-scoring} algorithm (solid orange line), compared to the loss values for the overall best mask found via a brute-force search of all possible masks (solid green line).  In this toy example, the subnetwork produced by \texttt{double-scoring} matches or exceeds the performance of the subnetworks produced by the classical \texttt{edge-popup} algorithm at all layerwise density tuples and nearly reaches the performance of the optimal subnetwork.}
            \label{good but not optimal fig}
        \end{figure}

        In general, \texttt{double-scoring} typically recovers subnetworks whose performance approaches that of the optimal mask in the toy case. However, it does not always recover the optimal mask: as \Cref{good but not optimal fig} illustrates, in some instances, the procedure converges to high-performing subnetworks that are, all the same, suboptimal relative to the global optimum.  This behavior is consistent with the optimization landscape of classical training (as illustrated in \Cref{same loss curves classical vs double-scoring figure}). That is, while the search is performed over score variables rather than weights, the method is subject to the same types of limitations as classical neural network training, including sensitivity to initialization and the presence of local minima. In this sense, the difficulty lies not in the expressivity of the search space (which, as shown in previous sections, is sufficiently rich) but in the optimization dynamics required to navigate it.  Taken together, these results suggest that the primary obstacle in lottery ticket extraction is not representational, but algorithmic: even when the correct subnetwork exists and is accessible within the search space, finding it remains an optimization problem with the same type of inherent difficulties as classical training.  This may be possible to ameliorate by taking into account the sparsity dynamics of the obtained mask (see \Cref{extra capacity section} for further details), though this is purely speculation at this point.

        \begin{figure}[t]
            \centering
            \includegraphics[width=0.5\linewidth]{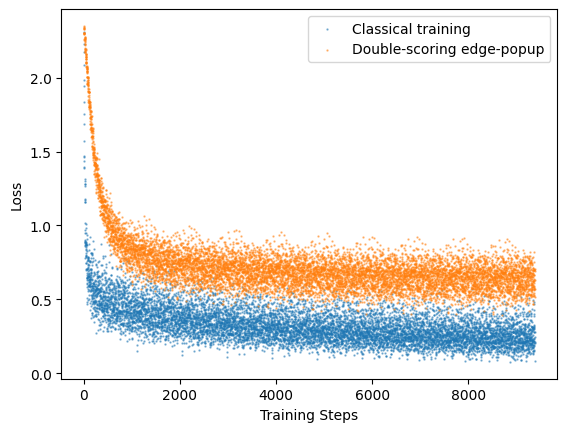}
            \caption{Two identical networks were trained for 20 epochs on the MNIST Fashion data set.  The loss curve during training of with \texttt{double-scoring} (blue) mirrors the loss curve of a network trained classically (orange), albeit with a somewhat slower initial rate of convergence.}
            \label{same loss curves classical vs double-scoring figure}
        \end{figure}
    
\section{Strong tickets as weak tickets}
\label{strong tickets as weak tickets appendix}

    A commonly held view in the lottery ticket literature is that strong and weak lottery tickets represent fundamentally different phenomena. In particular, prior work (e.g., \cite{ramanujan}) suggests that subnetworks which perform well without training need not retain this advantage after training, and vice versa.
    
    In contrast, our experiments indicate that, in the setting considered here, the distinction between strong and weak lottery tickets appears to vanish. Specifically, the subnetworks identified by the proposed method not only achieve high performance without training (as strong tickets), but also train effectively when optimized in isolation, achieving accuracy comparable to the unmasked network at essentially exactly the same rate during training.

    \begin{figure}[t]
        \centering
        \includegraphics[width=\linewidth]{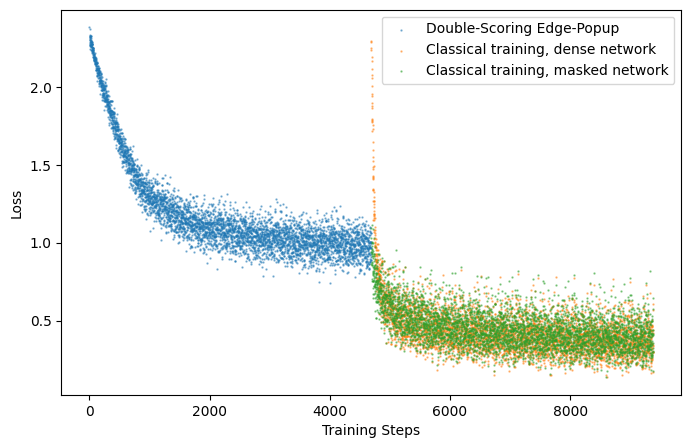}
        \caption{Observationally, it often appears to be the case that the obtained strong lottery tickets are also weak lottery tickets.  Here, the left-most blue curve is the loss curve of a network during double-scored \texttt{edge-popup} training, the right green curve is the masked network trained classically, and the orange curve is an identical unmasked network trained classically.  As is readily apparent, the training dynamics of the masked network and the unmasked network are virtually identical.}
        \label{fig:sltsareweaklts}
    \end{figure}
    
    Empirically, this behavior is consistent with that displayed in \Cref{sparse training and weak tickets subsection}. In particular, subnetworks extracted via \texttt{double-scoring} exhibit both strong initial performance and robust trainability, suggesting that the properties defining strong and weak lottery tickets may, in practice, coincide under appropriate extraction procedures.  That is, the apparent gap between strong and weak lottery tickets may not be intrinsic to the networks themselves, but rather to the methods previously used to identify them.   Furthermore, as evidenced in \Cref{strong ticket extraction subsection} movement pruning appears to produce weak tickets that perform \textit{better} after training than the original frozen network after a similar number of iterations, while \texttt{double-scoring} appears to produce subnetworks that more closely match the original network after training.  While speculative, this suggests that it is possible for strong and weak lottery tickets to coincide, yet deviate from some form of as-of-yet unidentified 'super' weak tickets that movement pruning is able to extract.  We leave an investigation of this possibility for future work, and claim here only that these results support the interpretation that \texttt{double-scoring} substantially narrows the gap between strong and weak tickets in this setting: the extracted subnetworks require little additional optimization to reach their trained sparse accuracy.
    
    These findings do not contradict existing theoretical results, which establish existence but do not characterize the typical behavior of extracted subnetworks. However, they do suggest that the commonly assumed separation between strong and weak lottery tickets may be, at least in part, an artifact of the extraction methods used. By removing the sparsity-selection bottleneck, the proposed approach appears to recover subnetworks that are simultaneously strong and trainable.
    
    One possible explanation for this phenomenon is that enlarging the score space allows the optimization procedure to more effectively explore subnetworks that are both well-aligned with the target function at initialization and stable under subsequent training. In this sense, the method may be implicitly biasing the search toward subnetworks that satisfy both criteria simultaneously.  Whether this is actually the case, let alone why this may occur, is not clear.

\section{Broader impacts}
\label{broader impacts appendix}
    This work is foundational and algorithmic: it proposes a method for extracting sparse subnetworks from randomly initialized neural networks and evaluates it on standard public benchmarks.  Potential positive impacts include improved computational efficiency, reduced training or inference costs, and increased accessibility of sparse neural network methods for researchers with limited compute.  Potential negative impacts are indirect.  As with other improvements in neural network efficiency, better sparse-model extraction could reduce the cost of deploying models in downstream applications, including applications with fairness, privacy, security, or misuse risks.  These risks depend on the downstream system and deployment context rather than on the mask-extraction method itself.

\subsection{Declaration of LLM usage}
    LLM assistance was used during manuscript preparation and experimental-code development, including drafting, editing, formatting, and generating or revising implementation code. LLMs were not used as part of the proposed algorithm, as experimental data, as model components, as baselines, or as evaluation signals. The authors reviewed, modified, debugged, and validated the final code and are responsible for the methodology, experiments, results, and claims.

\section{Speculative extensions}
\label{speculative extensions appendix}

    This appendix reports exploratory experiments that are not used to support the main claims. They are included to document possible directions for future work.  Many are, admittedly, largely toy configurations and proof-of-concept tests; their purpose is not to rigorously and empirically demonstrate an established hypothesis, but to illustrate potential avenues for further study and describe some phenomena that was observed in testing which may be of interest.

    \subsection{Extra capacity in the score space \& sparsity control}\label{extra capacity section}

        A natural question is whether the performance of the proposed method can be further improved by enlarging the score space beyond the doubling used in the \texttt{double-scoring} construction. To investigate this, we consider variants in which each weight tensor is associated with more than two score tensors, thereby increasing the dimensionality of the score space while keeping the weight tensors fixed.
        
        Across a number of toy experiments (\Cref{additional scores do noting}), we observe that this additional capacity tends to yield no similarly meaningful improvement in performance; the primary gain comes from the first doubling. In particular, increasing the number of auxiliary score tensors or their size does not lead to substantially higher test accuracy, faster convergence, or more reliable recovery of high-quality subnetworks. In general, the results tend to be effectively indistinguishable from those obtained with standard \texttt{double-scoring}.
        
        \begin{figure}[t]
            \centering
            \includegraphics[width=0.5\linewidth]{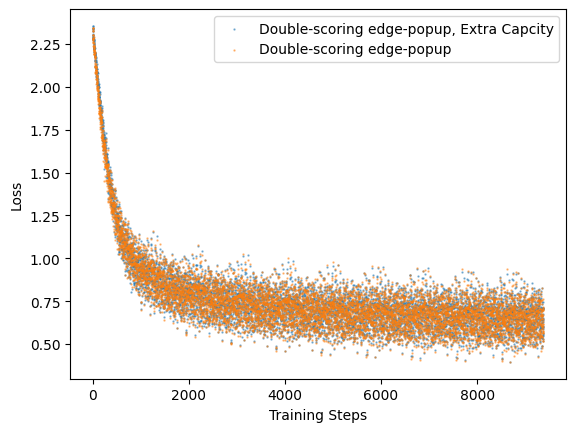}
            \caption{Two identical networks were trained for 20 epochs on the MNIST Fashion dataset with the \texttt{double-scoring}, with one network receiving an a large number of additional trainable score parameters.  As is readily apparent, this produces virtually no change in the training dynamics.}
            \label{additional scores do noting}
        \end{figure}
        
        This behavior suggests that the benefit of \texttt{double-scoring} is not simply due to an increase in the number of trainable parameters, but rather to the specific structural effect of embedding the original network into a half-sparse augmented representation. Once this embedding is achieved, further enlarging the score space appears to provide negligible, if any, returns.  From this perspective, \texttt{double-scoring} may be viewed as a minimal modification that removes the sparsity-selection bottleneck without introducing unnecessary additional capacity.  This further supports the interpretation that the key role of the augmented score space is to remove a structural constraint, rather than to provide additional expressive capacity.
        
        However, it \textit{is not the case} that increasing the score dimension beyond this point yields no further expansion to the set of practically reachable subnetworks.  Certainly, it does not improve their \textit{performance}, but instead it appears to act as a fine-grained control knob on the \textit{sparsity} of the obtained subnetwork.  This makes some sense intuitively:  From a given initialization of scores and a specified $k$-value, the size of the score tensor controls the proportion of selected scores that will be used in a mask.  As a general rule, it appears that setting the $k$-value determines the starting point of the mask's sparsity, while the scale of extra capacity in the score space determines the rate at which the obtained mask tends to deviate in sparsity from the initial $k$-value.  Interestingly, it appears to be the case that, often, the best performing mask seems to occur around a sparsity of $\frac{1}{2}$ and all other masks, given sufficient extra capacity, seem to pull towards this value irrespective of their initial $k$-value, as illustrated in \Cref{fig:tuning the sparsity}. 

        \begin{figure}[t]
            \centering
            \includegraphics[width=\linewidth]{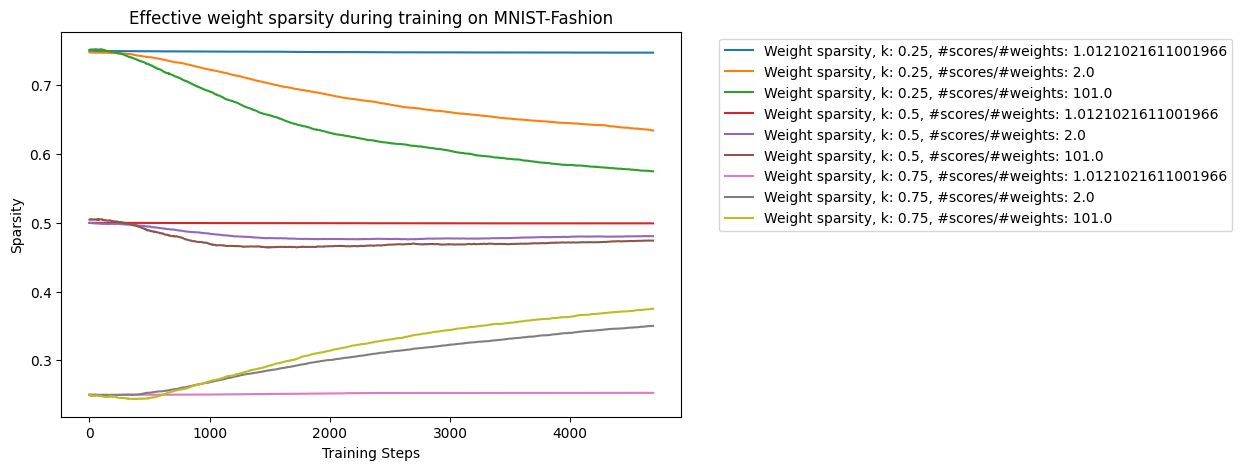}
            \caption{By changing the initial $k$-value and additional capacity in the score tensors, it is possible to fine-tune the sparsity of the mask obtained via \texttt{double-scoring}.  The initial $k$-value determines the sparsity of the mask at initialization, and the ratio of the number of parameters in the score tensors and weight tensors controls the speed of its deviation from this value.  Though it is unclear why, it appears that sufficiently large score tensors tend to eventually produce masks with a sparsity of 0.5, irrespective of the initial $k$-value. }
            \label{fig:tuning the sparsity}
        \end{figure}

    Additionally, it should be noted that it is possible to repeatedly apply \Cref{double-scoring algo}, mask the weights of the network, then repeat this process.  Empirically, this appears to allow  one to rapidly increase the sparsity of the obtained subnetwork (on a scale of $2^{-n}$ initially, where $n$ is the number of repetitions of this process) with only mild degradation to performance for small increases in $n$. This is depicted in \Cref{fig:repeated runs}.  

    \begin{figure}[t]
        \centering
        \includegraphics[width=0.5\linewidth]{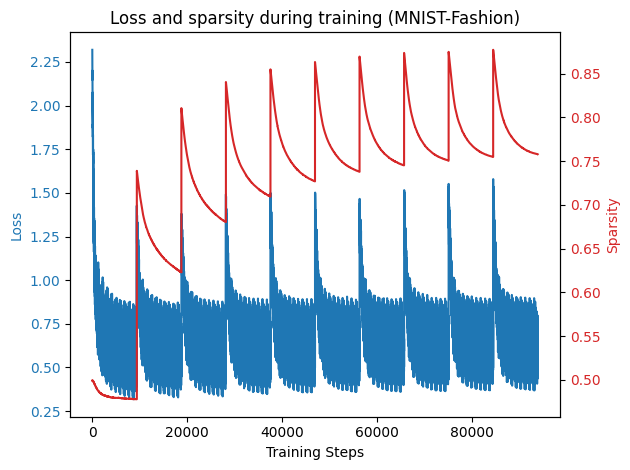}
        \caption{By repeatedly performing \cref{double-scoring algo} and applying the obtained mask to the weights of the model between each repetition, it is possible to achieve high sparsity masks with only a modest effect on the performance of the model.}
        \label{fig:repeated runs}
    \end{figure}

    Finally, we observe that the above suggests that it may be possible to strengthen the strong lottery ticket hypothesis \textit{even further yet still}.  That is, it may not just be the case that all sufficiently large networks contain, with high probability, a subnetwork approximating whatever continuous function one might choose to whatever degree of precision on a compact set they might desire, but they might contain \textit{extraordinarily many} such subnetworks \textit{at virtually all levels of sparsity}.  More compactly, we make the following conjecture:
    
    \begin{conjecture}
        Given any feed-forward network $g$, a compact subset $K$ of the input space, any $\epsilon>0$, and any $\delta \in (0,1)$, there exists $N$ such that a randomly initialized network  with equivalent input and output dimensions and widths $\min \{n_i : 1 \leq i \leq \ell-1
        \} \geq N$ contains, with probability at least $\delta$, a mask $H$ with sparsity $s$ satisfying $\|f_{W \odot H} - g\|_{K,\infty} < \epsilon$ for all realizable sparsities $s \in (0.5-\eta,0.5+\eta)$, where $\eta \rightarrow 0.5$ as $N \rightarrow \infty$.
    \end{conjecture}

    Notice that this conjecture, if true, would provide useful intuition for why \texttt{double-scoring} is effective at uncovering strong lottery tickets. A fixed sparsity constraint may be overly rigid: even when a randomly initialized network contains a strong ticket, that ticket need not occur at the particular sparsity level specified in advance. Instead, the initialization may contain strong subnetworks at \textit{nearby} sparsity levels. Standard \texttt{edge-popup} must search within a single prescribed mask density, whereas \texttt{double-scoring} gives the optimization procedure enough flexibility to move to a nearby realizable sparsity level where a strong ticket is present.


    \subsection{Simultaneous training and masking}

        A natural extension of the proposed method is to combine mask selection and weight training within a single procedure. To this end, we consider a hybrid approach in which both the score tensors (via \texttt{double-scoring}) and the network weights are updated simultaneously during training.
        
        Empirically in toy examples (\Cref{fig: simultaneous masking and training figure}), this approach does not yield meaningful improvements to \textit{performance} over the pure masking procedure, though it does yield a desirable level of sparsity (tunable via \Cref{extra capacity section}) rapidly in a single stage process. In most experiments, the mask stabilizes very early in training, after which the method effectively reduces to standard training of a fixed subnetwork, as shown in \Cref{fig: simultaneous masking and training figure}.
        \begin{figure}[htbp]
            \centering
            \includegraphics[width=\linewidth]{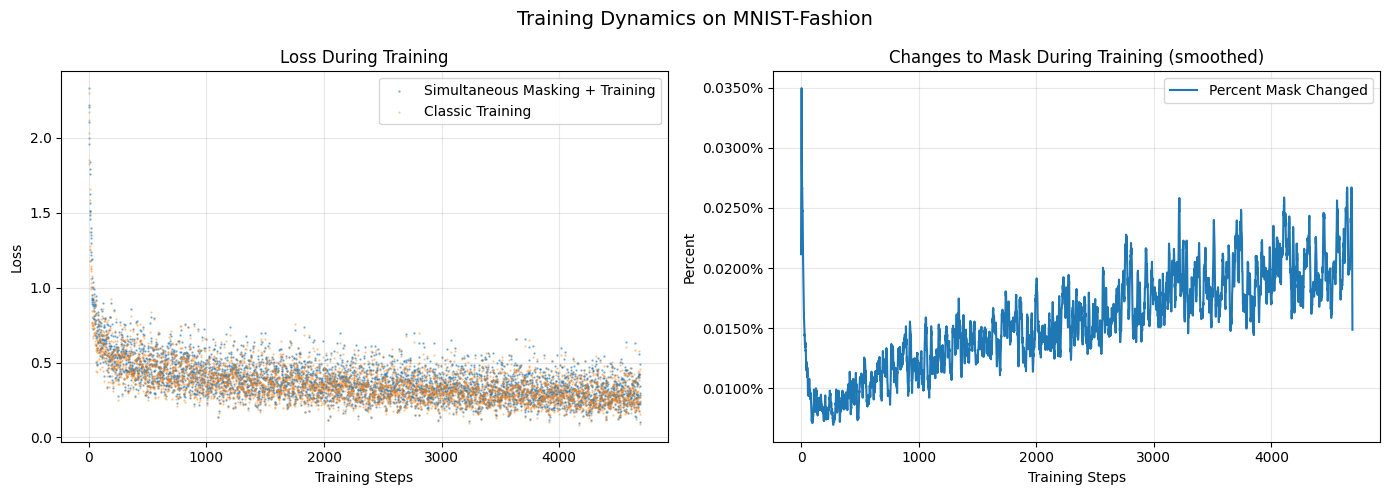}
            \caption{Training with simultaneous masking and weight modification produces no meaningful difference when compared to classical training (save for a fixed proportion parameter reduction), with the mask stabilizing almost immediately, thereby producing--essentially--a classical model of approximately half the number of trainable parameters (for a doubled score space and $k=0.5$).}
            \label{fig: simultaneous masking and training figure}
        \end{figure}
                
        This behavior suggests an inherent imbalance between the two optimization processes. The score updates, driven by the straight-through estimator and the top-$k$ selection mechanism, tend to induce rapid, discrete changes in the mask, while weight updates proceed more gradually. As a result, the mask selection phase effectively terminates before the weights have sufficiently adapted, leading to early commitment to a subnetwork that may be suboptimal.  This suggests that separating mask selection from weight training may be an essential feature of effective lottery ticket extraction, rather than a limitation of existing methods.
        
        We emphasize, however, that this negative result should be interpreted cautiously. The interaction between mask optimization and weight training is delicate, and the simple joint training scheme considered here may not adequately balance the two objectives. It is plausible that alternative strategies (scheduling, regularization, temperature-based relaxations of the masking operation, etc.) could yield improved performance. We leave a more systematic investigation of such approaches to future work.

    \subsection{Lottery tickets in other models}
            In this section, we note that the procedure in \texttt{double-scoring} does not appear to be intrinsically tied to standard feedforward neural networks, but rather exploits a more general structural feature: the representation of model components as linear combinations of basis elements. In the classical setting, a weight matrix can be viewed as a linear combination of elementary matrices, which form an orthonormal basis for the space of linear operators with an appropriate inner product. Masking, in this context, corresponds to an orthogonal projection onto a subspace spanned by a subset of these basis elements. The \texttt{double-scoring} algorithm effectively learns such projections, enabling the extraction of subnetworks that function as strong lottery tickets, and hence also as weak lottery tickets.

            This perspective suggests a natural generalization: any model composed of layers that admit a representation as linear combinations of basis functions interleaved with nonlinearities probably, in principle, admits an analogous masking procedure. To provide preliminary evidence for this claim, we construct an alternative toy model based on compositions of polynomial expansions, where coefficients are taken with respect to standard bases (e.g., monomial, Legendre, or Chebyshev).  To emphasize, this is not a well-performing class of models for most purposes--rather, we use it here purely to expose the generality of lottery ticket phenomenon in a toy model.  Applying the same \texttt{double-scoring} mechanism to mask coefficients, we observe behavior (\Cref{polynomial lottery fig}) qualitatively similar to that seen in neural networks (c.f. \Cref{fig:sltsareweaklts}): the procedure successfully identifies sparse substructures that perform comparably to dense models, and these substructures can be further trained to yield weak lottery tickets.

            \begin{figure}[t]
                \centering
                \includegraphics[width=0.49\linewidth]{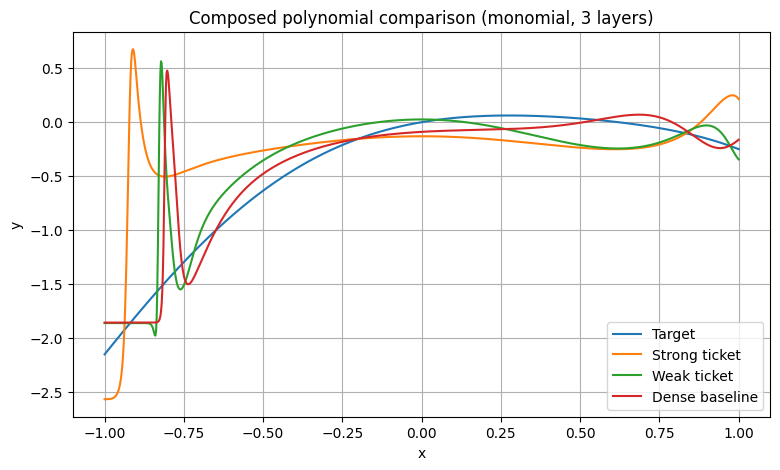}
                \includegraphics[width=0.49\linewidth]{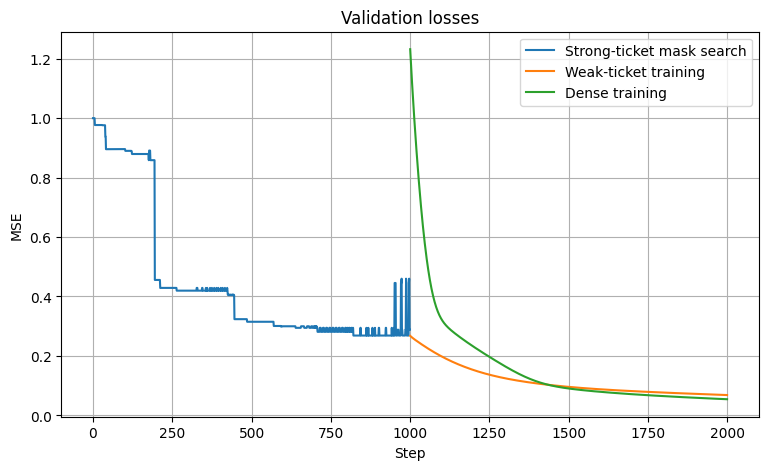}
                \caption{Function approximation using masked polynomial compositions. The \texttt{double-scoring} procedure identifies a sparse subset of basis coefficients (strong lottery ticket) whose induced model closely matches both a retrained weak ticket and a fully dense model, producing effectively the same picture as \Cref{fig:sltsareweaklts}. This demonstrates that the lottery ticket phenomenon, interpreted as projection onto a learned subspace of basis functions, persists beyond standard neural network architectures.}
                \label{polynomial lottery fig}
            \end{figure}
            
            We do not pursue a systematic study of this phenomenon here. However, these results suggest that the effectiveness of double scoring may extend well beyond fully connected architectures, and we conjecture that it applies broadly to models admitting basis decompositions, where masking can be interpreted as projection onto learned subspaces.
            
    \subsection{Training parallelization}\label{training parallelization}

        A distinguishing feature of score-based strong lottery ticket extraction algorithms, like \texttt{edge-popup} and \texttt{double-scoring} is that the underlying network weights remain fixed throughout the mask selection process. All task-specific adaptation is carried out exclusively through the score tensors, which determine the active subnetwork. This separation between a shared, frozen parameterization and task-dependent masking yields the possibility of parallelization in training.  Specifically, a single randomly initialized network may be viewed as a shared computational substrate capable of supporting many distinct tasks simultaneously. Different tasks correspond to different masks, each selecting a subnetwork adapted to a particular objective. Since the weights are never modified, multiple mask optimization procedures can be run in parallel on the same underlying model without interference. The resulting masks can be interpreted as task-specific specializations of a common architecture. Moreover, because sufficiently sparse masks are low-dimensional relative to the full parameter space, they can be efficiently stored, transmitted, and aggregated.
        
        An additional possibility is to combine information across parallel mask searches. For instance, one may aggregate score tensors (or induced masks) obtained from different tasks or random initializations and use them to bias subsequent searches toward consistently useful subnetworks (this is explored further in \Cref{implicit transfer}). In this sense, the score space provides a natural medium for sharing information across distributed optimization processes. This raises the possibility of collaborative or federated mask discovery, in which multiple agents explore the space of subnetworks and exchange information to accelerate convergence.

    \subsection{Implicit transfer learning for multi-task models}\label{implicit transfer}
    
       Biological brains exhibit strong resource efficiency compared to artificial neural networks.  For instance, brains do not globally optimize synapses for every new task but instead reuse pre-configured circuits (e.g., olfactory information as a memory cue).  Using strong lottery tickets and masking the same frozen base model for multiple tasks, we observe some vaugely similar phenomenon.  Loosely speaking, there is an \textit{implicit} analog to a `mixture of experts' model \cite{chen2022towards,gormley2019mixture,nguyen2018practical,cai2024survey,yuksel2012twenty,masoudnia2014mixture} inside every network, wherein the subnetworks are each `experts' at different tasks.  Looking at how these `experts' relate to each other reveals the possibility of an interesting \textit{implicit form of transfer learning}\cite{weiss2016survey,zhuang2020comprehensive,hosna2022transfer} through mask similarity.  In short, we observe that, with a fixed base model, the trained masks for related tasks often have a high degree of overlap in their active parameters.  As a slogan, think: ``\textit{similar tasks, similar masks}".  This mimics the biological case to some extent, reflecting the presence of something like conserved computational motifs within the network.  
       
       To show preliminary evidence of this with a toy model, we use the MedMNIST datasets \cite{yang2021medmnist,yang2023medmnist} and mask a fixed model repeatedly with \Cref{double-scoring algo}.  Looking at the overlap of the obtained strong lottery ticket masks, they appear to have a degree of similarity directly proportional to the human-intuitive degree of similarity between the tasks.  That is, breast cancer diagnosis from ultrasound data and skin lesion classification from dermatascope imaging have a large amount of overlap and involve the utilization of similar parts of the frozen base model, while less similar tasks, like breast cancer diagnosis from ultrasound data and retinal disease classification from retinal optical coherence tomography data, utilize different portions. Maintaining the biological analogy, similarities across masks allow for something like developmental plasticity, wherein organisms repurpose evolutionarily ancient circuits (e.g., basic motion detection) for novel tasks (e.g., recognizing complex gestures).  

            \begin{figure}[t]
                \centering
                \includegraphics[width=0.75\linewidth]{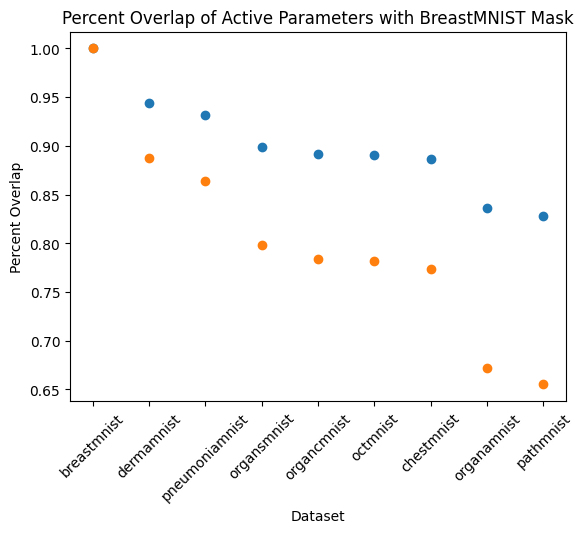}
                \caption{Similar Tasks Have Similar Masks: Well-performing masks are identified in a small feed-forward network using \Cref{double-scoring algo} with $k=0.5$ to perform various binary or multi-label classification tasks using image data in the MEDMNIST dataset \cite{yang2021medmnist,yang2023medmnist}.  The resulting masks are compared for overlap (both in all parameters and in active parameters only) using the mask from the BreastMNIST dataset as a reference, clearly showing that similar tasks (i.e. breast cancer diagnosis from ultrasound data and skin lesion classification from dermatascope imaging) involve the utilization of similar parts of the frozen base model, while less similar tasks (i.e. breast cancer diagnosis from ultrasound data and retinal disease classification from retinal optical coherence tomography data) utilize different portions, as evidenced by their excess overlap beyond the expected 50\% overlap one would expect between two independent random masks with a keep-density of 50\%}
                \label{conserved motif}
            \end{figure}

    \subsection{Dynamic subnetwork prediction and generalization}

        The framework of \texttt{double-scoring} also suggests a natural meta-learning problem. Given a dataset $\mathcal{D}$ and a fixed frozen network, the \texttt{double-scoring} procedure produces a task-dependent mask (or score tensor) adapted to $\mathcal{D}$. Repeating this process across many related tasks therefore yields a collection of input--output pairs of the form
        \[
        \mathcal{D} \longmapsto H_{\mathcal{D}},
        \]
        where $H_{\mathcal{D}}$ denotes a mask extracted from the frozen base model using the data in $\mathcal{D}$.
        
        This makes it possible to train an auxiliary predictor, or \emph{router}, that takes as input a representation of the dataset and outputs a predicted mask (or some lower dimensional representation thereof) directly. In this way, iterative mask optimization may be replaced by a learned one-shot approximation. If the new task is sufficiently similar to those seen during training, one may hope that the predicted mask is already close to a high-performing subnetwork, thereby dramatically reducing or even eliminating the need for task-specific mask search.
        
        Conceptually, this turns mask extraction into a supervised learning problem at the meta-level: the original optimization procedure is used to generate labels, and a second model is trained to imitate its outputs. The resulting mechanism may be viewed as an amortized version of strong lottery ticket extraction, in which experience on previous tasks is distilled into a fast predictor of subnetworks.  This perspective is especially appealing in settings where many related tasks must be solved on a shared frozen architecture. Rather than rerunning \texttt{double-scoring} independently for each new dataset, one may instead learn a mapping from task statistics, support examples, or low-dimensional summaries of the data to the corresponding mask. In principle, this would provide a form of one-shot or few-shot subnetwork selection driven directly by the dataset itself.
        
        In line with the observations of \Cref{implicit transfer}, for a fixed randomly initialized network, the masks obtained via the \texttt{double-scoring} procedure exhibit substantial structural overlap across related tasks. This suggests that the space of effective masks is highly redundant: many masks yield comparable input-output behavior, and the set of “good” masks occupies a low-dimensional manifold within the ambient high-dimensional binary space. To exploit this structure, we show in \Cref{dynamic masking} that this 'mask learning' is possible via an experiment with two-stages. First, we generate a dataset of masks by applying the \texttt{double-scoring} algorithm to a collection of tasks drawn from a common distribution. We then train a mask autoencoder to compress these binary masks into a low-dimensional latent representation and reconstruct them with high fidelity. Empirically, we observe that relatively small latent dimensions suffice to capture the salient structure of the mask distribution, indicating that the effective degrees of freedom are far smaller than the total number of parameters.  In the second stage, we train a \emph{router} network that maps a representation of a task (in our experiments, simple summary statistics of the dataset) directly to the latent code of a corresponding mask. The decoder from the autoencoder is then used to lift this latent code back into a full mask, which is applied to the frozen network. This enables a one-shot prediction of a high-quality subnetwork without performing iterative score optimization. A key feature of this approach is that it avoids directly predicting masks in the original parameter space, which is prohibitively high-dimensional. Instead, the router operates in the learned latent space, where the geometry of the mask distribution is significantly simpler. This not only reduces the complexity of the prediction problem, but also implicitly captures equivalence classes of masks that induce similar functions.

        \begin{figure}[h]
            \centering
            \includegraphics[width=0.9\linewidth]{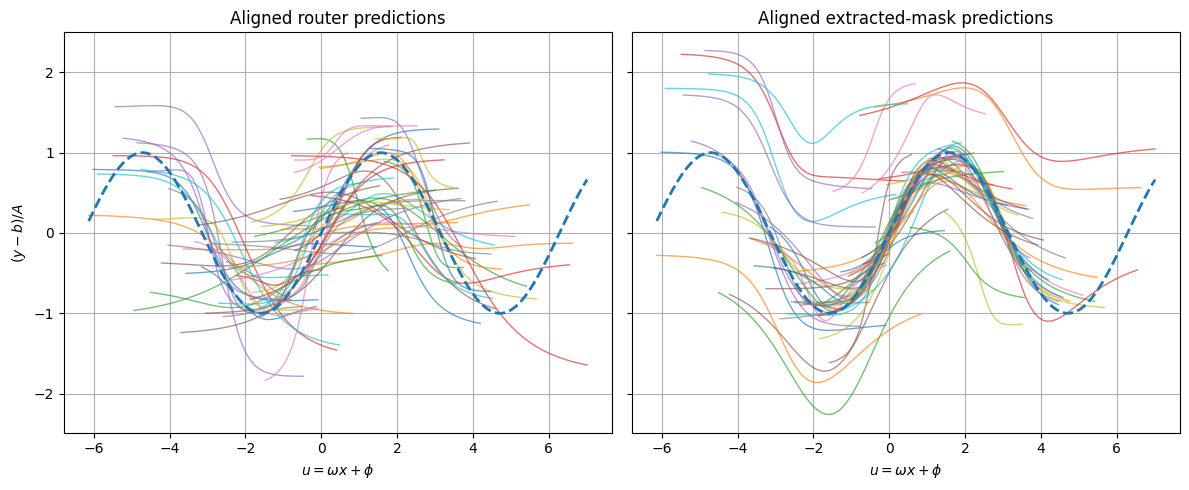}
            \includegraphics[width=0.45\linewidth]{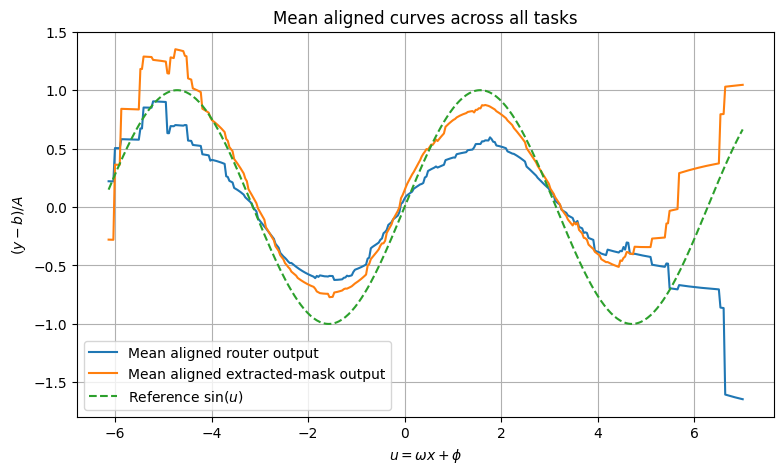}\includegraphics[width=0.45\linewidth]{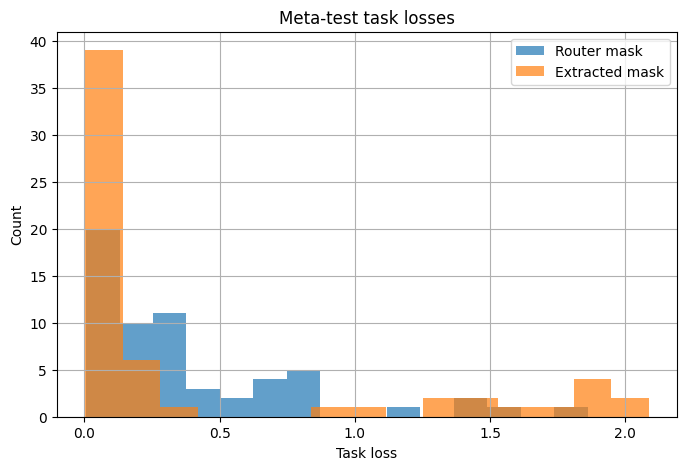}
            \caption{A data set of doubled mask popup scores was generated for a very small test network using \texttt{double-scoring} for a variety of scaled sine curve approximations.  Using a router and the decode head of an autoencoder trained on the data set of scores, a model whose popup scores were predicted using this router was tested on a variety of new scaled sine curve approximations, with the results compared to the performance of masks obtained via \texttt{double-scoring}.  Top:  Aligning the results to a common sine curve, we show the routed model's approximations on the left and the \texttt{double-scoring} approximations on the right.  The approximations are only shown over their native domain after alignment, which occasionally produces some clipping. Bottom Left:  The average over all commonly-aligned approximations for both the routed model and the extracted \texttt{double-scoring} masking model, with interpolation for out-of-domain values in the mean calculation.  Bottom right: A histogram of the routed model's test loss and the \texttt{double-scoring} masking models' test loss.  Throughout, it can be seen that score routers allow for generalization to unseen tasks, albeit at the cost of some degradation to performance.}
            \label{dynamic masking}
        \end{figure}

        In this toy example, we find that the router is able to produce masks whose performance is generally competitive with those obtained via full \texttt{edge-popup} optimization, despite requiring no iterative training at test time. This suggests, as mentioned above, that mask extraction admits a form of amortization: once a sufficient collection of tasks has been processed, the cost of finding a good mask for a new task can be reduced to a single forward pass through the router.  This perspective opens several directions. First, it provides a natural mechanism for transfer learning: knowledge of previously solved tasks is distilled into the latent representation of masks and reused for new tasks. Second, it suggests the possibility of large-scale parallelization, where many agents independently generate masks for related tasks and contribute to a shared latent model. Finally, it raises the question of whether the latent space admits further structure.

    \subsection{Iterated masking and implicit depth}\label{iterated masking}

        We explore an extension of the \texttt{double-scoring} framework motivated by the observation that, for layers with square weight matrices, one can simulate increased depth by repeatedly applying a masked linear operator interleaved with nonlinearities. Concretely, given $W \in \mathbb{R}^{d\times d}$ and masks $H^{(1)},\dots,H^{(T)}$, this yields the composition
        \[
        x \mapsto \sigma\big((W \odot H^{(T)}) \, \sigma\big(\cdots \sigma\big((W \odot H^{(1)}) x\big)\cdots\big)\big),
        \]
        allowing one to realize a depth-$T$ nonlinear network using a single underlying weight tensor.
        
        A naive implementation assigns an independent mask to each virtual layer. While this increases expressivity in principle, it inherits the standard optimization challenges of deep compositions—most notably, the degradation of gradients across repeated nonlinear transformations. In preliminary testing, we find that simply increasing this \emph{implicit depth} yields only modest gains. This suggests that the primary difficulty lies not in depth itself, but in \emph{mask sequence selection}: determining which subnetworks should be applied, and in what order.
        
        This observation motivates a shift from fixed to \emph{adaptive} compositions. Rather than assigning masks to predetermined depths, we introduce a finite \emph{vocabulary} of masked operators
        \[
        W^{(m)} = W \odot H^{(m)}, \quad m = 1,\dots,M,
        \]
        and view each masked transformation as a reusable computational primitive. The learning problem then becomes one of dynamically selecting and composing these primitives (a perspective generally in line with the spirit of the Kolmogorov-Arnold representation theorem, in which functions are represented by sums of compositions of a finite vocabulary of primitives). While one could consider finer-grained vocabularies (e.g., blockwise masks or low-dimensional mask parameterizations), we focus here on full-mask primitives and leave such extensions to future work.
        
        This viewpoint admits a natural analogy to spiking neural networks, where neuronal activity is inherently event-driven: whether a neuron fires depends on its prior activation history and the temporal structure of incoming signals. From this perspective, spiking dynamics can be interpreted as inducing a time-varying mask over the network, with the active subnetwork evolving as a function of past activity. In this sense, spiking systems implicitly perform history-dependent mask selection, suggesting that autoregressive mask composition may provide a useful abstraction for adaptive, input-dependent computation.
        
        \subsubsection{Autoregressive mask selection via a transformer}
        
        We formalize mask selection as an autoregressive sequence modeling problem over this vocabulary. Fix weight and bias mask pairs $(H^{(1)},h^{(1)}),\dots,(H^{(M)},h^{(m)})$ together with a distinguished termination token \texttt{<END>}. A computation is represented by a sequence
        \[
        m_1, m_2, \dots, m_T, \texttt{<END>}, \quad m_t \in \{1,\dots,M\}.
        \]
        At each step, a transformer predicts the next token conditioned on the previously selected masks (and optionally the input). If \texttt{<END>} is emitted, computation terminates; otherwise, selecting $m_i$ updates the hidden state via
        \[
        h \mapsto \sigma\big((W \odot H^{(m_i)})h + b \odot h^{(m_i)}\big).
        \]
        
        This formulation explicitly separates \emph{what} computations are available (the mask vocabulary) from \emph{how} they are composed (the autoregressive policy). Unlike naive iterated masking, the next operation depends on the entire history of prior selections, enabling richer and more flexible control over computation. The resulting architecture can be interpreted as a sequence model over internal computational steps, where each token corresponds to the application of a masked subnetwork.
        
        A key consequence of this formulation is a natural mechanism for \emph{adaptive depth}: computation proceeds until the model emits \texttt{<END>}, allowing termination to depend on the input. More broadly, the model can be viewed as generating a sequence of internal ``thoughts,'' where each step refines the representation via a learned computational primitive. In essence, we create a vocabulary of computational primitives and learn a semantic and syntactic structure of this internal 'language of thought' adapted to the problem at hand.

        We evaluate the proposed transformer-based mask-program model against a classical multilayer perceptron on the \emph{scikit-learn digits} classification task (\Cref{fig:mask transformer fig 1}). To ensure a fair comparison, we match model capacity by setting the depth of the classical network equal to the average realized program length of the learned model.  The program model applies a sequence of masked linear operators drawn from the learned vocabulary, together with a differentiable halting mechanism that determines when to terminate computation. Training dynamics reveal that both models achieve comparable predictive performance; however, the program model exhibits substantially richer internal behavior. In particular, the learned halting mechanism induces a nontrivial distribution over computation lengths, as evidenced by the divergence between the hard (realized) and soft (expected) program lengths and the evolution of the halting probability during training.  Additionally, our model does not seem to suffer from overfitting in the same way as a classical model when overtrained, as evidenced by a comparison of the validation loss.  We believe this may be due to the fact that the model is not only learning the classification task, but also a much more difficult high-dimensional task of navigating in the 'thought space' as it generates programs.

        \begin{figure}[t]
            \centering
            \includegraphics[width=0.49\linewidth]{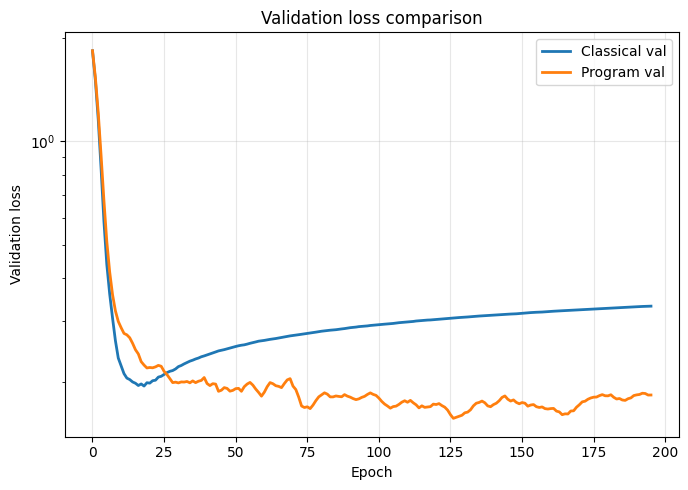}
            \includegraphics[width=0.49\linewidth]{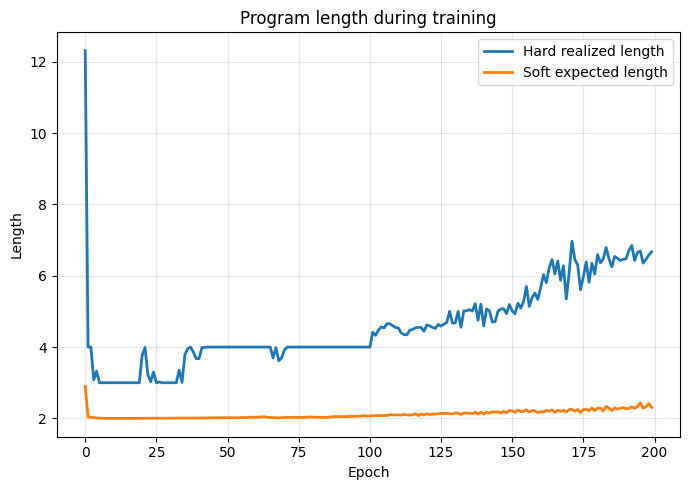}
            \caption{We train a transformer-based policy that generates sequences of mask tokens (“programs”) applied to a fixed randomly initialized network via \texttt{double-scoring}. For each input from the SKLearn digits dataset, the model autoregressively produces a variable-length program which transforms a shared hidden representation before classification.  Unlike a standard model of the same capacity, this approach does not seem to suffer from overfitting (left).  Over time, the length of programs increases (right).}
            \label{fig:mask transformer fig 1}
        \end{figure}
        
        Further diagnostics (\Cref{fig:mask transformer fig 2}) show that the model utilizes a diverse set of computational primitives: the token usage distribution and associated entropy indicate that multiple masks are actively employed, rather than collapsing to a single dominant operation. Finally, the distribution of programs across true class labels highlights that the model allocates computational strategies adaptively across inputs, providing empirical support for interpreting the architecture as a learned, input-dependent computation process rather than a fixed-depth network.
        
        \begin{figure}[t]
            \centering
            \includegraphics[width=0.49\linewidth]{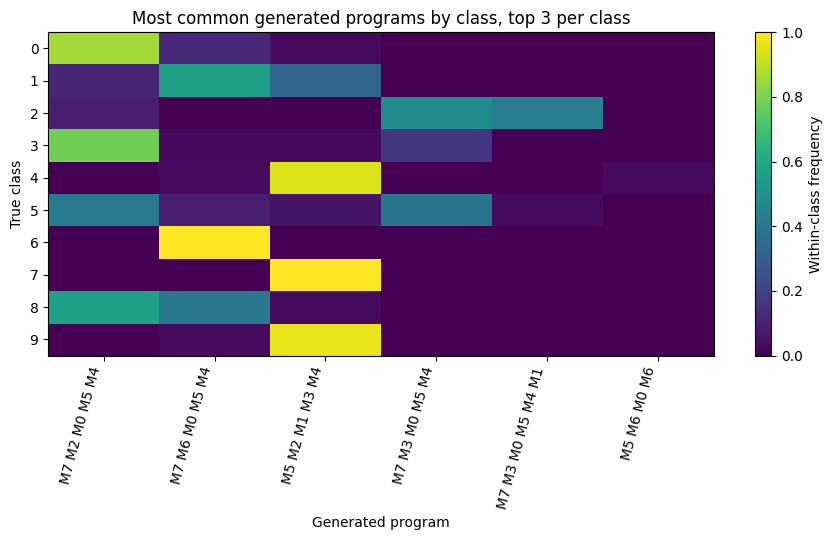}
            \includegraphics[width=0.49\linewidth]{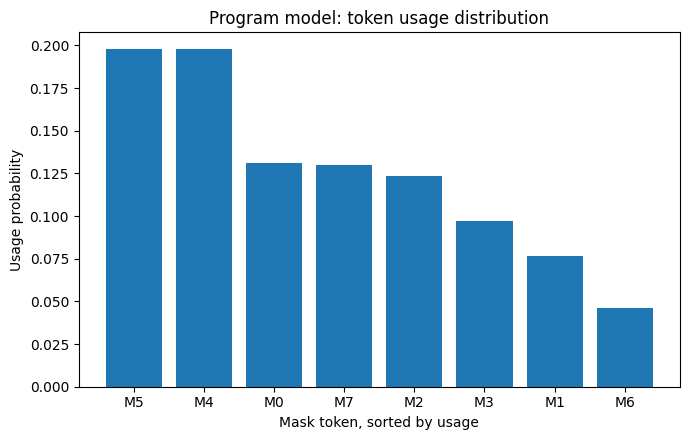}
            \caption{For each true class, the most frequently generated programs (top-$k$ per class) are shown with color indicating the within-class frequency of each program. Distinct classes exhibit concentrated mass on different subsets of programs, indicating that the model learns class-dependent computational routines rather than relying on a single shared transformation. At the same time, partial overlap across classes reveals reuse of subroutines, suggesting that the learned mask vocabulary supports compositional structure across tasks.}
            \label{fig:mask transformer fig 2}
        \end{figure}
    
    \subsection{Learning initialization distributions and neonatal locomotion in ungulates}\label{initialization learning}

        Organisms exhibit innate, complex behaviors (e.g., neonatal locomotion in ungulates) despite genomic limitations. The horse (Equus caballus) genome ($\approx2.7$ Gb diploid) encodes far less information than required to explicitly specify trillions of neural connections. Yet, foals achieve coordinated sensorimotor control within minutes of birth, integrating proprioceptive, cardiovascular, and environmental inputs. This implies evolutionary optimization of developmental rules and not static parameters to generate brain architectures predisposed to critical functions. 
        
        Such meta-learning mechanisms, whereby genetic information biases connection probability distributions, implies an  \textit{evolutionary-developmental paradigm}, wherein biological learning integrates evolutionary optimization of developmental rules with task-specific synaptic refinement. Unlike the training of ANNs, which globally optimize parameters, biological systems employ sparse, context-dependent activation patterns (e.g., localized neural ensembles vs. the epileptiform “seizure-like” global activity of ANNs at inference). Evolutionary processes may select for optimized initialization rules; essentially, for genetic programs that bias neural development toward configurations pre-disposed to critical functions (e.g., breathing, locomotion), thereby producing redundant, task-competent neural components enabling both efficiency and robustness. Translating this principle to machine learning, a framework centered on strong lottery tickets seems automatic: One should learn initialization distributions via evolutionary mechanisms that produce many easy-to-find copies of subnetworks that, at initialization, already encode the task-specific capabilities one desires in the model.

        Having now the ability to extract strong lottery tickets reliably, this framework becomes possible to explore. Specifically, we propose to replace the standard paradigm of fixed, hand-designed initialization schemes with a learned family of initialization distributions, optimized to generate many easily-discoverable subnetworks accomplishing a family of desired tasks.  To this end, we consider a simple parameterization of layerwise initialization distributions. Rather than directly initializing network weights from a fixed distribution (e.g., Gaussian or Kaiming), we define a family of spatially-varying Gaussian fields over each weight matrix. Concretely, for each layer, the mean and log-variance of the weight distribution are parameterized as low-dimensional functions of normalized neuron indices $(i,j)$, yielding position-dependent initialization statistics. These parameters collectively define an \emph{initialization genome} for the network.

        We then place this genome under evolutionary optimization. Each individual in the population corresponds to a distinct initialization distribution. Given such an individual, a network is instantiated by sampling weights from the induced distribution, after which only a small number of iterations of \texttt{double-scoring} are performed on the task. The fitness of the individual is defined as test loss after this short training trajectory. In this way, evolution does not optimize weights directly, but instead selects for initialization distributions that reliably produce subnetworks \texttt{double-scoring} is capable of rapidly discovering for the task.  
        
        Our numerical experiment is designed to probe this phenomenon in a controlled setting. We consider a simple handwritten digit identification task, using the SK-Learn digits dataset and a shallow fully-connected network.  The evolutionary search is conducted over a population of initialization genomes, each encoding layerwise distribution parameters as described above. At each generation, individuals are evaluated by instantiating networks from their corresponding distributions, \Cref{double-scoring algo} is performed for a fixed and small number of steps, and the resulting losses are recorded. Selection, crossover, and mutation are then applied to produce the next generation. Importantly, mutation operates directly on the parameters of the initialization distribution, thereby exploring the space of developmental rules rather than trained weights.  This provides preliminary empirical support for the evolutionary-developmental paradigm.

        \begin{figure}[t]
            \centering
            \includegraphics[width=\linewidth]{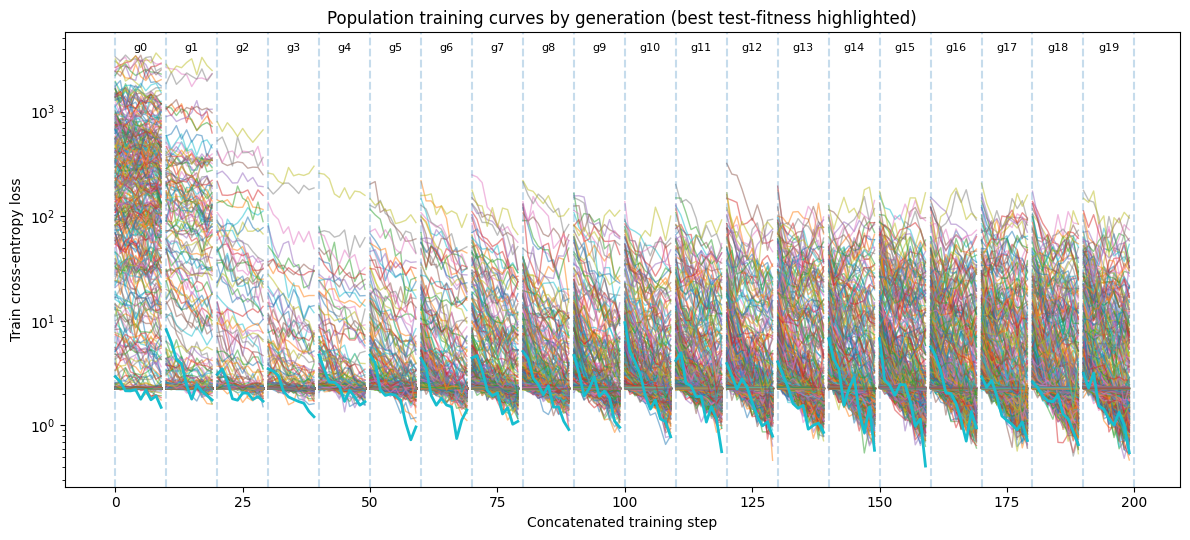}
            \includegraphics[width=0.49\linewidth]{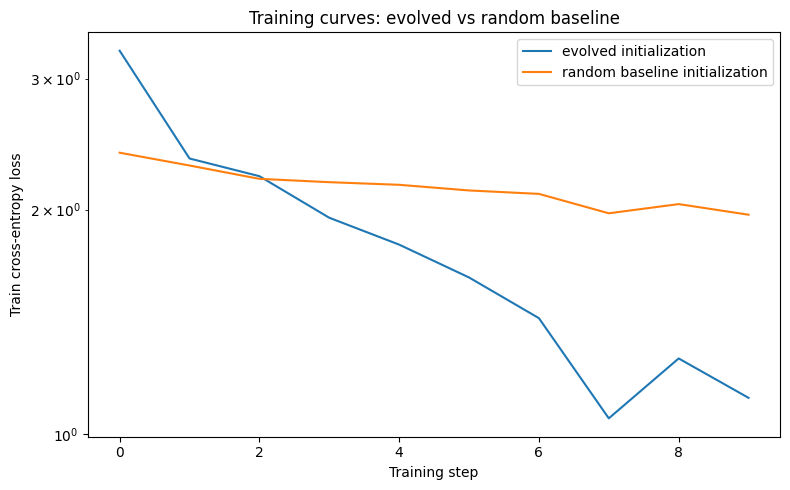}
            \includegraphics[width=0.49\linewidth]{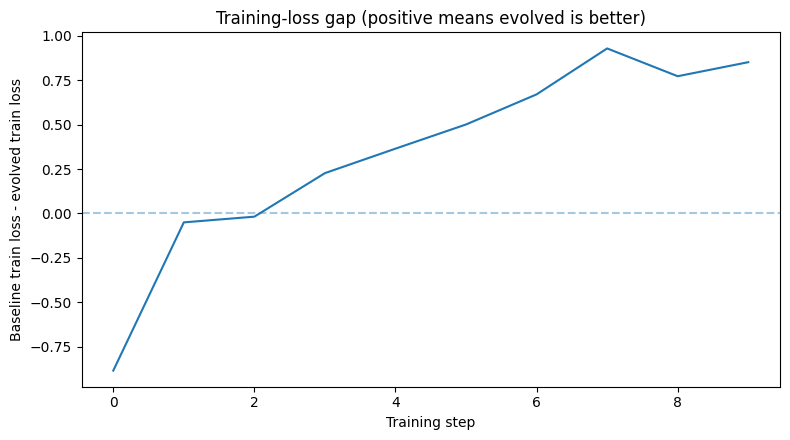}
            \includegraphics[width=\linewidth]{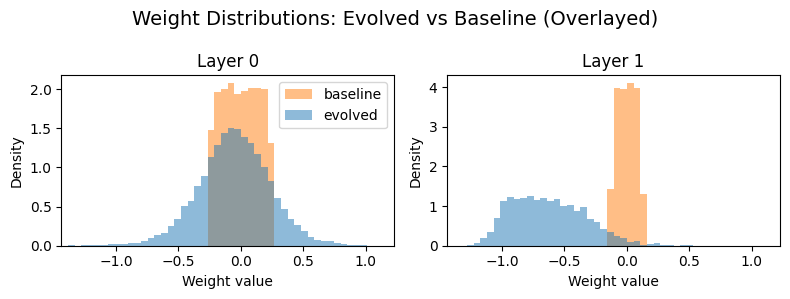}
            \caption{Top: The best-achieved loss over generations and the distribution of training trajectories induced by each initialization genome.  Center:  Training dynamics with \texttt{double-scoring} for an individual initialized from the best genome in the final generation relative to a random baseline.  Bottom:  The qualitative properties of the resulting weight distributions.}
            \label{evolution fig 1}
        \end{figure}
        
        This procedure can be interpreted as a form of zero-shot architectural biasing: rather than learning weights that solve the task, we learn distributions that make solutions \emph{easy to find} via minimal application of \texttt{double-scoring} (\Cref{double-scoring algo}). This procedure effectively shifts learning from weight space to distribution space: we do not learn \textit{solutions}, but rather \textit{laws that make solutions abundant}.

    \subsection{Zero data training with `dreams'}
        
        A central advantage of the \texttt{double-scoring} framework is that it enables a form of data-independent pretraining. In contrast to standard learning paradigms, where performance is tightly coupled to the availability of labeled data, the present approach allows the model to improve its performance without ever observing samples from the target task distribution.

        The key mechanism underlying this phenomenon is what we refer to as dreaming. Rather than training on external data, the model instead samples from its own hypothesis class. Concretely, given a randomly initialized network with frozen weights, we generate behaviors by selecting random subnetworks (via masking) and evaluating them on randomly chosen probe inputs. These input–output pairs form internally generated “dreams,” which are guaranteed to lie within the representational capacity of the model. Crucially, we do not store these subnetworks directly. Instead, we use the \texttt{double-scoring} procedure to relearn masks that approximate the generated behaviors (as the randomly induced masks may not be easily reachable from a random score initialization), thereby constructing a library of subnetworks indexed by their functional behavior on probe sets.
        
        This process yields a collection of input–output examples paired with masks, which can be used to train a routing mechanism. The router learns to map small sets of probe examples to subnetworks capable of reproducing similar behavior. Importantly, this entire pipeline is constructed without reference to any external task distribution. The model is, in effect, learning to organize and index its own functional capabilities.
        
        To evaluate this framework, we consider (\Cref{fig:dreaming}) a toy family of downstream tasks (e.g., parameterized sine functions) that are never observed during the dream-generation or router-training phases. At test time, the model is provided with a small probe set from the target task, and must select an appropriate subnetwork via retrieval or routing. We compare this performance against a baseline consisting of randomly initialized classical neural networks of comparable size, evaluated without task-specific training.

        \begin{figure}[t]
            \centering
            \includegraphics[width=\linewidth]{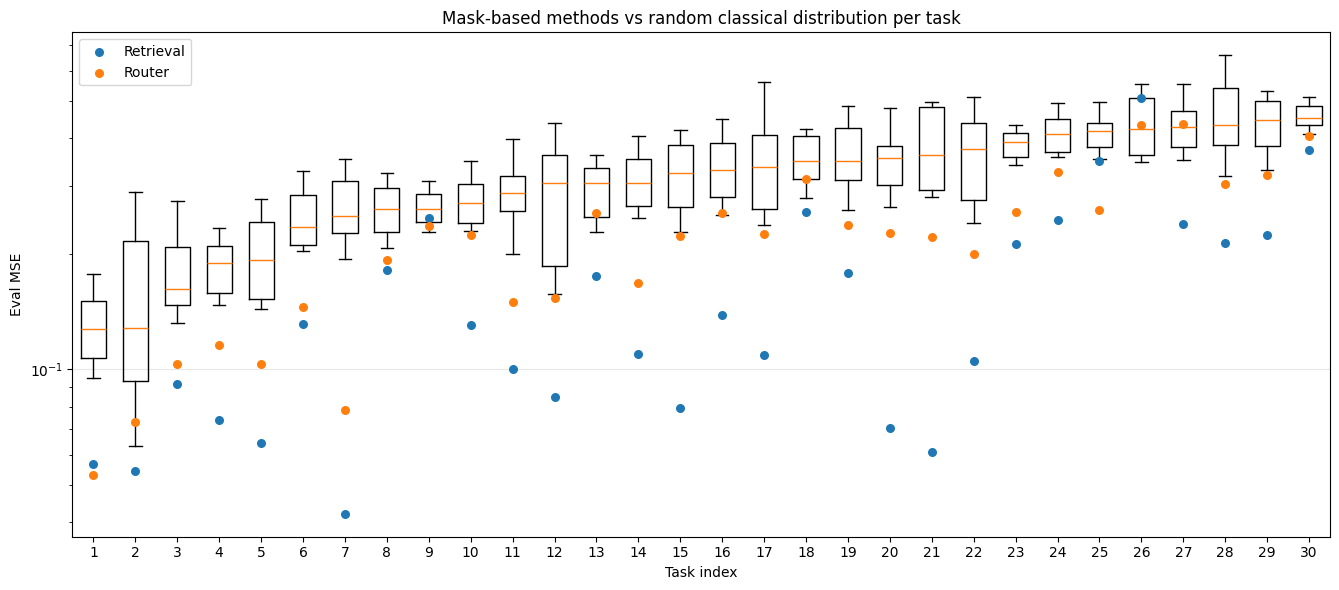}
            \caption{Performance of dream-based subnet selection compared to a random classical baseline. For each task, the boxplot shows the distribution of evaluation errors across randomly initialized dense networks of matching scale, while points denote the performance of subnetworks selected via retrieval and routing from the dream library. In a substantial fraction of tasks, the dream-based methods lie below the bulk of the random baseline distribution, often outperforming even strong random draws. This demonstrates that the model, despite receiving no task-specific training data, is able to systematically select subnetworks with above-random performance. The figure highlights the central phenomenon: dream-based pretraining organizes the model’s internal function space in a way that enables meaningful generalization without exposure to the target task.}
            \label{fig:dreaming}
        \end{figure}
        
        The results consistently show that the dream-based approach outperforms the random baseline across tasks. That is, despite never having been trained on data from the target task (or even from any related distribution) the model achieves lower evaluation error than what one would expect from a randomly sampled network of the same capacity. This demonstrates a clear separation between data and performance that is, to us, genuinely surprising: The model’s ability to solve new tasks is not derived from exposure to those tasks, but rather \textit{from an internal organization of its own representational structure}.  In essence, the model need only know about itself on some level.  We suspect that this would work particularly well in a task-dependent way when combined with the genetic initialization distribution selection process of \Cref{initialization learning}, allowing for the emergence of untrained and effectively `instinctual' capabilities in a model (analogously to how a beaver in captivity will begin to build dam-like structures entirely unprompted and with no prior examples of such behavior), but leave investigation of this for future work.
        
        From a conceptual standpoint, this suggests that the primary role of data in traditional training may be possible to circumvent to a small degree. The \texttt{double-scoring} framework offers this alternative: by learning from a synthetic dataset generated via self-sampling, one can partially bypass some of the need for task-specific data.  The manner in which the gains to performance from such a procedure scale is unclear to the authors.

\subsection{Speculation and outlook}

    The results presented in \Cref{speculative extensions appendix} suggest a perspective that, while highly speculative, may help frame a number of persistent discrepancies between artificial and biological learning systems.
    
    Modern machine learning methods are, at their core, global optimization procedures: parameters are adjusted directly, typically via gradient-based methods, and the computational and energetic cost of training scales accordingly with model size. In contrast, biological systems appear to operate under dramatically different constraints. The human brain, for instance, contains on the order of $10^{14}$ synaptic connections, yet develops and learns under an energy budget that is negligible compared to what would be required to train an artificial system of comparable scale using conventional techniques.  That is, to train a large language model in any familiar architecture with a biologically-comparable number parameters in its feed-forward layers would require an energy budget that--crudely measured--would cost something in the neighborhood of the global GDP while, from birth to high school graduation, the total energy consumption of an average human brain is around what is used by a standard residential air conditioner in a single summer.  
    
    Moreover, biological systems exhibit a degree of robustness that is difficult to reconcile with standard artificial models: localized damage, noise, or loss of connections often results in minimal functional degradation, whereas analogous perturbations in trained neural networks typically lead to significant performance loss.  In fact, many large language models contain single parameters (so-called 'superweights') that, if deleted, render the model unable to output coherent language entirely \cite{yu2024super}.  Biologically, this would be the equivalent of bumping your head or sneezing somewhat forcefully and permanently losing the ability to communicate. 
    
    A further discrepancy arises in the apparent efficiency of biological initialization. Many organisms exhibit complex behaviors with little to no postnatal training. The canonical example is that of a newborn ungulate, which is capable of coordinated locomotion within minutes of birth. It is difficult to attribute such capabilities to a direct encoding of a fully specified control policy in the genome, given the severe informational constraints. Instead, it suggests that what is inherited is not a single model, but rather a structured distribution over possible models, heavily biased toward those that are functionally effective.
    
    Taken together, these observations motivate the following hypothesis: biological learning after birth may rely less on direct parameter optimization and more on search within a highly structured space of pre-configured subnetworks (which, we note, combines both of the scalable approaches detailed in Sutton's \textit{Bitter Lesson} \cite{sutton2019bitter}: Learning is conducted on a population's initialization distribution and search is utilized within an individual after birth). In this view, traditional learning via global parameter optimization is more akin to evolution, which acts as the primary optimizer to shape distributions over connectivity and local structure so that, at birth or after the final development of the brain post-birth, the system already contains a vast number of viable "functional fragments." Learning, then, consists primarily of searching through this space and discovering how to select and activate appropriate subsets of this structure in response to input, rather than constructing functionality from scratch.
    
    From this perspective, several qualitative features of biological systems become more natural. Robustness arises from redundancy: many distinct subnetworks can perform similar functions, so damage to any particular subset has limited effect. Efficiency arises because the search is constrained to a highly favorable region of the space. Rapid acquisition of behavior is possible because the relevant structures are already present, requiring only selection rather than synthesis.

    By contrast, much of contemporary machine learning can be interpreted as optimizing what is, in effect, a “seizure mode” of computation.  That is, biological brains almost never engage anything near their full parameter set simultaneously (barring, as noted, seizures) and prefer highly localized refinements after maturity. While the existing paradigm has proven extraordinarily powerful, it may also represent a particularly inefficient point in the design space.  That is, speaking broadly, what we have done so far in the study of artificial intelligence is focus essentially on the evolutionary side of things by selecting the distribution of parameters for our models with training.  Training with a particular data set is effectively just a roundabout way of accessing a probability distribution on the model's parameters; if one initialized a model according to this distribution directly without training, it would be fundamentally the same.  We propose that this is effectively what evolution does, i.e., evolutionary processes slowly optimize brain development-related parameters that encode a distributions of likely models.  What we have essentially done \textit{in silico} is show that, under the right conditions, an analog of this evolutionary force is enough to make 'brains' whose seizures are capable of performing a wide range of tasks. All the same, optimizing seizures is also probably one of the \textit{hardest possible ways we could have chosen to do things}.  Nature doesn't optimize seizures to perform well on many different tasks; nature is much more fine grained than that in its optimizations.  Biological brains use different chunks for different things.  Nature, essentially, ignores the seizure mode of operation entirely.
    
    The framework explored in \Cref{speculative extensions appendix} can be viewed as a suggestion for an alternative: rather than learning a single set of parameters, we instead study systems in which functionality emerges from selecting among many latent configurations packaged densely together. 


\end{document}